\documentclass{article}

\usepackage{microtype}
\usepackage{graphicx}
\usepackage{subcaption}
\usepackage{booktabs} % for professional tables
\usepackage[T1]{fontenc}
\usepackage{wrapfig}

\usepackage{hyperref}

\usepackage[accepted]{icml2026}

\usepackage{amsmath}
\usepackage{amssymb}
\usepackage{mathtools}
\usepackage{amsthm}

\usepackage[capitalize,noabbrev]{cleveref}

\definecolor{MyBlue}{HTML}{479FF8}
\definecolor{MyRed}{HTML}{ED6E57}
\definecolor{MyOrange}{HTML}{F27200}
\definecolor{MyGreen}{HTML}{017100}
\definecolor{MyPurple}{HTML}{A457A1}

\theoremstyle{plain}
\newtheorem{theorem}{Theorem}[section]

\newtheorem{lemma}[theorem]{Lemma}

\newtheorem{assumption}[theorem]{Assumption}
\theoremstyle{definition}
\newtheorem{definition}[theorem]{Definition}
\theoremstyle{remark}
\newtheorem{remark}[theorem]{Remark}

\newcommand{\irr}[1]{\mathcal{I}(#1)}
\newcommand{\rel}{\textnormal{Re}}
\newcommand{\trac}{\textnormal{Tr}}

\usepackage[textsize=tiny]{todonotes}

\icmltitlerunning{Sequential Group Composition}

\begin{document}

\twocolumn[
\icmltitle{Sequential Group Composition: A Window into the Mechanics of Deep Learning}

% It is OKAY to include author information, even for blind
% submissions: the style file will automatically remove it for you
% unless you've provided the [accepted] option to the icml2025
% package.

% List of affiliations: The first argument should be a (short)
% identifier you will use later to specify author affiliations
% Academic affiliations should list Department, University, City, Region, Country
% Industry affiliations should list Company, City, Region, Country

% You can specify symbols, otherwise they are numbered in order.
% Ideally, you should not use this facility. Affiliations will be numbered
% in order of appearance and this is the preferred way.
\icmlsetsymbol{equal}{*}

\begin{icmlauthorlist}
\icmlauthor{Giovanni Luca Marchetti}{equal,kth}
\icmlauthor{Daniel Kunin}{equal,ucb}
\icmlauthor{Adele Myers}{ucsb}
\icmlauthor{Francisco Acosta}{ucsb}
\icmlauthor{Nina Miolane}{ucsb}
\end{icmlauthorlist}

\icmlaffiliation{kth}{KTH Royal Institute of Technology}
\icmlaffiliation{ucb}{UC Berkeley}
\icmlaffiliation{ucsb}{UC Santa Barbara}

\icmlcorrespondingauthor{Giovanni Luca Marchetti}{glma@kth.se}
\icmlcorrespondingauthor{Daniel Kunin}{kunin@berkeley.edu}

% You may provide any keywords that you
% find helpful for describing your paper; these are used to populate
% the "keywords" metadata in the PDF but will not be shown in the document
\icmlkeywords{sequential group composition, irreducible representations, Fourier analysis on groups, learning dynamics, expressivity and efficiency, compositional generalization}

\vskip 0.3in
]

% this must go after the closing bracket ] following \twocolumn[ ...

% This command actually creates the footnote in the first column listing the
% affiliations and the copyright notice. The command takes one argument, which
% is text to display at the start of the footnote. The \icmlEqualContribution
% command is standard text for equal contribution. Remove it (just {}) if you
% do not need this facility.

% Use ONE of the following lines. DO NOT remove the command.
% If you have no special notice, KEEP empty braces:
% \printAffiliationsAndNotice{}  % no special notice (required even if empty)
% Or, if applicable, use the standard equal contribution text:
\printAffiliationsAndNotice{\icmlEqualContribution}

\begin{abstract}
    How do neural networks trained over sequences acquire the ability to perform structured operations, such as arithmetic, geometric, and algorithmic computation?
    To gain insight into this question, we introduce the \emph{sequential group composition} task.
    In this task, networks receive a sequence of elements from a finite group encoded in a real vector space and must predict their cumulative product.
    %  
    % The task can be order-sensitive and requires a nonlinear architecture to be learned.
    This task can be order-sensitive and cannot be solved by a linear model.
    Our analysis isolates the roles of the group structure, encoding statistics, and sequence length in shaping learning.
    We prove that two-layer networks from vanishing initialization learn this task one irreducible representation of the group at a time in an order determined by the Fourier statistics of the encoding.
    To perfectly learn the task, these networks require a hidden width exponential in the sequence length $k$.
    In contrast, we construct deeper architectures that exploit associativity to dramatically improve this scaling: recurrent neural networks can compose elements sequentially in $k$ steps, while multilayer networks can compose adjacent pairs in parallel in $\log k$ layers.
    Overall, the sequential group composition task offers a tractable window into the mechanics of deep learning.
\end{abstract}

% TL;DR:
% We introduce the sequential group composition task to study how networks learn from sequences, and show that it provably requires nonlinear architectures, admits tractable feature learning, and reveals an interpretable benefit of depth.

\begin{figure}[t]
    \centering
    \includegraphics[width=\linewidth]{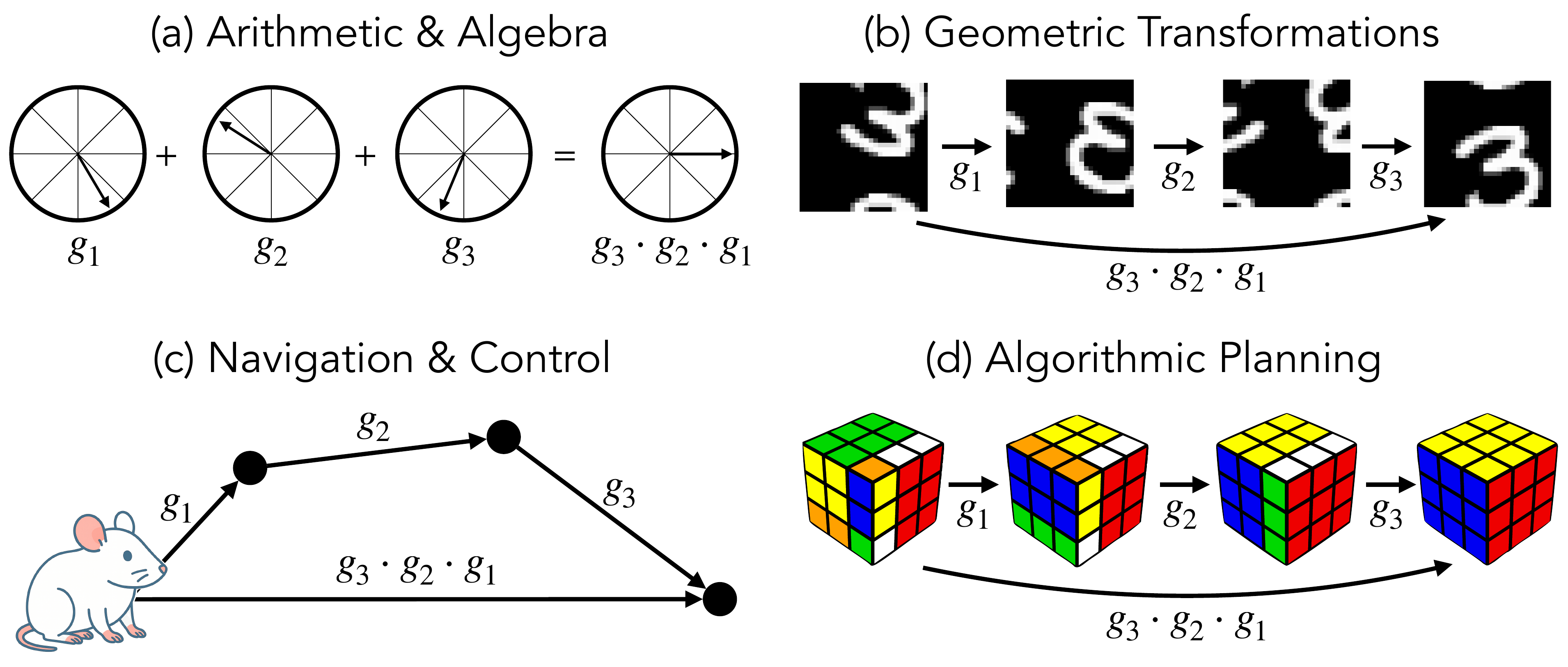}
    \caption{
    \textbf{A unifying abstraction.}
    Across arithmetic, perception, navigation, and planning, many sequence tasks require learning to compose transformations from examples.
    Motivated by this shared structure, we introduce the \emph{sequential group composition} task—a unifying abstraction where networks learn to map a sequence of group elements to their cumulative product \eqref{eq:groupcomp}.
    }
    \label{fig:overview}
\end{figure}

\section{Introduction}
\label{sec:intro}

Natural data is full of symmetry: reindexing the atoms of a molecule leaves its physical properties unchanged; translating or reflecting an image preserves the scene; and reordering words sometimes preserves semantic meaning and sometimes does not—revealing both commutative and non-commutative structure.
Consequently, many tasks we train neural networks on are, at their core, computations over groups that require learning to \emph{compose} transformations rather than merely recognize them.
Yet, it remains unclear \emph{how} standard architectures acquire and represent these composition rules—what features do they learn and in what order.
This paper addresses that gap by developing an analytic account of how simple networks learn to compose elements of finite groups represented in a real vector space.

In this paper, we analyze how neural networks learn group composition through gradient-based training on sequences.
Given any finite group $G$, Abelian or non-Abelian, the neural network is tasked with learning a map from a sequence of group elements to their ordered (from left to right) cumulative product:
\begin{equation}\label{eq:groupcomp}
    (g_1, \ldots, g_k) \in G^k\;\mapsto\; \prod_{i=1}^k g_i \in G.
\end{equation}
Although idealized, this setting is quite general and captures the essence of many natural problems (see \cref{fig:overview}).
Solving puzzles such as the Rubik’s Cube amounts to composing a sequence of moves, each a group element.
Tracking the trajectory of a body through physical space requires composing rigid motions or integrating successive displacements.
Beyond puzzles and physics, groups also underpin information processing and algorithm design, where complex computations arise from composing simple operations.
A canonical example is modular addition—computing sums of integers modulo $p$—which corresponds to the binary case $k=2$ over the cyclic group $C_p$.

We cast the group composition task as a regression problem: a neural network $f\colon \mathbb{R}^{k|G|} \to \mathbb{R}^{|G|}$ receives as input $k$ group elements, $g_1, \ldots, g_k$, and is trained to estimate their product $g_{1} \cdots g_{k}$.
The group elements are encoded with a fixed \emph{encoding vector} $x \in \mathbb{R}^{|G|}$, which we discuss in \cref{sec:background}.  
This formulation highlights a central challenge: the number of possible input sequences grows exponentially with $k$.
While memorization is possible in principle for fixed $k$ and $|G|$, any solution that scales efficiently with sequence length requires the network to uncover and represent the algebraic structure of the group.
Our analysis and experiments show that networks do so by progressively decomposing the task into the irreducible representations of the group, learning these components in a greedy order based on the encoding vector $x$.
Different architectures realize this process in distinct ways: while two-layer networks attempt to compose all $k$ elements at once, requiring width on the order of $2^k$, recurrent models can build products sequentially in $k$ steps, and multilayer networks can combine elements in parallel in $\log{k}$ layers.
Our results reveal both a \emph{universality} in the dynamics of feature learning and a diversity in the \emph{efficiency} with which different architectures exploit the associativity of the task.

\paragraph{Our contributions.}
In this work we introduce the \emph{sequential group composition} task and prove that it admits several properties that make it especially well suited for studying how neural networks learn from sequences:
\begin{enumerate}\setlength\itemsep{0pt}
    \item \textbf{Order sensitive and nonlinear (\cref{sec:group-composition-task}).} 
    We establish that the task, which depending on the group may be order-sensitive or order-insensitive, cannot be solved by a (deep) linear network, as it requires nonlinear interactions between inputs.
    \item \textbf{Tractable feature learning (\cref{sec:two-layer-mlp}).}  
    We show that the task admits a group-specific Fourier decomposition, enabling a precise analysis of learning for a class of two-layer networks.
    In particular, we prove how the group Fourier statistics of the encoding vector~$x$ determine what features are learned and in what order.
    \item \textbf{Compositional efficiency with depth (\cref{sec:depth}).}
    We demonstrate that while two-layer networks require a hidden width exponential in the sequence length~$k$, deep networks can identify efficient solutions by exploiting associativity to compose intermediate representations.
\end{enumerate}
Overall, these results position sequential group composition as a principled lens for developing a mathematical theory of how neural networks learn from sequential data, with broader implications, limitations, and next steps discussed in \cref{sec:discussion}.

\section{Related Work}
\label{sec:related-work}

Our work engages with three fields: \emph{mechanistic interpretability}, where we identify the Fourier features used for group composition; \emph{learning dynamics}, where we explain how these features emerge through stepwise phases of training; and \emph{computational expressivity}, where we characterize how these phases scale with sequence length depending on architectural bias toward sequential or parallel computation.

\paragraph{Mechanistic interpretability.}

A large body of recent work has sought to reverse-engineer trained neural networks to identify the algorithms they learn to implement \cite{olah2020zoom,elhage2021mathematical,olsson2022context,elhage2022superposition,bereska2024mechanistic,sharkey2025open}.
A common strategy in this literature is to analyze simplified tasks that reveal how networks represent computation at the level of weights and neurons.
Among the most influential case studies are networks trained to perform modular addition \cite{power2022grokking}.
It has been shown by numerous empirical studies that networks trained on this task develop internal \emph{Fourier features} and exploit trigonometric identities to implement addition as rotations on the circle \cite{nanda2023progress,gromov2023grokking,zhong2024clock}.
Related Fourier features have also been observed in networks trained on binary group composition tasks \cite{chughtai2023toy,stander2023grokking,morwanifeature,tian2024composing, li2024fourier, wu2025towards,  mccracken2026uncovering} and in large pre-trained language models performing arithmetic \cite{zhou2024pre,kantamneni2025language, moisescu2025geometry}.
Several works have sought to explain why such structure emerges, linking it to the task symmetry \cite{marchetti2024harmonics}, simplicity biases of gradient descent \cite{morwanifeature, tian2024composing}, and most recently a framework for feature learning in two-layer networks \cite{kunin2025alternating, he2026mechanism}.
Our work extends these insights to group composition over sequences, and rather than inferring circuits solely from empirical inspection, we derive from first principles how networks progressively acquire these Fourier features through training.
We collect technical remarks on how our work relates to previous group-theoretical constructions in \cref{app:rel_groups}.

\paragraph{Learning dynamics.}
A complementary line of research investigates how computational structure emerges during training by analyzing the trajectory of gradient descent rather than the final trained model.
A consistent empirical finding is that networks acquire simple functions first, with more complex features appearing only later in training \cite{arpit2017closer,kalimeris2019sgd,barak2022hidden}. 
This staged progression—sometimes described as \emph{stepwise} or \emph{saddle-to-saddle}—is marked by extended plateaus in the loss punctuated by sharp drops \cite{jacot2021saddle}.
These dynamics have been theoretically characterized across a range of simple settings \cite{gidel2019implicit,li2020towards,pesme2023saddle,zhang2025training, zhang2025saddle}.
Of particular relevance is the \emph{Alternating Gradient Flow (AGF)} framework recently introduced by \citet{kunin2025alternating}, which unifies many such analyses and explains the stepwise emergence of Fourier features in modular addition.
Building on this perspective, we show that networks trained on the sequential group composition task acquire Fourier features of the group in a greedy order determined by their importance.

\paragraph{Computational expressivity.}
Algebraic and algorithmic tasks have also become canonical testbeds for probing the computational expressivity of neural architectures \cite{liu2022transformers, barkeshli2026origin}.
Classical results established that sufficiently wide two-layer networks can approximate arbitrary functions, yet the ability to (efficiently) find these solutions depends on the architecture.
Recent analyses have examined the dominance of transformers in sequence modeling, contrasting their performance with that of RNNs and feedforward MLPs.
Across these works, a consistent picture emerges: transformers efficiently implement compositional algorithms with logarithmic depth by exploiting parallelism, while recurrent models realize the same computations sequentially with linear depth, and shallow networks require exponential width \cite{liu2022transformers,sanford2023representational,sanford2024understanding,sanford2024transformers,bhattamishra2024separations,jelassi2024repeat,wang2025learning,mousavi2025transformers}.
Our analysis confirms this lesson in the context of group composition, enabling a precise characterization of how the architecture determines not only what can be computed, but also how efficiently such computations are learned.

\begin{figure*}
    \centering
    \begin{subfigure}[t]{0.25\textwidth}
    \centering
    \includegraphics[width=\linewidth]{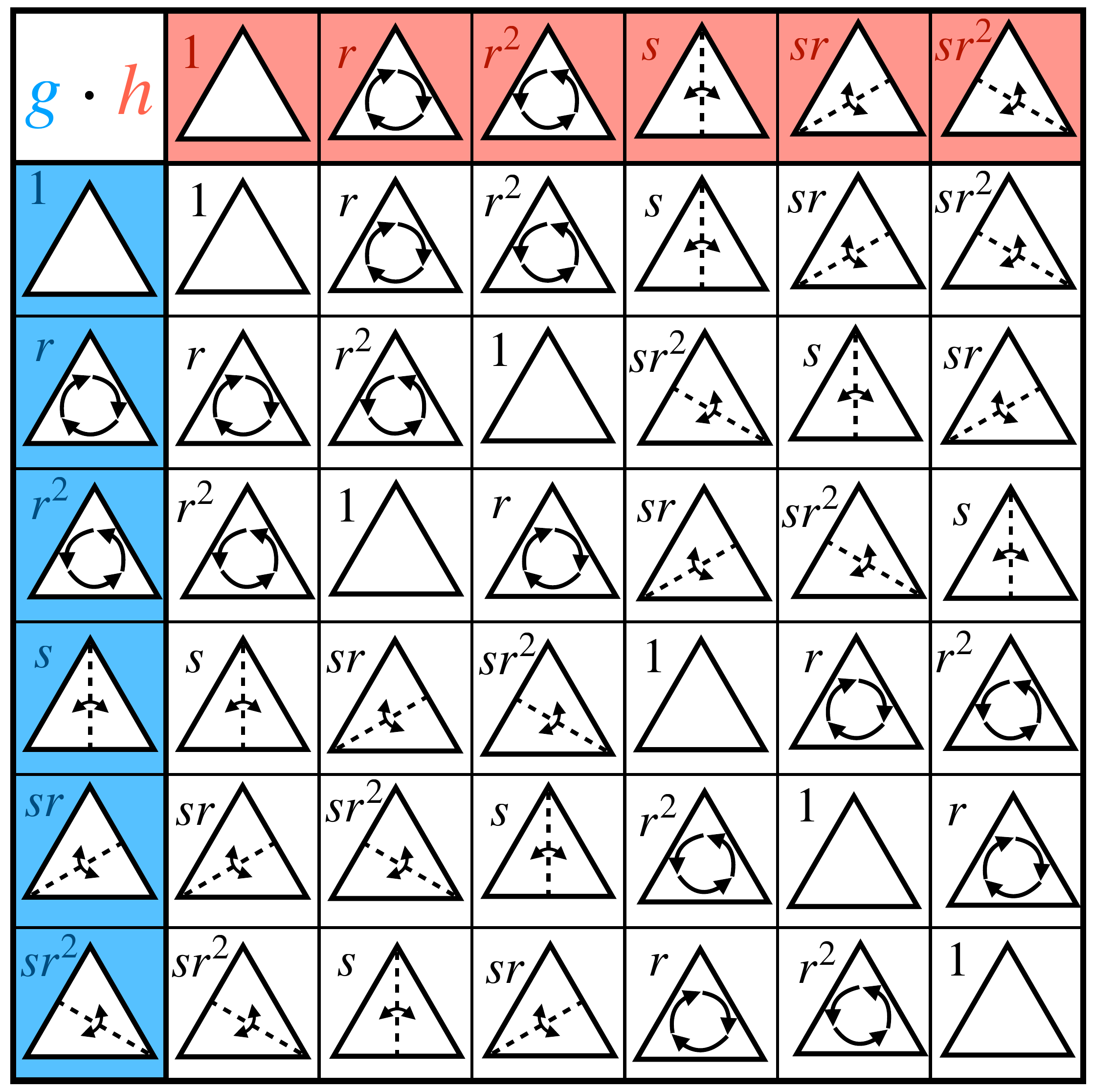}
    \caption{Dihedral Group $D_3$}
    \label{fig:background-a}
    \end{subfigure}\hfill
    \begin{subfigure}[t]{0.41\textwidth}
    \centering
    \includegraphics[width=\linewidth]{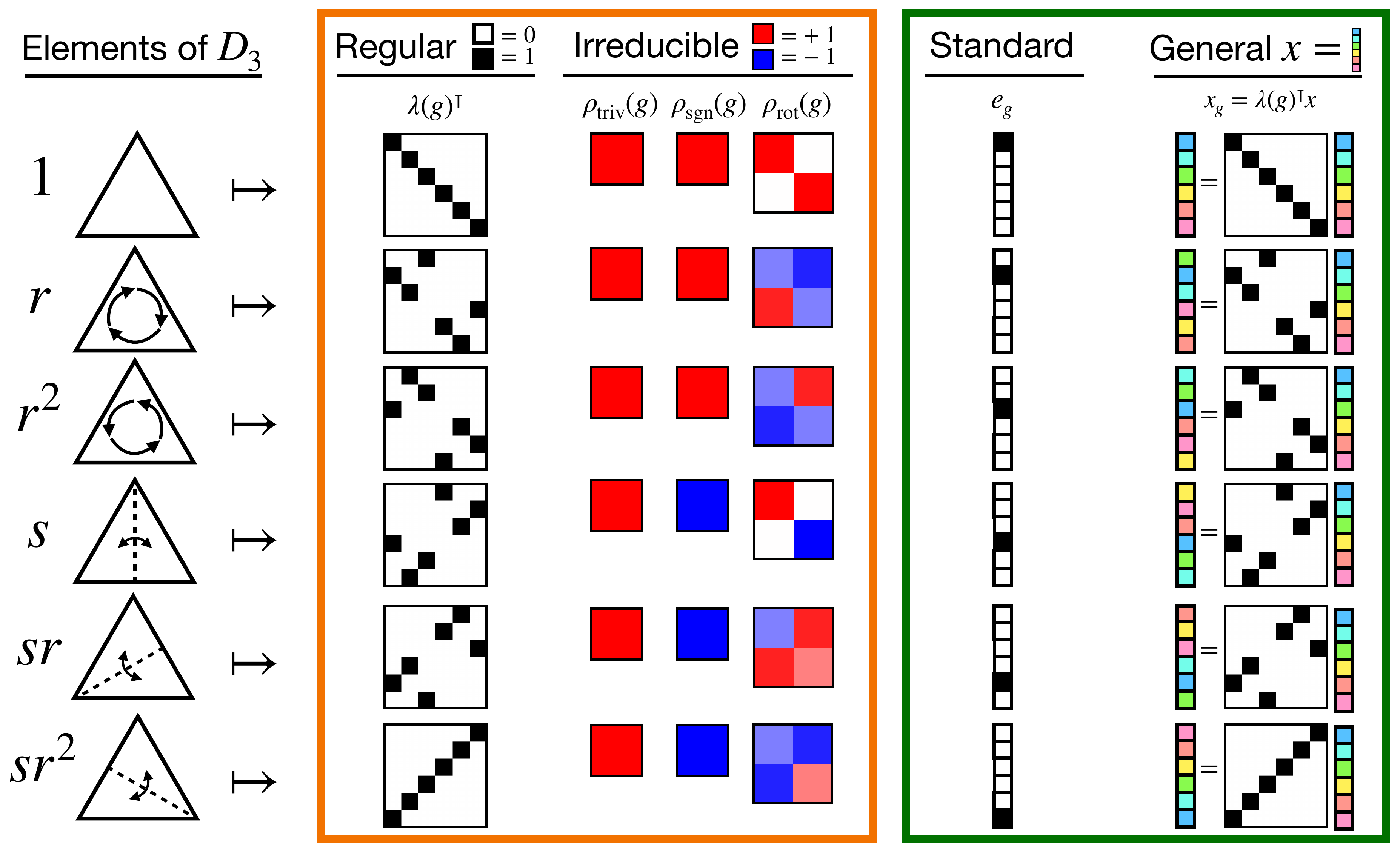}
    \caption{{\color{MyOrange}Representations} and orbit-based {\color{MyGreen}encodings} of $D_3$}
    \label{fig:background-b}
    \end{subfigure}
    \hfill
    \begin{subfigure}[t]{0.30\textwidth}
    \centering
    \includegraphics[width=\linewidth]{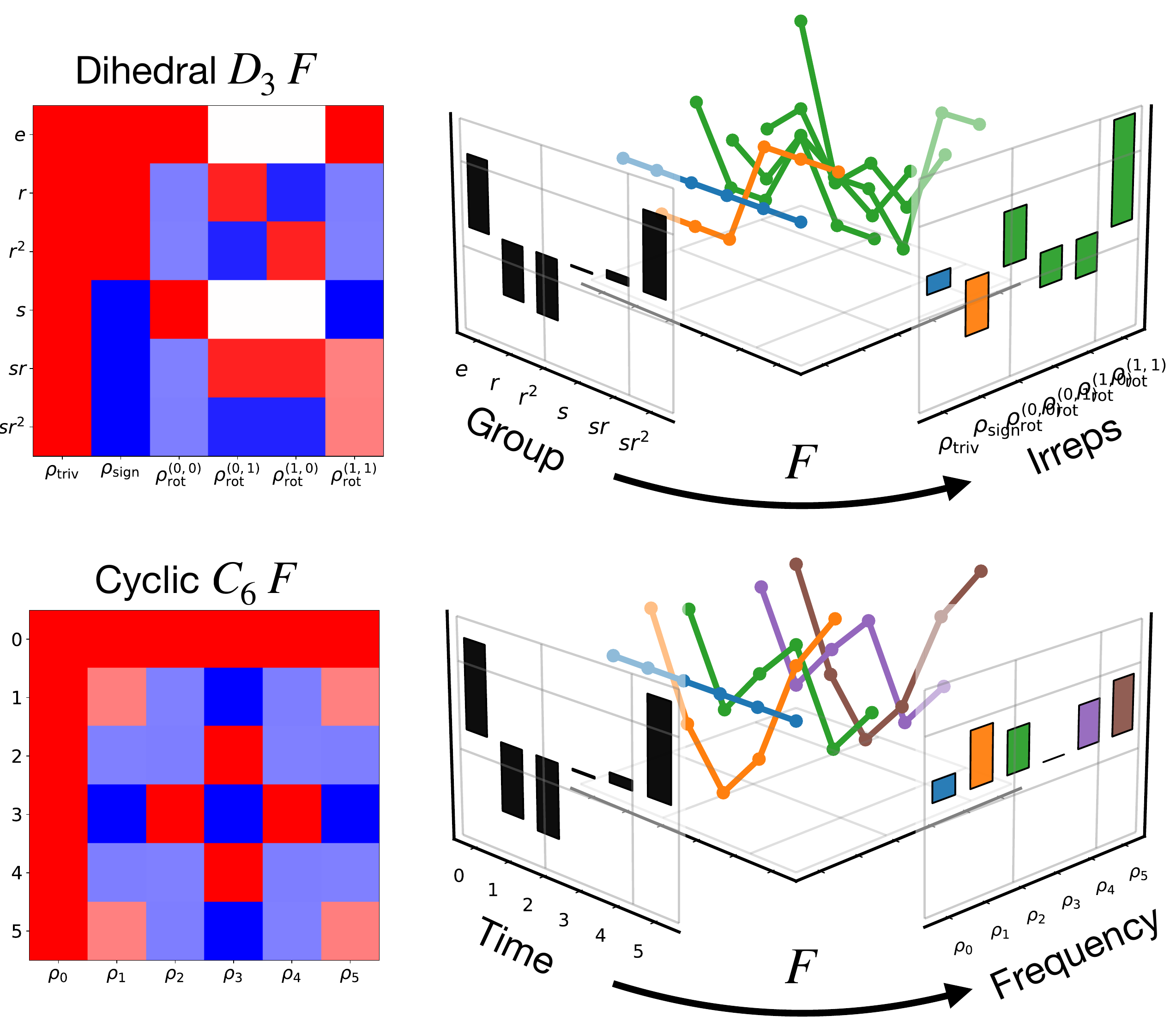}
    \caption{Fourier transform of $D_3$ and $C_6$}
    \label{fig:background-c}
    \end{subfigure}
    \caption{\textbf{Visual introduction to abstract harmonic analysis.}
        (a) The dihedral group $D_3$ consists of all rotations and reflections of a regular triangle, a canonical non-Abelian group where composition is order-dependent.
        (b) Its regular representation acts on $\mathbb{C}^{|G|}$ as $6 \times 6$ permutation matrices, which decompose into two one-dimensional and one two-dimensional irreducible representations (irreps).
        We encode $G$ by taking the orbit of a fixed encoding vector $x \in \mathbb{R}^6$ under the regular representation; this reduces to the standard one-hot encoding when $x = e_1$.
        (c) The Fourier transform is a unitary change of basis built from the irreps of $G$: see, e.g., how its first row corresponds to flattening the irreps of the identity element $1$.
        It decomposes a signal $x \in \mathbb{R}^{|G|}$ into its irrep components, with coefficients $\hat{x} = F^\dagger x$.
        This construction generalizes the classical DFT, recovered when $G = C_p$.
        Here we show the Fourier transform for $D_3$ and $C_6$.
    }
    \label{fig:background}
\end{figure*}

\section{A Sequence Task with Structure \& Statistics}
\label{sec:group-composition-task}

In this section, we begin by reviewing mathematical background on groups and harmonic analysis over them, which will be used throughout the paper. 
We then formalize the \emph{sequential group composition} task and highlight the properties that make it particularly well suited for analysis.

\subsection{Brief Primer on Harmonic Analysis over Groups}
\label{sec:background}

\paragraph{Groups.} 
\emph{Groups} formalize the idea of a set of (invertible) transformations or symmetries, that can be composed.
\begin{definition}\label{groupdef}
    A \emph{group} is a set $G$ equipped with a binary operation $G \times G \to G$ denoted by $(g,h) \mapsto gh$, with an \emph{inverse element} $g^{-1} \in G$ for each $g \in G$ and an \emph{identity element} $1 \in G$ such that for all $g,h,k \in G$:
    \begin{center}
        \vspace{-1.2em}
        \begin{tabular}{ccc}
            \emph{Associativity} &   \emph{Inversion} & \emph{Identity}  \\
             $g(hk) = (gh)k$ & \hspace{.2cm} $g^{-1}g = g g^{-1} = 1$  \hspace{.2cm} & $g1 = 1g = g$ 
        \end{tabular}
    \end{center}
\end{definition}
A group is \emph{Abelian} if its elements commute ($gh=hg$ for all $g,h\in G$); otherwise it is \emph{non-Abelian}.
Abelian groups model order-insensitive transformations, such as the \emph{cyclic group} $C_p = \mathbb{Z}/p\mathbb{Z}$, which consists of integers modulo $p$ with addition modulo $p$ as the group operation.
Non-Abelian groups capture order-sensitive transformations, such as the \emph{dihedral group} $D_p$, which consists of all rotations and reflections of a regular $p$-gon.
Here the order matters, since rotating then reflecting does not yield the same result as reflecting then rotating, as shown in \cref{fig:background-a} for $D_3$.

\paragraph{Group representations.}
Elements of any group can be \emph{represented} concretely as invertible matrices, where composition corresponds to matrix multiplication. 
This allows group operations to be analyzed through linear algebra.
We focus on representations with $n$-dimensional \emph{unitary}\footnote{This is not restrictive, since any group representation can converted into a unitary one via an appropriate Hermitian metric.} matrices, which form the unitary group $\mathrm{U}(n) = \{A \in \mathbb{C}^{n \times n} \mid A^\dagger A = I\}$, where $\dagger$ denotes the conjugate transpose. 

\begin{definition} 
    An $n$-dimensional \emph{unitary representation} of $G$ is a map $\rho \colon G \rightarrow \textnormal{U}(n)$ such that $\rho(gh) = \rho(g) \rho(h)$ for all $g, h \in G$, i.e., a homomorphism between $G$ and $\textnormal{U}(n)$.
\end{definition}

An important representation for a finite group $G$ is the (left) \emph{regular representation}, which maps each element $g \in G$ to a $|G|\!\times\!|G|$ permutation matrix $\lambda(g)$ that acts on the vector space $\mathbb{C}^{|G|}$ generated by the one-hot basis $\{\mathbf{e}_h : h \in G\}$:
\begin{equation}
    \lambda(g)\,\mathbf{e}_h = \mathbf{e}_{gh}, \qquad h \in G.
\end{equation}
A vector in $\mathbb{C}^{|G|}$ can be thought as a complex-valued signal over $G$, whose coordinates get permuted by $\lambda(g)$ according to the group composition; see \cref{fig:background-b}.

The regular representation, which has dimension equal to the \emph{order} of the group $|G|$, can be decomposed into lower-dimensional unitary representations that still faithfully capture the group’s structure.
These representations, which cannot be broken down any further, are called \emph{irreducible representations} (or \emph{irreps}) and serve as the fundamental building blocks of every other unitary representation.
For a finite group $G$, there exists a finite number of irreps up to isomorphism. 
For Abelian groups, the irreps are one-dimensional, while non-Abelian groups necessarily include higher-dimensional irreps that capture their order-sensitive structure.
Every group has a one-dimensional \emph{trivial} irrep, denoted $\rho_{\mathrm{triv}}$, which maps each $g \in G$ to the scalar $1$.
Let $\irr{G}$ denote the set of irreps up to isomorphism, and $n_\rho$ the dimension of $\rho \in \irr{G}$.
See \cref{fig:background-b} for an illustration of the regular and irreducible representations of $D_3$.

\paragraph{Orbit-based encoding of $G$.}
Representation theory translates group structure into unitary matrices, but to train neural networks we require a real-valued encoding $G \to \mathbb{R}^{|G|}$ that reflects the group structure.
We obtain such an encoding by taking the \emph{orbit} of a fixed \emph{encoding vector} $x \in \mathbb{R}^{|G|}$ under the regular representation: $g \mapsto \lambda(g)^\intercal x$.
For $x = e_1$, this reduces to the standard one-hot encoding $g \mapsto e_g$.
For convenience we denote $x_g = \lambda(g)^\intercal x$.
For general $x$, the orbit $\{x_g\}_{g \in G}$ depends on both the structure of the group $G$ and the statistics of the encoding vector $x$.
\Cref{fig:background-b} illustrates this encoding for $D_3$.

% WE COULD INTRODUCE THE CONVOLUTION OPERATOR -- BUT NOT NEEDED IN MAIN
% A key structural property of this construction is that inner products coincide with group convolutions:
% \begin{equation}
%     \langle w, x_g\rangle = w^\top \lambda(g)x = (w \star x)[g],
% \end{equation}
% where $\star : \mathbb{C}^G \times \mathbb{C}^G \to \mathbb{C}^G$ denotes the group \emph{convolution} operator, see \cref{app:harmonic} for a formal definition.
% % 
% This link between encodings and convolution is what brings the Fourier statistics of $x$ into our analysis in \cref{sec:two-layer-mlp}.

\paragraph{Group Fourier transform.}
The decomposition of the regular representations into the irreducible representations is achieved by a change of basis $F \in \mathbb{C}^{|G|\times|G|}$ that simultaneously block-diagonalizes $\lambda(g)$ for all $g \in G$.
% 
% In this basis, the action of $G$ decomposes into independent components, each corresponding to an irrep.
% 
This change of basis is the \emph{group Fourier transform}.

\begin{definition}\label{def:fourier_main}
    The \emph{Fourier transform} over a finite group $G$ is the map $\mathbb{C}^{|G|} \rightarrow \bigoplus_{\rho \in \irr{G}} \mathbb{C}^{n_\rho \times n_\rho}$, $x \mapsto \widehat{x}$, defined as:
    \begin{equation}\label{fouriernc_main}
    \widehat{x}[\rho] = \sum_{g \in G}
    \rho(g)^{\dagger} x[g] \quad \in \mathbb{C}^{n_\rho \times n_\rho}, 
    \end{equation}
    where $x[g] \in \mathbb{C}$ indexes the $g$ element of $x$.
    Flattening all blocks $\widehat{x}[\rho]$ yields a vector $\widehat{x} = F^\dagger x$.
    % , where $F \in \mathbb{C}^{|G| \times |G|}$ is the matrix that simultaneously block-diagonalizes $\lambda(g)$ for all $g\in G$.
\end{definition} 
Definition~\ref{def:fourier_main} generalizes the classical discrete Fourier transform (DFT).
To see this, consider the cyclic group $C_p$.
The irreps of $C_p$ are one-dimensional and correspond to the $p$ roots of unity, $\rho_k(g) = e^{2\pi \mathfrak{i} g k / p}$ for $k \in \{0,\dots,p-1\}$, where $\mathfrak{i} = \sqrt{-1}$ is the imaginary unit.
Substituting these irreps into Definition~\ref{def:fourier_main} yields exactly the standard DFT, and the change-of-basis matrix $F$ coincides with the usual DFT matrix.
In this sense, the Fourier transform over a finite group generalizes the classical DFT: the irreps of $G$ act as ``matrix-valued harmonics'' that extend complex exponentials to non-Abelian settings.
See \cref{fig:background-c} for a depiction of the Fourier transform for $D_3$.

\paragraph{Harmonic analysis.}
Equipped with a Fourier transform, we can extend the familiar tools of classical harmonic analysis beyond the cyclic case to harmonic analysis over groups \citep{folland2016course}.
Importantly, the group Fourier transform satisfies both a \emph{convolution theorem} and a \emph{Plancherel theorem}, see \cref{app:harmonic} for details.
To state these results, we introduce a natural inner product and norm on the irrep domain, which we will use throughout our analysis.
\begin{definition}
    \label{defn:inner-product-power}
    For $\rho \in \irr{G}$ and $A,B \in \mathbb{C}^{n_\rho \times n_\rho}$, define the \emph{inner product}
    $\langle A,B\rangle_\rho := n_\rho\mathrm{Tr}(A^\dagger B)$.
    The \emph{power} of $x$ at $\rho$ is the induced norm
    $\|\widehat{x}[\rho]\|_\rho^2 := \langle \widehat{x}[\rho],\widehat{x}[\rho]\rangle_\rho$.
\end{definition}
The power generalizes the squared magnitude of a Fourier coefficient in the classical DFT, capturing the energy of the matrix-valued coefficient $\widehat{x}[\rho]$.
The $n_\rho$ normalization is chosen such that the Fourier transform is unitary and the total energy decomposes across irreps as $\|x\|^2 = \frac{1}{|G|}\sum_{\rho \in \irr{G}} \|\widehat{x}[\rho]\|^2_\rho$, which is the Plancherel theorem.

\subsection{The Sequential Group Composition Task}

The \emph{sequential group composition} task is a regression problem.
Given a finite group $G$ and an encoding vector $x \in \mathbb{R}^{|G|}$, a neural network $f$ receives as input a sequence of encoded elements, $x_{\mathbf{g}} := (x_{g_1}, \ldots, x_{g_k}) \in \mathbb{R}^{k|G|}$, and is trained to estimate the encoding of their composition $x_{g_1 \cdots  g_k} \in \mathbb{R}^{|G|}$.
The network is trained to minimize the mean squared error loss over all sequences of length $k$:
\begin{equation}
    \label{eq:mse-loss}
    \mathcal{L}(\Theta) = \frac{1}{2|G|^k}\sum_{\mathbf{g} \in G^k}\big\| x_{g_1 \cdots g_k} - f(x_{\mathbf{g}}; \Theta)\big\|^2.
\end{equation}

The task necessarily requires nonlinear interactions between the inputs:
\begin{lemma} 
\label{lemm:lin_imp}
    Let $x$ be a nontrivial ($x \not = 0$) and mean centered ($\widehat{x}[\rho_{\mathrm{triv}}] = \langle  x, \mathbf{1} \rangle = 0$) encoding. There is no linear map $\mathbb{R}^{k |G|} \rightarrow \mathbb{R}^{|G|}$ sending $x_{\mathbf{g}}$ to $x_{g_1 \cdots g_k}$ for all $\mathbf{g} \in G^k$.
\end{lemma}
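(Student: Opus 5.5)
Suppose, for contradiction, that a linear map $L\colon \mathbb{R}^{k|G|}\to\mathbb{R}^{|G|}$ satisfies $L(x_{\mathbf{g}}) = x_{g_1\cdots g_k}$ for all $\mathbf{g}\in G^k$. We write $L$ in block form, $L(v_1,\dots,v_k)=\sum_{i=1}^k A_i v_i$ with $A_i\in\mathbb{R}^{|G|\times|G|}$, so the hypothesis reads
\begin{equation*}
\sum_{i=1}^k A_i x_{g_i} = x_{g_1\cdots g_k}\qquad\text{for all } \mathbf{g}\in G^k .
\end{equation*}
The plan is to average this identity over all but one coordinate and exploit mean-centering. The left side then isolates a single block, while the right side averages to zero.

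Fix an index $j$ and an element $h\in G$, and average the identity over all $\mathbf{g}$ with $g_j=h$, that is, over the remaining $k-1$ coordinates uniformly in $G^{k-1}$. On the left, each term with $i\neq j$ becomes $A_i\,\frac{1}{|G|}\sum_{g\in G} x_g$. Since $x_g=\lambda(g)^\intercal x$ is a permutation of the coordinates of $x$, the sum $\sum_g x_g$ has every coordinate equal to $\langle x,\mathbf{1}\rangle=0$. These terms therefore vanish and only $A_j x_h$ survives. On the right, when $k\ge 2$ the product $g_1\cdots g_k$ runs uniformly over $G$ as the other coordinates vary, because left or right multiplication by a fixed element is a bijection. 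The average is therefore $\frac{1}{|G|}\sum_{g} x_g = 0$. We conclude $A_j x_h=0$ for all $j$ and $h$.

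Substituting back into the original identity with any fixed $\mathbf{g}$ gives $x_{g_1\cdots g_k}=\sum_i A_i x_{g_i}=0$. Since $x_1=x\neq 0$, for instance by taking all $g_i=1$, this is a contradiction.

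The only step requiring care is the uniformity claim on the right-hand side. It relies on $k\ge 2$, so that at least one free coordinate exists, and on the bijection $g\mapsto agb$ for fixed $a,b$. For $k=1$ the statement is false, since the identity map works, so the lemma implicitly assumes $k\ge 2$ as in the task setting. The argument uses no Fourier machinery beyond the observation that mean-centering means the trivial-irrep component vanishes.
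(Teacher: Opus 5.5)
Your proof is correct and follows essentially the same route as the paper: both marginalize out all coordinates but one and use mean-centering to make the averaged target $\sum_{\mathbf{g}'} x_{g_1\cdots g_k}$ vanish. The only difference is that the paper pairs the identity with the target and sums over $G^k$ to reach the contradiction $|G|^k\|x\|^2 = 0$, whereas you average the vector identity directly to get $A_j x_h = 0$. Your explicit remark that $k \ge 2$ is needed (for $k=1$ the identity map works) is a good catch, since the paper uses this assumption only implicitly.
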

See Section \ref{app:nonlin} for a proof. 
Consequently, the simplest standard architecture capable of perfectly solving the task is a two-layer network with a polynomial activation, which we study in the following section.

\begin{figure*}[ht]
  \centering
  \includegraphics[width=\linewidth]{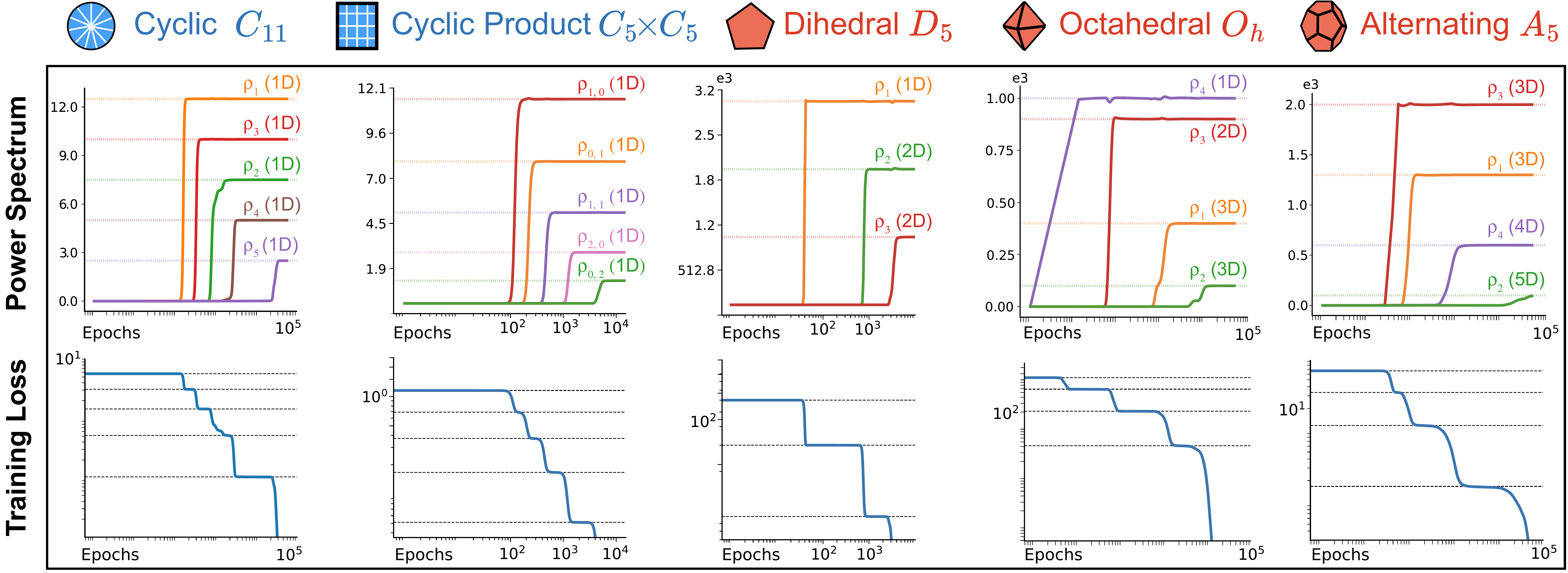}
  \caption{\textbf{Binary composition on {\color{MyBlue}Abelian} and {\color{MyRed}Non-Abelian} groups.} 
  A two-layer quadratic MLP learns to perform binary group composition task on Abelian and non-Abelian groups by learning the irreducible representations of the group one at a time in order of their importance to the encoding of the group as prescribed in \cref{eq:utilitymaxshifts_new_main}.
  Experimental details are given in \cref{app:data}.
  }
  \label{fig:binary-composition}
\end{figure*}

\section{Tractable Feature Learning Dynamics}
\label{sec:two-layer-mlp}

In this section, we consider how a two-layer network learns the sequential group composition task in the vanishing initialization limit. 
For an input sequence encoded as $x_\mathbf{g} \in \mathbb{R}^{k|G|}$, the output computed by the network is: 
\begin{equation}
    f(x_\mathbf{g}; \Theta) = W_{\text{out}} \  \sigma\left(W_{\text{in}} \ x_\mathbf{g} \right),
\end{equation}
where $W_{\text{in}}\in \mathbb{R}^{H \times k|G|}$ embeds the input sequence into a hidden representation, $\sigma$ is an element-wise polynomial of degree $k$ that is monic (the leading term is $z^k$) and origin-passing ($\sigma(0) = 0$), $W_{\text{out}} \in \mathbb{R}^{|G| \times H}$ unembeds the hidden representation, and $\Theta = (W_{\text{in}}, W_{\text{out}})$.
This computation can also be expressed as a sum over the $H$ hidden neurons as $f(x_\mathbf{g}; \Theta) = \sum_{i=1}^H f(x_\mathbf{g}; \theta_i)$, where 
\begin{equation}
    f(x_\mathbf{g}; \theta_i) = w^i \ \sigma\!\left(\sum_{j=1}^k \langle u_j^i, x_{g_j} \rangle \right).
\end{equation}
Here, $u^{i} = (u_1^i, \ldots, u_k^i) \in \mathbb{R}^{k|G|}$ and $w^i \in \mathbb{R}^{|G|}$ denote input and output weights for the $i^{\mathrm{th}}$ neuron, i.e., the $i^{\mathrm{th}}$ row and column of $W_{\text{in}}$ and $W_{\text{out}}$ respectively, and $\theta_i = (u_1^i, \ldots, u_k^i, w^i)$. 
We study the \emph{vanishing initialization limit}, where the parameters are drawn from a random initialization $\theta_i(0) \sim \mathcal{N}(0, \alpha^2)$ and we take the limit $\alpha \to 0$.
The parameters then evolve under a \emph{time-rescaled gradient flow}, $\dot{\theta}_i = -\eta_{\theta_i}\nabla_{\theta_i} \mathcal{L}(\Theta)$, with a neuron-dependent learning rate $\eta_{\theta_i} = \|\theta_i\|^{1 - k}\log(1/\alpha)$ (see \citet{kunin2025alternating} for details),  
minimizing the mean squared error loss \cref{eq:mse-loss}.

\subsection{Alternating Gradient Flows (AGF)}
\label{sec:AGF}
Recent work by \citet{kunin2025alternating} introduced \emph{Alternating Gradient Flows} (AGF), a framework describing gradient dynamics in two-layer networks under vanishing initialization.
Their key observation is that in this regime hidden neurons operate in one of two states—\emph{dormant}, with parameters near the origin ($\|\theta_i\| \approx 0$) that have negligible influence on the output, or \emph{active}, with parameters far from the origin ($\|\theta_i\| \gg 0$) that directly shape the output.
Dormant neurons $\mathcal{D} \subseteq [H]$ evolve slowly, independently identifying directions of maximal correlation with the residual.
Active neurons $\mathcal{A} \subseteq [H]$ evolve quickly, collectively minimizing the loss and forming the residual.
Initially all neurons are dormant; during training, they undergo abrupt activations one neuron at a time.
AGF describes these dynamics as an alternating two-step process:

\emph{1. Utility maximization.}  
Dormant neurons compete to align with informative directions in the data, determining which feature is learned next and when it emerges.  
Assuming the prediction over the active neurons $f(x_\mathbf{g};\Theta_\mathcal{A})$ is stationary, the utility of a dormant neuron is defined as
\begin{equation}
    \mathcal{U}(\theta_i) = \frac{1}{|G|^k}\sum_{\mathbf{g} \in G^k}\left\langle f(x_\mathbf{g}; \theta_i), x_{g_{1:k}} - f(x_\mathbf{g};\Theta_\mathcal{A})\right\rangle,
\end{equation}
and the corresponding optimization problem is
\begin{equation}
    \forall i \in \mathcal{D}\qquad 
    \text{maximize} \quad \mathcal{U}(\theta_i) \quad
    \text{s.t.} \quad \|\theta_i\| = 1.
\end{equation}
Dormant neuron(s) attaining maximal utility will eventually become active (see \citet{kunin2025alternating} for details).

\emph{2. Cost minimization.}  
Once active, a neuron rapidly increases in norm, consolidating the learned feature and causing a sharp drop in the loss. In this phase, the parameters of the active neurons $\Theta_\mathcal{A}$ collaborate to minimize the loss: 
\begin{equation}
    \text{minimize} \quad \mathcal{L}(\Theta_\mathcal{A})
    % \quad \text{s.t.} \quad  \|\Theta_\mathcal{A}\| \ge 0.
\end{equation}
Iterating these two phases produces the characteristic staircase-like loss curves of small-initialization training, where plateaus correspond to utility maximization and drops to cost minimization.

\subsection{Learning Group Composition with AGF}
\label{sec:agfgroup}

We now apply the AGF framework to characterize how a two-layer MLP with polynomial activation learns group composition.
Our analysis reveals a step-wise process, where irreps of $G$ are learned in an order determined by the Fourier statistics of $x$, as shown in \cref{fig:binary-composition}.
During utility maximization, neurons specialize, independently, to the real part of a single irrep. 
During cost minimization, we assume $N$ neurons have simultaneously activated aligned to the same irrep, and remain aligned while jointly minimizing the loss.
Within these irrep-constrained subspaces, we can solve the loss minimization problem, revealing the function learned by each group of aligned neurons.
We refer to \cref{app:mlp} for proofs of the results in this section, including a specialized discussion for the simple case of a cyclic group. 

\paragraph{Assumptions on $x$.}
Our analysis requires a few mild assumptions on the encoding vector $x$.
\vspace{-0.8\baselineskip}
\begin{itemize}\setlength\itemsep{0pt}\setlength\parskip{0pt}
    \item Mean centered, $\widehat{x}[\rho_{\mathrm{triv}}] = \langle  x, \mathbf{1} \rangle = 0$.
    \item For all $\rho \in \irr{G}$, $\widehat{x}[\rho]$ is either invertible or zero.
    \item For $\rho\in \irr{G}$ such that $\widehat{x}[\rho]\not = 0$, the quantities on the right-hand side of \eqref{eq:util-max-freq-score-main} are distinct. 
\end{itemize}
\vspace{-0.8\baselineskip}
Intuitively, the first condition centers the data, which is necessary since the network includes no bias term. The second and third conditions hold for almost all $x \in \mathbb{R}^{|G|}$ and ensure non-degeneracy and separation in the Fourier coefficients of $x$, leading to a clear step-wise learning behavior.

\paragraph{Sigma-pi-sigma decomposition.}
Throughout our analysis, we decompose the per-neuron function $f(x_{\mathbf{g}}; \theta_i)$ into two terms: 
\begin{align}
    f(x_{\mathbf{g}}; \theta_i)^{(\times)} & = w_i \  k! \prod_{j=1}^k \langle u_{i,j}, x_{g_j}\rangle,\\
    f(x_{\mathbf{g}}; \theta_i)^{(+)} &=  f(x_{\mathbf{g}}; \theta_i) - f^{(\times)}(x_{\mathbf{g}}, \theta_i). 
\end{align}
The term $f(x_{\mathbf{g}};\theta_i)^{(\times)}$ captures interactions among all the inputs $x_{g_1},\ldots,x_{g_k}$ and corresponds to a unit in a \textit{sigma-pi-sigma} network \cite{li2003sigma}.
We will find that this term plays the fundamental role in learning the group composition task. 
The term $f(x_{\mathbf{g}}; \theta_i)^{(+)}$ will turn out to be extraneous to the task and multiple neurons will need to collaborate to cancel it out. 
As we demonstrate in \cref{sec:width,sec:depth}, different architectures employ distinct mechanisms to cancel this term while retaining the interaction term, producing substantial differences in parameter and computational efficiency.
In \cref{app:assumption-nonlinearity}, we discuss how smooth non-polynomial activations can access analogous interaction terms through their Taylor expansions near the origin.

\paragraph{Inductive setup.}
We will proceed by induction on the iterations of AGF. To this end, we fix $t \in \mathbb{Z}_{\geq 1}$, and assume that after the $(t-1)^{\mathrm{th}}$ iteration of AGF, the function computed by the active neurons $\mathcal{A}$ is, for $\mathbf{g} \in G^k, h \in G$: 
\begin{equation}\label{eq:initialform}
\begin{aligned}
    f(x_{\mathbf{g}}; \Theta_\mathcal{A})[h] =\frac{1}{|G|} \sum_{\rho \in \mathcal{I}^{t-1}}  \left\langle \rho(g_1 \cdots g_k h)^\dagger , \widehat{x}[\rho] \right\rangle_{\rho}.
\end{aligned}
\end{equation}
Here, $\mathcal{I}^{t-1} \subseteq \irr{G}$ is the set of irreps already learned by the network, which we assume is closed under conjugation: if $\rho \in \mathcal{I}^{t-1}$, then $\overline{\rho} \in \mathcal{I}^{t-1}$. 
If $\mathcal{I}^{t-1} = \irr{G}$, then 
$f(x_{\mathbf{g}}; \Theta_{\mathcal{A}}) = x_{g_1 \cdots g_k}$, indicating the model has perfectly learned the task.
At vanishing initialization $\mathcal{I}^{0} = \{\rho_{\mathrm{triv}}\}$.

\paragraph{Utility maximization.}
By using the Fourier transform over groups, we prove the following. 
\begin{theorem}
\label{thm:utilitymain}
    At the $t^{\mathrm{th}}$ iteration of AGF, the utility function of $f(\bullet, \theta)$ for a single neuron parametrized by $\theta = (u_1, \ldots, u_k, w)$ coincides with the utility of $f(\bullet, \theta)^{(\times)}$.
    % , and can be expressed as:   
    % \begin{equation}\label{eq:utilmessy_main}
    %     \frac{ k!}{|G|^{k+1}} \sum_{\irr{G} \setminus \mathcal{I}^{t-1}}   \langle \widehat{u_1}[\rho]^\dagger \widehat{x}[\rho] \cdots \widehat{u_k}[\rho]^\dagger \widehat{x}[\rho] , \widehat{w}[\rho]^\dagger \widehat{x}[\rho] \rangle_{\rho}.  
    % \end{equation}
    Moreover, under the constraint $\| \theta \| = 1$, this utility is maximized when the Fourier coefficients of $u_1, \ldots, u_k, w$ are concentrated in $\rho_*$ and $\overline{\rho_*}$, where
    \begin{equation}
        \label{eq:util-max-freq-score-main}
        % \rho_* = \underset{\rho \in \irr{G} \setminus \mathcal{I}^{t-1}}{\textnormal{argmax}} \ (n_\rho C_\rho)^{\frac{1-k}{2}} \ \| \widehat{x}[\rho] \|_{\textnormal{op}}^{k+1}.
        \rho_* = \underset{\rho \in \irr{G} \setminus \mathcal{I}^{t-1}}{\textnormal{argmax}}\ \frac{\| \widehat{x}[\rho] \|_{\textnormal{op}}^{k+1}}{(C_\rho n_\rho)^{\frac{k-1}{2}}}.
    \end{equation}
    Here, $\| \bullet\|_{\textnormal{op}}$ denotes the operator norm, and $C_\rho=1$ if $\rho$ is real ($\overline{\rho} = \rho$), and $C_\rho=2$ otherwise. That is, there exist matrices $s_1, \ldots, s_k, s_w \in \mathbb{C}^{n_{\rho_*} \times n_{\rho_*}}$ such that, for $g \in G$,    \begin{equation}\label{eq:utilitymaxshifts_new_main}
         % \begin{aligned}
         %        u_j[g] &=  \rel \ \textnormal{Tr} (  \rho_*(g) s_j), \\
         %        w[g] &=  \rel \ \textnormal{Tr} ( \rho_*(g) s_w). 
         %    \end{aligned}
            u_j[g] =  \rel \ \textnormal{Tr} (  \rho_*(g) s_j), \quad w[g] =  \rel \ \textnormal{Tr} ( \rho_*(g) s_w).
        \end{equation}
\end{theorem}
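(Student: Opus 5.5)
The plan is to write the utility in the Fourier domain and reduce it to a separable maximization over irreps.

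\textbf{Step 1: only the interaction term survives.} I would expand $\sigma(z)=z^k+\sum_{m<k} a_m z^m$ with $z=\sum_j\langle u_j,x_{g_j}\rangle$. Multinomial expansion writes $f(x_{\mathbf g};\theta)$ as a sum of monomials $w\prod_j\langle u_j,x_{g_j}\rangle^{m_j}$. The residual is $r(\mathbf g)[h]=x_{g_1\cdots g_k}[h]-f(x_{\mathbf g};\Theta_{\mathcal A})[h]$, and by \eqref{eq:initialform} and Fourier inversion it equals $\frac1{|G|}\sum_{\rho\notin\mathcal I^{t-1}}\langle\rho(g_1\cdots g_kh)^\dagger,\widehat x[\rho]\rangle_\rho$. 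Every $\rho$ in this sum is nontrivial, because $\rho_{\mathrm{triv}}\in\mathcal I^{t-1}$.

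Take any monomial that does not involve some index $j$, i.e.\ $m_j=0$. Averaging over $g_j$ then acts only on the residual, and $\sum_{g_j}\rho(g_1\cdots g_j\cdots g_k h)=0$ for nontrivial irreducible $\rho$ by Schur orthogonality. Hence only monomials with all $m_j\ge1$ contribute. Since the total degree is at most $k$, this forces $m_j=1$ for all $j$, which is exactly the $k!\,w\prod_j\langle u_j,x_{g_j}\rangle$ term. This proves the first claim.

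\textbf{Step 2: Fourier form of the utility.} Write $\langle u_j,x_{g_j}\rangle$ as a group cross-correlation of $u_j$ with $x$ evaluated at $g_j$. By the convolution theorem, its Fourier coefficient at $\rho$ is a product of the form $\widehat{u_j}[\rho]^\dagger\widehat x[\rho]$, up to conventions. I would then sum over $g_1,\dots,g_k,h$, using that the target depends only on the product $g_1\cdots g_kh$; this is again a chain of convolutions. The result is
\[
\mathcal U(\theta)=\frac{k!}{|G|^{k+1}}\sum_{\rho\notin\mathcal I^{t-1}}\rel\,\big\langle \widehat{u_1}[\rho]^\dagger\widehat x[\rho]\cdots\widehat{u_k}[\rho]^\dagger\widehat x[\rho],\ \widehat w[\rho]^\dagger\widehat x[\rho]\big\rangle_\rho .
\]
The point is that the $k$-fold product collapses to a single irrep block. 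By Plancherel, $\|\theta\|^2=\frac1{|G|}\sum_\rho\big(\sum_j\|\widehat{u_j}[\rho]\|_\rho^2+\|\widehat w[\rho]\|^2_\rho\big)$. Because $u_j$ and $w$ are real, $\widehat{\cdot}[\overline\rho]=\overline{\widehat{\cdot}[\rho]}$. So I would group $\rho$ and $\overline\rho$ together: their contributions are equal and their norm budgets are tied. This pairing is the source of the factor $C_\rho$.

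\textbf{Step 3: the optimization.} For a fixed $\rho$-block with budget $\beta_\rho$, I would bound the trace term by $|\mathrm{Tr}(A_1\cdots A_{k+1})|\le n_\rho\prod\|A_i\|_{\mathrm{op}}$ and $\|B^\dagger\widehat x\|_{\mathrm{op}}\le\|B\|_{\mathrm{op}}\|\widehat x\|_{\mathrm{op}}$. Combined with $\|B\|_{\mathrm{op}}^2\le\|B\|_\rho^2/n_\rho$ and AM--GM across the $k+1$ factors, this gives a block contribution of at most a constant times $\beta_\rho^{(k+1)/2}\|\widehat x[\rho]\|_{\mathrm{op}}^{k+1}/(C_\rho n_\rho)^{(k-1)/2}$. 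Equality holds for rank-one blocks aligned with the top singular vectors of $\widehat x[\rho]$ and with equal norms. Since $(k+1)/2>1$, the map $\beta\mapsto\beta^{(k+1)/2}$ is strictly convex. Maximizing a sum of such terms over the simplex $\sum\beta_\rho=1$ is therefore attained at a vertex, namely the argmax in \eqref{eq:util-max-freq-score-main}, which is unique by the distinctness assumption. Inverting the Fourier transform on the pair $\{\rho_*,\overline{\rho_*}\}$ gives $u_j[g]=\rel\,\trac(\rho_*(g)s_j)$, and similarly for $w$.

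\textbf{Main obstacle.} I expect the hardest part to be the normalization bookkeeping in Step 3. This means getting exactly the exponent $(k-1)/2$ on $C_\rho n_\rho$ and making sure the operator-norm bound is tight, given the $n_\rho$ in $\langle\cdot,\cdot\rangle_\rho$ and the doubling for complex irreps. I would first check the case $k=2$ with cyclic $G$ to calibrate the constants.
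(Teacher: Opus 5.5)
Your Steps 1 and 2, and the convexity argument over the simplex, are sound and essentially match the paper. The paper's reduction to maximizing $\sum_\rho M_\rho\alpha_\rho^{k+1}$ subject to $\sum_\rho\alpha_\rho^2=1$ is your argument with $\beta_\rho=\alpha_\rho^2$. The gap is the per-block estimate in Step 3. The chain you propose does not yield the bound you state, and the bound it does yield is never attained.

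The inequality $|\mathrm{Tr}(A_1\cdots A_{k+1})|\le n_\rho\prod_i\|A_i\|_{\mathrm{op}}$ is an equality only when every $A_i$ is a scalar multiple of a unitary, and in particular has full rank. By contrast, $\|B\|_{\mathrm{op}}^2\le\|B\|_\rho^2/n_\rho=\|B\|_F^2$ is tight only for rank-one $B$. For $n_\rho\ge 2$ the two inequalities can never be tight at the same time. At your rank-one candidate $U_i\propto qp^\dagger$ (with $\widehat{x}[\rho]^\dagger q=\lambda p$), every factor $\widehat{x}[\rho]^\dagger U_i$ is a multiple of the projector $pp^\dagger$. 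The trace then equals $\prod_i\|A_i\|_{\mathrm{op}}$, which is a factor $n_\rho$ below your bound.

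If you carry your chain through with the budget $\sum_i n_\rho\|U_i\|_F^2=|G|\beta_\rho/C_\rho$ and the prefactor $n_\rho C_\rho$, you get
\[
n_\rho\,(C_\rho n_\rho)^{\frac{1-k}{2}}\,\|\widehat{x}[\rho]\|_{\mathrm{op}}^{k+1}\Big(\tfrac{|G|\beta_\rho}{k+1}\Big)^{\frac{k+1}{2}},
\]
that is, an exponent $\frac{3-k}{2}$ on $n_\rho$ instead of $\frac{1-k}{2}$. The selection of $\rho_*$ compares the \emph{exact} block maxima across irreps of different dimensions, so a $\rho$-dependent slack of $n_\rho$ is fatal. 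Read as a score, your bound would rank the two-dimensional irrep of $D_3$ above the sign irrep for a one-hot encoding at $k=2$ ($\sqrt{2}$ versus $1$), which is the opposite of the theorem.

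The fix is to put Frobenius norms on exactly two factors and operator norms on the rest, as the paper does. From $|\mathrm{Tr}(AB)|\le\|A\|_F\|B\|_F$ and $\|AB\|_F\le\|A\|_{\mathrm{op}}\|B\|_F$ you get
\[
|\mathrm{Tr}(\widehat{x}^\dagger U_1\cdots\widehat{x}^\dagger U_k U_0^\dagger\widehat{x})|\le\|\widehat{x}\|_{\mathrm{op}}^{k+1}\prod_{i=0}^k\|U_i\|_F,
\]
with no factor $n_\rho$, and this bound is attained by the rank-one choice above. AM--GM with $\|U_i\|_F^2=|G|\beta_\rho/(C_\rho n_\rho(k+1))$ then gives exactly $(C_\rho n_\rho)^{\frac{1-k}{2}}\|\widehat{x}[\rho]\|_{\mathrm{op}}^{k+1}(|G|\beta_\rho/(k+1))^{\frac{k+1}{2}}$.

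Note also that your planned calibration (cyclic $G$, $k=2$) could not have detected this error, because all irreps of $C_p$ are one-dimensional. You need a test case with $n_\rho\ge 2$, such as $D_3$.
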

Put simply, the utility maximizers are real parts of complex linear combinations of the matrix entries of $\rho_*$. Thus, as anticipated, neurons ``align'' to $\rho_*$ during this phase.

A notable consequence of \cref{thm:utilitymain} is a systematic bias toward learning lower-dimensional irreps, an effect that is amplified with sequence length.
This bias is particularly transparent for a one-hot encoding, where $\|\widehat{x}[\rho]\|_{\mathrm{op}} = 1$ for all $\rho$, yet the utility still favors smaller $n_\rho$ as $k$ grows.
Our theory thus establishes a form of \emph{strong universality} hypothesized in \citet{chughtai2023toy}—that representations are acquired from lower- to higher-dimensional irreps—and explains why this ordering was difficult to detect empirically: for $k=2$ the effect is subtle, but it becomes pronounced as sequence length increases (see \cref{app:empirics-verification}).

\begin{figure*}[t]
    \centering
    \includegraphics[width=\linewidth]{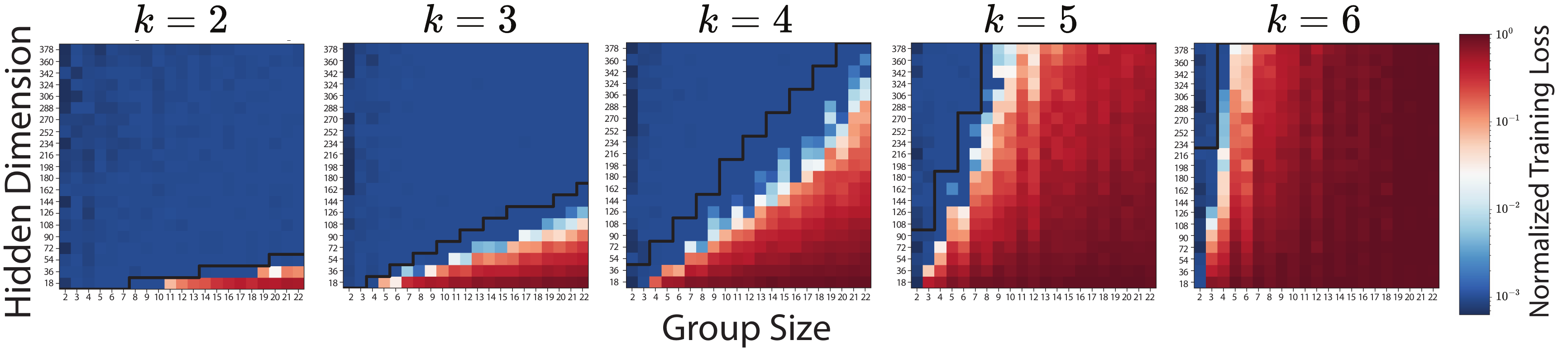}
    \caption{\textbf{Two-layer networks need exponential width.} 
    We report results for two-layer networks with quadratic activations trained on the sequential group composition task with the cyclic group over 2205 training runs, spanning 21 group sizes, 21 hidden widths, and 5 sequence lengths.
    Heatmap colors indicate the normalized training loss, defined as the final loss divided by the initial loss. 
    Training is run until the network achieves a $99.9\%$ loss reduction or reaches $10^6$ optimization steps. 
    The solid black line shows the theoretical lower bound for perfect learning, $H \ge (k+1) 2^{k-1} \lfloor\tfrac{|G|}{2}\rfloor$, derived in \cref{sec:width}.
    Experimental details can be found in \cref{app:empirics-scaling}.
    }
    \label{fig:mlp-complexity}
\end{figure*}

\paragraph{Cost minimization.}
To study cost minimization, we assume that after the utility has been maximized at the $t^{\mathrm{th}}$ iteration, a group $\mathcal{A}_t$ of $N \leq H$ neurons activates simultaneously. Due to Theorem \ref{thm:utilitymain}, these neurons are aligned to $\rho_*$, i.e., are in the form of \eqref{eq:utilitymaxshifts_new_main}.  
% We denote by $\Theta_{\mathcal{A}_t} = (\theta_1, \ldots, \theta_N)$, $\theta_i = ( u_1^i, \ldots, u_k^i, w^i)$ the parameters of these newly-activated neurons, which must be in the form of \eqref{eq:utilitymaxshifts_new_main}
% for some arbitrary matrices $s_j^i, s_w^i \in \mathbb{C}^{n_{\rho_*} \times n_{\rho_*}}$. 
Inductively, we assume that the neurons activated in the previous iterations are aligned to irreps in $\mathcal{I}^{t-1}$, and are at an optimal configuration. We then make the following simplifying assumption:
\begin{assumption}
\label{ass:aligned}
During cost minimization, the newly-activated neurons remain aligned to $\rho_*$.  
\end{assumption}
% We argue that this ansatz is closely related to the fact that sigma-pi-sigma term $f(x_\mathbf{g}; \Theta_{\mathcal{A}_t})^{(\times)}$ performs the whole computation. Namely, we show that if $f(x_\mathbf{g}; \Theta_{\mathcal{A}_t})^{(+)} = 0$, then the ansatz holds. 
This is a natural assumption, that we empirically observe in practice---see \cref{app:assumption-4.2} for a discussion. Thus, we can restrict the cost minimization problem to the space of $\rho_*$-aligned neurons and solve this problem. In particular, we show that, for a large enough number of neurons $N$, a solution must necessarily satisfy $f(x_\mathbf{g}; \Theta_{\mathcal{A}_t})^{(+)} = 0$, i.e., the MLP implements a sigma-pi-sigma network. 
\begin{theorem}
    \label{thrm:mlp-main-cost-min}
Under \cref{ass:aligned}, the following bound holds for the loss restricted to the newly-activated neurons:
\begin{equation} \label{eq:boundd}
% \mathcal{L}(\Theta_{\mathcal{A}_t}) \geq - \frac{C_{\rho_*}}{ 2|G|}   \  \|\widehat{x}[\rho_*]\|_{\rho_*}^2 + \frac{\| x \|^2 }{2} . 
\mathcal{L}(\Theta_{\mathcal{A}_t}) \geq \frac{1}{2}\left(\| x \|^2 - \frac{C_{\rho_*}}{|G|}\  \|\widehat{x}[\rho_*]\|_{\rho_*}^2\right). 
% \begin{aligned}   
% \frac{1}{2|G|^k}&\sum_{\mathbf{g} \in G^k}  \left\|  f(x_\mathbf{g}; \Theta_{\mathcal{A}_t})^{(\times)} \right\|^2 - \mathcal{U}^1(\Theta_{\mathcal{A}_t}) \\
% &\geq - \frac{1}{ |G|}   \  \|\widehat{x}[\rho_t]\|_{\rho_t}^2. 
% \end{aligned}
\end{equation}
For $N \geq (k+1)2^{k} n_{\rho_*}^{k+1}$, the bound is achievable. In this case, it must hold that $f(x_\mathbf{g}; \Theta_{\mathcal{A}_t})^{(+)} = 0$, and the function computed by the neurons is, for $\mathbf{g} \in G^k, h \in G$:  
\begin{equation}\label{eq:finalform}
        f(x_\mathbf{g}; \Theta_{\mathcal{A}_t})[h] = \frac{C_{\rho_*}}{|G|} \rel   \left\langle \rho_*(g_1 \cdots g_k h)^\dagger , \widehat{x}[\rho_*] \right\rangle_{\rho_*}\!\!.
\end{equation}
\end{theorem}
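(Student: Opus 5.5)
We view the restricted loss as a projection problem. Under \cref{ass:aligned}, every newly activated neuron has $u_j^i, w^i$ of the form \eqref{eq:utilitymaxshifts_new_main}. Hence each output coordinate $f(x_{\mathbf g};\Theta_{\mathcal A_t})[h]$ is a function on $G^{k+1}$ whose Fourier support is restricted. Expanding $\sigma$, each monomial in the $\langle u_j^i, x_{g_j}\rangle$ is a product of matrix coefficients of $\rho_*$ or $\overline{\rho_*}$ in the separate variables $g_j$, times matrix coefficients in $h$.

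The target $x_{g_1\cdots g_k}[h]$ expands via Fourier inversion as $\frac{1}{|G|}\sum_\rho \langle \rho(g_1\cdots g_k h)^\dagger, \widehat{x}[\rho]\rangle_\rho$. By Schur orthogonality on $G^{k+1}$, functions of the form $\rho(g_1)\otimes\cdots$ for different irreps, or for proper subsets of variables, are mutually orthogonal. The first step is therefore to write the loss, via Plancherel on $G^{k+1}$, as
\[
\tfrac12\|{\rm target}\|^2 - \langle {\rm target}, f\rangle + \tfrac12\|f\|^2 ,
\]
with $\|{\rm target}\|^2$ equal to $\|x\|^2$ after averaging.

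The second step handles the additive part. Monomials that omit some variable $g_j$, or use it with degree above one, have zero inner product with the target. Two facts drive this: $x$ is mean centered, and the target depends on every $g_j$ through a single irrep block of $\rho(g_1)\cdots\rho(g_k)$. Such monomials therefore only add a nonnegative term to $\|f\|^2$. Mixed terms in which some variables use $\rho_*$ and others $\overline{\rho_*}$ are likewise orthogonal to the target when $\rho_*\neq\overline{\rho_*}$, because the target lies in the span of the product block $\rho\otimes\cdots\otimes\rho$.

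Consequently only the component of $f^{(\times)}$ lying in $\mathrm{span}\{\rho_*(g_1\cdots g_k h)\text{ entries}\}\oplus$ conjugate reduces the loss. The loss is then at least
\[
\tfrac12\bigl(\|x\|^2-\|P\,{\rm target}\|^2\bigr),
\]
where $P$ is the orthogonal projection onto the $\rho_*$ and $\overline{\rho_*}$ isotypic part of the target. By Plancherel, $\|P\,\mathrm{target}\|^2=\frac{C_{\rho_*}}{|G|}\|\widehat x[\rho_*]\|^2_{\rho_*}$, with the factor $C_{\rho_*}$ arising because $\rho_*$ and $\overline{\rho_*}$ contribute equal power when they are distinct. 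This yields \eqref{eq:boundd}.

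Equality forces $f=P\,{\rm target}$. In particular all terms orthogonal to the target vanish, giving $f^{(+)}=0$ and the formula \eqref{eq:finalform}; the real part appears because $f$ is real.

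The main obstacle is achievability with $N\geq (k+1)2^k n_{\rho_*}^{k+1}$. The target tensor is the real part of $\mathrm{Tr}(\rho_*(g_1)\cdots\rho_*(g_k)\rho_*(h)\widehat{x}[\rho_*]^\dagger)$. Expanding the trace writes it as a sum of $n_{\rho_*}^{k+1}$ rank-one products $a_1(g_1)\cdots a_k(g_k)b(h)$, each built from matrix entries of $\rho_*$. Taking real parts of products of complex entries costs a further factor $2^k$: write each complex factor via real and imaginary parts, or equivalently as combinations of the real functions $\mathrm{Re}\,\mathrm{Tr}(\rho_*(g)s)$.

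Each real product $\prod_j\ell_j(g_j)$ must then be realized by neurons $w\,\sigma(\sum_j\ell_j(g_j))$ while cancelling all lower and non-multilinear terms. For this we use a polarization identity, for instance
\[
k!\prod_j z_j=\sum_{\epsilon}c_\epsilon\,\sigma\Bigl(\sum_j\epsilon_j z_j\Bigr)
\]
with $k+1$ suitably chosen points per product, in the spirit of the finite-difference identity $\sum_{m=0}^k(-1)^{k-m}\binom{k}{m}(\,\cdot\,)^k$. We must check that the lower-degree terms of the monic $\sigma$ cancel exactly; this is the delicate part. We would first try scaling arguments $\sigma(\sum_j \epsilon_j z_j)$ with sign patterns plus a $(k+1)$-point interpolation in scale to kill every non-multilinear monomial. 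Counting $(k+1)\cdot 2^k\cdot n_{\rho_*}^{k+1}$ neurons then gives the stated width.
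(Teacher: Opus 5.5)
Your argument for the bound and the equality case is correct, and it takes a cleaner route than the paper's. The paper expands $\sum_{\mathbf g}\|f^{(\times)}\|^2$ by Schur orthogonality into tensor norms and completes the square entrywise. You instead observe that $f^{(\times)}$ lies in the space $U$ spanned by products $b(h)\prod_j\ell_j(g_j)$ of $\{\rho_*,\overline{\rho_*}\}$-isotypic functions, and you project the target onto $U$. This gives \eqref{eq:boundd} directly, and at equality it forces $f=P\,\mathrm{target}$, i.e.\ \eqref{eq:finalform}.

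One justification needs repair. The claim that non-multilinear monomials ``only add a nonnegative term to $\|f\|^2$'' does not follow from their orthogonality to the target. You need $f^{(+)}\perp f^{(\times)}$. This holds because every monomial of $f^{(+)}$ is constant in some $g_j$, while every term of $f^{(\times)}$ has zero mean in each $g_j$ (this is the paper's orthogonality lemma). What the paper's heavier computation buys is the explicit conditions \eqref{eq:third_cond}--\eqref{eq:second_cond}, which it then uses to build solutions.

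\textbf{The gap: achievability at width $(k+1)2^kn_{\rho_*}^{k+1}$.} You spend the factor $2^k$ on splitting the real part of each complex product into real products. That leaves only $k+1$ neurons per real product $b(h)\prod_j\ell_j(g_j)$, which cannot work. The top-degree part of any identity $\sum_\varepsilon c_\varepsilon\sigma(\langle\varepsilon,z\rangle)=k!\,z_1\cdots z_k$ is a Waring decomposition of the square-free monomial. That needs at least $2^{k-1}$ terms, which exceeds $k+1$ once $k\ge 4$ (\cref{rem:waring}). If you use sign patterns, each product costs $2^k$ neurons, and your total becomes $4^kn_{\rho_*}^{k+1}$. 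Adding your scale interpolation makes it $(k+1)4^kn_{\rho_*}^{k+1}$. Either way, the claimed width is not reached.

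Two ideas are missing.
\begin{enumerate}
\item Your ``delicate part'' needs no interpolation. In $\sum_{\varepsilon\in\{\pm1\}^k}(\prod_i\varepsilon_i)\,\sigma(\sum_i\varepsilon_iz_i)$, the coefficient of $z_1^{m_1}\cdots z_k^{m_k}$ is proportional to $\prod_i(1+(-1)^{m_i+1})$. This is nonzero only if every $m_i$ is odd, and with $\sum_im_i\le k$ that forces $m=(1,\dots,1)$. So $2^k$ neurons realize one sigma-pi-sigma unit for any monic $\sigma$ with $\sigma(0)=0$.
\item For non-real $\rho_*$, the real part should cost $k+1$, not $2^k$. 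Keep the $n_{\rho_*}^{k+1}$ complex matrix-unit terms, solving for $s_j$ via invertibility of $\widehat x[\rho_*]$. Replicate them $k+1$ times with $S_h^i\mapsto e^{\pi\mathfrak{i}j/(k+1)}S_h^i$ and $s_w^i\mapsto e^{-\pi\mathfrak{i}jk/(k+1)}s_w^i/(k+1)$, for $j=1,\dots,k+1$. Expand the product of real parts using $2\,\mathrm{Re}\,\alpha\,\mathrm{Re}\,\beta=\mathrm{Re}\,\alpha\beta+\mathrm{Re}\,\alpha\overline{\beta}$. The term with no conjugated $S_h$ is phase-invariant. A term in which $k-|A|>0$ of the $S_h$ are conjugated picks up the phase $e^{-2\pi\mathfrak{i}j(k-|A|)/(k+1)}$, which averages to zero over $j$.
\end{enumerate}
Together these give $(k+1)n_{\rho_*}^{k+1}$ sigma-pi-sigma units, hence exactly $(k+1)2^kn_{\rho_*}^{k+1}$ neurons.
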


\Cref{eq:finalform} concludes the proof by induction. Once the loss has been minimized, the newly-activated neurons $\mathcal{A}_t$, together with the neurons activated in the previous iterations of AGF, will compute a sum in the form of \eqref{eq:initialform}, but with the index set given by $\mathcal{I}^t := \mathcal{I}^{t-1}  \cup \{\rho_*, \overline{\rho_*} \}$.

\subsection{Limits of Width: Coordinating Neurons}
\label{sec:width}

Theorem \ref{thrm:mlp-main-cost-min} establishes that an exponential number  of neurons is sufficient to exactly learn the sequential group composition task. Our construction of solutions is explicit; in order to extract sigma-pi-sigma terms from the MLP, we rely on a decomposition of the square-free monomial:
\begin{equation}
    \label{eq:waring-full}
    z_1 \cdots z_k
    =
    \frac{1}{k!\,2^{k}}
    \sum_{\varepsilon\in\{\pm1\}^k}
    \Big(\prod_{i=1}^k \varepsilon_i\Big)
    \sigma\left(\sum_{i=1}^k \varepsilon_i z_i\right). 
\end{equation}
When $\sigma(z)=z^k$, this is an instance of a \emph{Waring decomposition}, expressing the monomial as a sum of $k^{\mathrm{th}}$ powers. We conclude that $2^k$ neurons can implement a sigma-pi-sigma neuron. We then show that $(k+1)n_\rho^{k+1}$ sigma-pi-sigma neurons can achieve the bound in Theorem \ref{thrm:mlp-main-cost-min}. This leads to a \emph{sufficient} width condition to represent the task exactly:
\begin{equation}
    \label{eq:width_condition}
    H \ge (k+1)2^{k}\sum_{\rho\in\mathcal{J}(G)} n_\rho^{\,k+1},
\end{equation}
where $\mathcal{J}(G)$ denotes the non-trivial irreps of $G$, modulo conjugation.
For the cyclic group with monomial activation $\sigma(z)=z^k$, this reduces to $H \ge (k+1)2^{k-1}\left\lfloor\tfrac{|G|}{2}\right\rfloor$, consistent with the empirical scaling in \cref{fig:mlp-complexity}.
In fact, the exponential factor is optimal in general---see \cref{rem:waring}.

This explicit construction both quantifies the width required for perfect performance and clarifies the limitations of narrow networks, which cannot coordinate enough neurons to cancel all extraneous terms.
Empirically, we observe an intermediate regime in which the network lacks sufficient capacity for exact learning yet attains strong performance by finding partial solutions.
These regimes are often associated with unstable dynamics, potentially related to recent results of \citet{martinelli2025flat}, who show how pairs of neurons can collaborate to approximate gated linear units at the ``edge of stability.''

\section{Benefits of Depth: Leveraging Associativity}
\label{sec:depth}
As established in \cref{sec:width} and illustrated in \cref{fig:mlp-complexity}, while two-layer MLPs can perfectly learn the group composition task, they scale poorly in both parameter and sample complexity—requiring exponentially many hidden neurons with respect to sequence length $k$.
This raises a natural question: \emph{can deeper architectures, built for sequential computation, discover more efficient compositional solutions?}

We partially answer this question by showing that recurrent and multilayer architectures can, in principle, exploit the associativity of group operations to compose intermediate representations, leading to dramatically more efficient solutions.
As in the two-layer setting, our constructions are organized by the decomposition of the task into irreducible representations.
However, unlike in \cref{sec:two-layer-mlp}, our results here are constructive rather than dynamical: we show that efficient solutions exist, but we do not analyze whether or how gradient descent converges to them.
We do find partial empirical evidence that trained recurrent networks recover structure consistent with our construction; see \cref{fig:deep-rnn}.
%
% A dynamical analysis of deep networks in the spirit of the AGF framework remains an important direction for future work.

\subsection{RNNs Learn to Compose Sequentially}
\label{sec:recurrent}

We first consider a recurrent neural network (RNN) with a quadratic nonlinearity $\sigma(z) = z^2$, that computes:
\begin{equation}
    \begin{aligned}
    h^{(2)} &= \sigma(W_{\text{in}}x_{g_1} + W_{\text{drive}}x_{g_2}),\\
    h^{(i)} &= \sigma(W_{\text{mix}}\,h^{(i-1)} + W_{\text{drive}}\,x_{g_i}),\\% \quad t=3,\ldots,k,\\
    f_{\text{rnn}}(x_{\mathbf{g}}; \Theta) &= W_{\text{out}}\,h^{(k)}.
    \end{aligned}
\end{equation}
Here $W_{\text{in}}, W_{\text{drive}} \in \mathbb{R}^{H \times |G|}$ embed the inputs $x_{g_i}$ into a hidden representation, $W_{\text{mix}}\in\mathbb{R}^{H\times H}$ mixes the hidden representation between steps, and $W_\text{out} \in \mathbb{R}^{|G| \times H}$ unembeds the final hidden representation into a prediction. 
This RNN is an instance of an \emph{Elman network} \cite{elman1990finding} and when $k = 2$, it reduces to a two-layer MLP with a quadratic non-linearity, as discussed in \cref{sec:two-layer-mlp}.

Now, we show that $f_{\text{rnn}}$ can learn the group composition task without requiring a hidden width that grows exponentially with $k$, by explicitly constructing a solution within this architecture.
The RNN will exploit associativity to compute the group composition sequentially:
\begin{equation}\label{eq:sequent}
    g_1 \cdots g_k = (((((g_1 \cdot g_2) \cdot g_3) \cdot g_4) \cdots g_{k-1}) \cdot g_k).
\end{equation}
We will achieve this by combining two-layer MLPs. To this end, let $W_{\text{in}}^{\text{mlp}}$, $W_{\text{out}}^{\text{mlp}}$ be weights for an MLP with activation $\sigma(z) = z^2$ that perfectly learns the binary group composition task, as constructed in \cref{sec:two-layer-mlp}. Split $W_{\text{in}}^{\text{mlp}} = [W_{\text{left}}^{\text{mlp}} \mid W_{\text{right}}^{\text{mlp}}]$ columns-wise into the sub-matrices corresponding to the two group inputs, and set: 
\begin{equation}
\begin{aligned}
    W_{\text{in}} &= W_{\text{left}}^{\text{mlp}}, \hspace{2em} &W_{\text{drive}} &= W_{\text{right}}^{\text{mlp}}, \\ 
    W_{\text{mix}} &= W_{\text{left}}^{\text{mlp}}W_{\text{out}}^{\text{mlp}}, \hspace{2em} &W_{\text{out}} &= W_{\text{out}}^{\text{mlp}}. 
\end{aligned}
\end{equation}
By construction, the RNN with these weights solves the task sequentially, in the spirit of \cref{eq:sequent}; for each $i$, we have $W_{\text{out}}\,h^{(i)} = x_{g_1 \cdots g_i}$.  
As a result, the RNN is able to learn the task with $H = \mathcal{O}(\sum_{\rho \in \irr{G}}n_{\rho}^{3}) = \mathcal{O}(|G|^{\frac{3}{2}})$ hidden neurons, which is constant in the sequence length $k$.
% For Abelian groups it is $H = \mathcal{O}(|G|)$

An interesting property of our construction is that $W_{\text{mix}}$ is permutation-similar to a \emph{block-diagonal} matrix, with each block corresponding to a given irrep of $G$. 
This follows from Schur's orthogonality relations (see \cref{app:harmonic}), since the columns of $W_{\text{out}}^{\text{mlp}}$ and the rows of $W_{\text{left}}^{\text{mlp}}$ are aligned with irreps. 
In other words, $W_{\text{mix}}$ learns to only mix hidden representations corresponding to the same irrep.

% While our construction proves that unlike the two-layer MLP, an RNN can learn the group composition task with far fewer parameters by leveraging depth and the associativity of the task, naturally the question is does gradient descent find this solution and does it find it in a more sample efficient manner?

\begin{figure}[t]
    \centering
    \includegraphics[width=\linewidth]{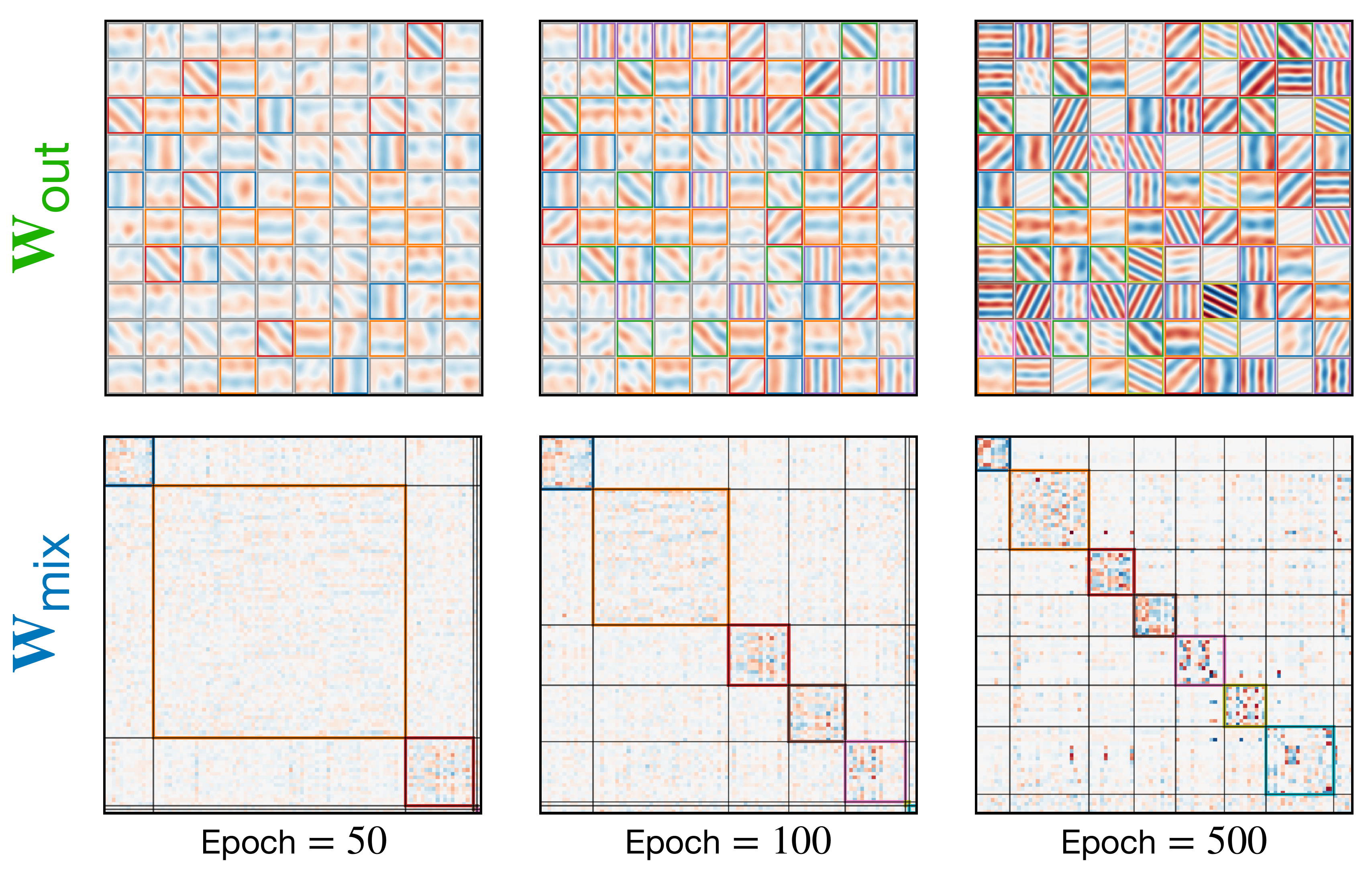}
    \caption{
    \textbf{Empirical evidence for sequential composition in recurrent networks.}
    For $G = C_n \times C_n$, trained RNNs acquire irreducible representations one at a time: $W_{\mathrm{out}}$ specializes to irrep-aligned modes, while $W_{\mathrm{mix}}$ becomes approximately block diagonal in the same decomposition. 
    This provides partial empirical evidence that gradient descent can find recurrent solutions consistent with our irrep-based construction.
    % 
    % See \cref{app:empirics-deep}. 
    }
    \label{fig:deep-rnn}
\end{figure}

\subsection{Deep MLPs Learn to Compose in Parallel}
\label{sec:deep-mlp}

We now consider a multilayer feedforward architecture.
As in the RNN, depth allows the group composition task to be implemented using only binary interactions, eliminating the need for exponential width.
Here, these interactions are arranged in parallel along a balanced tree.
For simplicity, we assume $k = 2^L$ and consider a depth-$L$ multilayer MLP of the form
\begin{equation}
\label{eq:deep-mlp-arch}
\begin{aligned}
% h^{(1)} &= \sigma(W^{(1)} x_{\mathbf{g}}), \\
h^{(\ell)} &= \sigma(W^{(\ell)} h^{(\ell-1)}), \qquad \ell = 1,\dots,L, \\
f_{\mathrm{mlp}}(x_{\mathbf{g}};\Theta) &= W^{(L+1)} h^{(L)},
\end{aligned}
\end{equation}
where $h^{(0)} = x_{\mathbf{g}}$ and $\sigma(z)=z^2$ is applied elementwise.
The hidden widths decrease geometrically: at level $\ell$, the representation consists of $k/2^\ell$ intermediate group elements, each embedded in a $H$-dimensional hidden space.
As in \cref{sec:recurrent}, when $k=2$ this architecture reduces to the two-layer MLP studied in \cref{sec:two-layer-mlp}.

We now show that $f_{\mathrm{mlp}}$ can learn the group composition task with $H=\mathcal{O}(|G|^{\frac{3}{2}})$ by explicitly constructing a solution within this architecture.
Like the RNN, our construction will perform $k-1$ binary group compositions; however, it does so in parallel along a balanced tree, reducing the depth of the computation from $k$ steps in time to $\log k$ layers:
\begin{equation}
    \label{eq:tree-comp}
    g_1 \cdots g_k
    =
    \bigl((g_1 \cdot g_2)\cdot(g_3 \cdot g_4)\bigr)\cdots(g_{k-1}\cdot g_k).
\end{equation}
As in \cref{sec:recurrent}, we use the building blocks $W_{\mathrm{in}}^{\mathrm{mlp}} \in \mathbb{R}^{H \times 2|G|}$ and $W_{\mathrm{out}}^{\mathrm{mlp}} \in \mathbb{R}^{|G| \times H}$ of a two-layer MLP that perfectly learns binary group composition and construct,
\begin{equation}
\label{eq:merged-binary}
    W_{\mathrm{merge}} = W_{\mathrm{in}}^{\mathrm{mlp}}\bigl(\mathbf{I}_2 \otimes W_{\mathrm{out}}^{\mathrm{mlp}}\bigr)
    \in \mathbb{R}^{H \times 2H}.
\end{equation}
We then set the weights of the depth-$L$ multilayer MLP with $k=2^L$ to be block-diagonal lifts of these maps:
\begin{equation}
\label{eq:W-ell-kron}
\begin{aligned}
W^{(1)} &= \mathbf{I}_{k/2}\otimes W_{\mathrm{in}}^{\mathrm{mlp}}, \\
W^{(\ell)} &= \mathbf{I}_{k/2^\ell}\otimes W_{\mathrm{merge}}, \qquad \ell = 2,\dots,L, \\
W^{(L+1)} &= W_{\mathrm{out}}^{\mathrm{mlp}} .
\end{aligned}
\end{equation}
As in \cref{sec:recurrent}, because $W_{\mathrm{in}}^{\mathrm{mlp}}$ and $W_{\mathrm{out}}^{\mathrm{mlp}}$ are aligned with the irreducible representations of $G$, the effective merge operator $W_{\mathrm{merge}}$ is permutation-similar to a block-diagonal matrix with blocks indexed by irreps.
As a result, each irrep is composed independently throughout the tree.

% DISCUSSION ON TRANSFORMER ARCHITECTURES
\subsection{Transformers Can Learn Algebraic Shortcuts}
\label{sec:transformer}
Given the prominence of the transformer architecture, it is natural to ask how such models solve the sequential group composition task.
% , since they can in principle implement both the sequential and parallel composition strategies discussed above.
%
Related work by \citet{liu2022transformers} studies how transformers simulate finite-state semiautomata, a generalization of group composition.
They show that logarithmic-depth transformers can simulate all semiautomata, and that for the class of \emph{solvable} semiautomata, constant-depth simulators exist at the cost of increased width.
Their logarithmic-depth construction is essentially the parallel divide-and-conquer strategy underlying our multilayer MLP construction.
Their constant-depth construction instead relies on decompositions of the underlying algebraic structure, suggesting that analogous constant-depth shortcuts should exist for sequential group composition over \emph{solvable groups}.
Characterizing these algebraic shortcuts explicitly, and understanding when gradient-based training biases transformers toward such shortcuts rather than the sequential or parallel composition strategies, is an interesting question.
A recent work in this direction \cite{mccracken2026uncovering} discussed how the Chinese Remainder Theorem--which can be seen as an instance of a decomposition/filtration---can be exploited for shortcuts in the modular addition task.

\section{Discussion}
\label{sec:discussion}

This work was motivated by a central question in modern deep learning: how do neural networks trained over sequences acquire the ability to perform structured operations, such as arithmetic, geometric, and algorithmic computation?
To gain insight into this question, we introduced the \emph{sequential group composition} task and showed that this task can be order-sensitive, provably requires nonlinear architectures (\cref{sec:group-composition-task}), admits tractable feature learning (\cref{sec:two-layer-mlp}), and reveals an interpretable benefit of depth (\cref{sec:depth}).
%
% To gain insight into this question, we introduced the \emph{sequential group composition} task as a controlled and analytically tractable setting for studying the interaction between data, architecture, and optimization.
%
% Within this setting, we proved that under gradient descent, two-layer networks learn group structure through stepwise dynamics, acquiring irreducible representations one at a time in an order determined by the Fourier statistics of the encoding.
% %
% Extending to sequences reveals how architectural inductive biases govern efficiency: shallow networks require width exponential in sequence length, whereas deep networks exploit associativity to compose transformations sequentially or in parallel.
% %
% Together, these results suggest that structured computation can emerge from data through greedy learning dynamics that align with the compositional structure of the task.
% %
% From this perspective, sequential group composition serves not as an end in itself, but as a window into the mechanics of deep learning.

\textbf{Limitations.}
Our theory in \cref{sec:two-layer-mlp} is developed in a controlled setting: finite groups, orbit-based encodings, polynomial activations, mean-squared loss, full-batch training, and vanishing initialization.
These assumptions make the feature-learning dynamics analytically tractable, but they also limit the direct scope of our results.
Extending the analysis beyond this setting remains an important direction for future work; see \cref{app:assumption} for preliminary theoretical and empirical evidence beyond some of these assumptions.
A second limitation is that our results for deep networks in \cref{sec:depth} are constructive rather than dynamical: they show that these architectures can exploit associativity to obtain efficient solutions, but do not prove that gradient descent finds them.
While the RNN experiment in \cref{fig:deep-rnn} provides partial empirical evidence that gradient descent can recover structure consistent with our construction, substantially more evidence would be needed to establish that such associativity-based solutions are found reliably.
Developing an AGF-style dynamical theory for recurrent and deep networks remains a major open challenge \cite{kunin2025alternating}.

\textbf{From groups to semiautomata.} 
Groups are only one corner of algebraic computation: they correspond to reversible dynamics, where each input symbol induces a bijection on the state space.
More generally, a \emph{semiautomaton} is a triple $(Q, \Sigma, \delta)$, where $Q$ is a set of states, $\Sigma$ is an alphabet, and $\delta \colon Q \times \Sigma \to Q$ is a transition map. 
The collection of all maps $\delta(\cdot,\sigma)$ forms a \emph{transformation semigroup} on $Q$. 
Unlike groups, this semigroup can contain both reversible permutation operations and irreversible operations such as resets. 
% More generally, as mentioned in \cref{sec:transformer}, a \emph{semiautomaton} is an algebraic structure that can contain both reversible permutation operations and irreversible operations such as resets. 
% 
Extending our framework from groups to semiautomata would therefore allow us to study how networks learn both reversible and irreversible computations.

\textbf{From semiautomata to formal grammars.} 
Semiautomata generate exactly the class of regular languages, but many symbolic tasks require richer structures.
A \emph{formal grammar} $(V, \Sigma, R, S)$ is defined with nonterminals $V$, terminals $\Sigma$, production rules $R$, and start symbol $S$. 
% % 
% Semiautomata generate exactly the class of regular languages, but many symbolic tasks require richer structures that are more naturally captured by \emph{formal grammars}.
% 
Restricting the form of the rules recovers the Chomsky hierarchy:  
regular grammars (equivalent to finite automata), context-free grammars (captured by pushdown automata).%, and so on.
% 
% Where regular languages can be learned through finite-state dynamics, context-free languages such as the Dyck language of balanced parentheses require a stack to represent unbounded depth. 
% 
This marks a shift from \emph{associativity} as the key inductive bias to \emph{recursion}: networks must learn to encode and apply hierarchical rules.

Taken together, these extensions raise the question of how far our dynamical analysis of sequential group composition can be extended toward language-related tasks.

\clearpage
\section*{Acknowledgements}
We thank Jason D. Lee, Flavio Martinelli, and Eric J. Michaud for helpful conversations.
This work was partially supported by the Wallenberg AI, Autonomous Systems and Software Program (WASP) funded by the Knut and Alice Wallenberg Foundation, and by the Miller Institute for Basic Research in Science, University of California, Berkeley.
Nina is partially supported by NSF grant 2313150 and the NSF CAREER Award 240158. Francisco is supported by NSF grant 2313150. Adele is supported by NSF GRFP and NSF grant 240158.

\section*{Impact Statement}
This paper presents work whose goal is to advance the field of Machine Learning. There are many potential societal consequences of our work, none which we feel must be specifically highlighted here.

\bibliography{example_paper}
\bibliographystyle{icml2026}

%%%%%%%%%%%%%%%%%%%%%%%%%%%%%%%%%%%%%%%%%%%%%%%%%%%%%%%%%%%%%%%%%%%%%%%%%%%%%%%
%%%%%%%%%%%%%%%%%%%%%%%%%%%%%%%%%%%%%%%%%%%%%%%%%%%%%%%%%%%%%%%%%%%%%%%%%%%%%%%
% APPENDIX
%%%%%%%%%%%%%%%%%%%%%%%%%%%%%%%%%%%%%%%%%%%%%%%%%%%%%%%%%%%%%%%%%%%%%%%%%%%%%%%
%%%%%%%%%%%%%%%%%%%%%%%%%%%%%%%%%%%%%%%%%%%%%%%%%%%%%%%%%%%%%%%%%%%%%%%%%%%%%%%
\newpage
\appendix
\onecolumn

% A - Harmonic Analysis
\section{Additional Background on Harmonic Analysis over Groups}
\label{app:harmonic}

Here, we summarize the main properties of the Fourier transform over (finite) groups (see \cref{def:fourier_main}): 
\begin{itemize}
    \item \textit{Diagonalization.}
    The matrix $F$ simultaneously block-diagonalizes $\lambda(g)$ for all $g \in G$:
    \begin{equation}
        \label{eq:block-diagonal-regular-rep}
        F^\dagger \lambda(g) F
        =
        \bigoplus_{\rho \in \irr{G}}
        \frac{|G|}{n_{\rho}}\, \underbrace{\rho(g) \oplus \cdots \oplus \rho(g)}_{\text{$n_\rho$ copies}}.
    \end{equation}
    Constants $|G|$ and $n_\rho$ in \cref{eq:block-diagonal-regular-rep} are sometimes absorbed into the definition of $F$; here they are included in the Hermitian product for convenience.
    
    \item \textit{Convolution theorem.}  
    For $x, y \in \mathbb{C}^G$, the \emph{group convolution} $\star : \mathbb{C}^G \times \mathbb{C}^G \to \mathbb{C}^G$ is defined by
    \begin{equation}
        (x \star y)[g] = x^\dagger \lambda(g)^\intercal y = \sum_{h \in G} \overline{x[h]}\, y[gh].
    \end{equation}
    That is, $(x \star y)[g]$ computes the inner product between $x$ and the left-translated version of $y$ under the regular representation $\lambda(g)$.  
    Then, for every $\rho \in \irr{G}$,
    \begin{equation}
    \label{eq:convthm}
        \widehat{x \star y}[\rho] = \widehat{x}[\rho]^\dagger \widehat{y}[\rho].
    \end{equation}
    In other words, convolution in the group domain corresponds to matrix multiplication in the Fourier domain.
    
    \item \textit{Plancherel theorem.}  
    For $\rho \in \irr{G}$ and $A, B \in \mathbb{C}^{n_\rho \times n_\rho}$, define the normalized Frobenius Hermitian product $\langle A, B \rangle_\rho = n_\rho\, \mathrm{Tr}(A^\dagger B)$, which induces the inner product $  \frac{1}{|G|} \sum_{\rho \in \irr{G}} \langle \cdot, \cdot \rangle_\rho$ over $\bigoplus_{\rho \in \irr{G}} \mathbb{C}^{n_\rho \times n_\rho}$.
    With respect to this inner product and the standard Hermitian inner product on $\mathbb{C}^G$, the Fourier transform is an invertible unitary operator between $\mathbb{C}^G$ and its frequency-domain.
    In other words, for all $x, y \in \mathbb{C}^G$,
    \begin{equation}
    \label{eq:plancha}
        \langle x, y\rangle = \frac{1}{|G|}\sum_{\rho \in \irr{G}} \langle \widehat{x}[\rho], \widehat{y}[\rho]\rangle_\rho.
    \end{equation}
    \item \emph{Schur orthogonality relations.}
    % 
    % The relations between irreducible unitary representations coming from the unitarity of the Fourier transform are known as \emph{Schur orthogonality} relations.
    Explicitly, for two irreducible representations $\rho_1, \rho_2 \in \irr{G}$ and two matrices $A_1 \in \mathbb{C}^{n_{\rho_1} \times n_{\rho_1} }$, $A_2 \in \mathbb{C}^{n_{\rho_2} \times n_{\rho_2} }$, it holds that: 
    \begin{equation}\label{eq:schur_ortho}
        \sum_{g \in G} \left\langle \rho_1(g)^\dagger, A_1\right\rangle_{\rho_1} \left\langle \rho_2(g)^\dagger, A_2\right\rangle_{\rho_2} = \begin{cases}
            |G| \left\langle \overline{A_1}, A_2 \right\rangle_{\rho_1}  & \rho_1 = \overline{\rho_2}, \\
            0 &  \rho_1 \not = \overline{\rho_2}. 
        \end{cases}
    \end{equation}
    \item \textit{Properties of the character.}
    The \emph{character} of a representation $\rho$ is the class function $\chi_\rho(g) := \mathrm{Tr}(\rho(g))$.
    A useful fact is that the group Fourier transform of $\chi_\rho$ satisfies
    \begin{equation}
        \widehat{\chi_\rho}[\rho']
        = 
        \begin{cases}
            \frac{|G|}{n_\rho}\, I, & \rho = \rho', \\[4pt]
            0, & \rho \neq \rho'. 
        \end{cases}
    \end{equation}
\end{itemize}

\subsection{Non-linearity of the Task}
\label{app:nonlin}
We now prove that the sequential group composition task can not be implemented by a linear map.   
\begin{lemma} 
Assume that $\widehat{x}[\rho_{\mathrm{triv}}] = \langle  x, \mathbf{1} \rangle = 0$, but $x \not = 0$. There is no linear map $\mathbb{R}^{k |G|} \rightarrow \mathbb{R}^{|G|}$ sending $x_{\mathbf{g}}$ to $x_{g_1 \cdots g_k}$ for all $\mathbf{g} \in G^k$.
\end{lemma}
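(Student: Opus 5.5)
Suppose, for contradiction, that a linear map $T\colon \mathbb{R}^{k|G|}\to\mathbb{R}^{|G|}$ satisfies $T(x_{\mathbf{g}}) = x_{g_1\cdots g_k}$ for all $\mathbf{g}\in G^k$. Split $T$ column-wise into blocks, $T = [T_1 \mid \cdots \mid T_k]$ with $T_j\in\mathbb{R}^{|G|\times|G|}$. Then the hypothesis reads
\begin{equation*}
    \sum_{j=1}^k T_j\, x_{g_j} = x_{g_1\cdots g_k} \qquad \text{for all } (g_1,\ldots,g_k)\in G^k.
\end{equation*}
We average this identity over one coordinate while keeping the others fixed. Mean centering makes the averaged input vanish, while the group structure turns the averaged output into an average over the whole orbit.

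Fix $j$ and the elements $g_i$ for $i\neq j$, and sum over $g_j\in G$. On the left, $\sum_{g\in G} x_g = \sum_{g}\lambda(g)^\intercal x$. Each coordinate of this vector equals $\sum_{g} x[\,\cdot\,]$ taken over all of $G$, because $\lambda(g)$ is a permutation matrix and, as $g$ ranges over $G$, every entry of $x$ visits every position exactly once. Hence $\sum_g x_g = \langle x,\mathbf{1}\rangle \mathbf{1} = 0$. The left side therefore becomes $|G|\sum_{i\neq j} T_i x_{g_i}$.

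On the right, as $g_j$ ranges over $G$, the product $g_1\cdots g_k = a\,g_j\,b$ (with $a,b$ fixed) also ranges over all of $G$. So the right side is again $\sum_{g} x_g = 0$. This gives $\sum_{i\neq j} T_i x_{g_i} = 0$ for all choices of the remaining elements.

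Subtracting this from the original identity yields $T_j x_{g_j} = x_{g_1\cdots g_k}$ for all $\mathbf{g}$. When $k\ge 2$, the left side depends only on $g_j$, while the right side changes as we vary some other $g_i$, so we get $x_{g} = x_{g'}$ for all $g,g'\in G$. Then $x$ is fixed by every $\lambda(g)^\intercal$, hence constant, hence zero by mean centering. This contradicts $x\neq 0$. The case $k=1$ is excluded, since the identity map trivially works there; we assume $k\ge2$, as the task intends.

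The only step needing care is the orbit-sum computation $\sum_g \lambda(g)^\intercal x = \langle x,\mathbf 1\rangle\mathbf 1$. Equivalently, it is the statement that the trivial-irrep projection kills a mean-centered $x$, which follows from the character/Schur facts listed above or directly from the permutation structure. The second point is the final deduction that $x$ is constant. For this it suffices to take $j=1$ and vary $g_2$, which gives $x_{g_1 g_2}$ independent of $g_2$, so all $x_g$ are equal.
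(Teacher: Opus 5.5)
Your proof is correct, but it reaches the contradiction differently from the paper. The paper takes the inner product of the identity with the target itself and sums over all of $G^k$. The left side becomes $\sum_{\mathbf{g}}\|x_{g_1\cdots g_k}\|^2 = |G|^k\|x\|^2>0$. Each cross term $\sum_{\mathbf{g}}\langle x_{g_1\cdots g_k}, L_i x_{g_i}\rangle$ vanishes, because with $g_i$ fixed, summing the target over the remaining coordinates gives a multiple of $\sum_g x_g = 0$. In other words, over the uniform distribution on $G^k$ the target is orthogonal to every additive function $\sum_i L_i x_{g_i}$, so it cannot equal one without having zero norm.

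You instead sum the vector identity over a single coordinate. You use mean-centering on both the input block (to kill $T_j\sum_g x_g$) and the output. This peels off the other blocks and forces $T_j x_{g_j} = x_{g_1\cdots g_k}$. You then need two further steps: a vector fixed by the regular representation is constant, and mean-centering applied once more.

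The paper's route is shorter: it uses the orbit-sum fact only once, on the target side, and needs no invariance argument. Yours gives more explicit information about what a would-be linear solution would have to satisfy.

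Keep your remark that $k\ge 2$ is required. The lemma as stated omits this hypothesis, but the paper's proof silently relies on it: for $k=1$ the sum over $\mathbf{g}'\in G^{k-1}$ is the single term $x_{g_1}$, which does not vanish. The identity map is indeed a counterexample when $k=1$.
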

\begin{proof}
Suppose that $L \colon \mathbb{R}^{k |G|} \rightarrow \mathbb{R}^{|G|}$ is a linear map (i.e., a matrix) sending $x_{\mathbf{g}}$ to $x_{g_1 \cdots g_k}$ for all $\mathbf{g} \in G^k$. By linearity, we can split this map as $L x_{\mathbf{g}} = \sum_{i=1}^k L_i x_{g_i}$ for opportune $|G| \times |G|$ matrices $L_i$. Since $x \not = 0$, for all $\mathbf{g} \in G^k$, we have that $0 \not = \| x_{g_1 \cdots g_k}\|^2 = \langle x_{g_1 \cdots g_k}, Lx_\mathbf{g} \rangle = \sum_{i=1}^k \langle x_{g_1 \cdots g_k}, L_i x_{g_i}\rangle$. But since $\langle x, \mathbf{1} \rangle = 0$, we have
\begin{equation}
    \sum_{\mathbf{g} \in G^k}  \sum_{i=1}^k \langle x_{g_1 \cdots g_k}, Lx_{g_i}\rangle = \sum_{i=1}^k \sum_{g_i \in G} \left \langle \sum_{\mathbf{g}' \in G^{k-1}} x_{g_1 \cdots g_k} , L_i x_{g_i}\right\rangle = 0,
\end{equation}
where $\mathbf{g}'$ contains all the indices different from $i$. This leads to a contradiction. 
\end{proof}

% B - Feature Learning Dynamics
% \section{Proofs from \cref{sec:agfgroup}}
\section{Proofs of Feature Learning in Two-layer Networks (\cref{sec:two-layer-mlp})}
\label{app:mlp}
\subsection{Utility Maximization}
As explained in \cref{sec:agfgroup} we assume, inductively, that after the $t-1$ iterations of AGF, the function computed by the active neurons is, for $\mathbf{g} \in G^k, h \in G$: 
\begin{equation}
    \label{eq:initialform-app}
    f(x_{\mathbf{g}}; \Theta_\mathcal{A})[h] =\frac{1}{|G|} \sum_{\rho \in \mathcal{I}^{t-1}}  \left\langle \rho(g_1 \cdots g_k h)^\dagger , \widehat{x}[\rho] \right\rangle_{\rho},
\end{equation}
where $\mathcal{I}^{t-1} \subseteq \irr{G}$ is closed under conjugation. 

We begin by proving a useful identity. 
\begin{lemma}
\label{lemm:convform}
For $\mathbf{g} \in G^k$, we have: 
\begin{equation}\label{eq:formutl}
    \sum_{\mathbf{g} \in G^k }\left\langle w, x_{g_1 \cdots g_k} \right\rangle \prod_{i=1}^k   \langle u_i, x_{g_i}\rangle  = \frac{1}{|G|} \sum_{\rho \in \irr{G}}    \langle \widehat{u_1}[\rho]^\dagger \widehat{x}[\rho] \cdots \widehat{u_k}[\rho]^\dagger \widehat{x}[\rho] , \widehat{w}[\rho]^\dagger \widehat{x}[\rho] \rangle_\rho.
\end{equation}
    % \begin{equation}
    %     \sum_{g \in G}\langle u, x_g\rangle\langle w, x_g\rangle = \sum_{\rho \in \irr{G}}\frac{n_\rho}{|G|} \mathrm{Tr}\left(\widehat{w}[\rho]^\dagger\, \widehat{x}[\rho]\widehat{x}[\rho]^\dagger\, \widehat{u}[\rho]\right)
    % \end{equation}
\end{lemma}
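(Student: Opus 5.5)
The plan is to rewrite every inner product appearing on the left-hand side of \eqref{eq:formutl} as a group convolution, and then to collapse the $k$-fold sum into a single Plancherel pairing using the convolution theorem \eqref{eq:convthm}.

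\emph{Step 1: convolution form of the inner products.} Since $\lambda(g)\mathbf{e}_h = \mathbf{e}_{gh}$, we have $(x_g)[h] = (\lambda(g)^\intercal x)[h] = x[gh]$. Hence, for real $u$,
\begin{equation*}
\langle u, x_g\rangle = \sum_{h} u[h]\,x[gh] = (u\star x)[g].
\end{equation*}
Set $a_i := u_i \star x$ and $b := w\star x$. These are real signals on $G$, and by \eqref{eq:convthm} their Fourier coefficients are $\widehat{a_i}[\rho] = \widehat{u_i}[\rho]^\dagger\widehat{x}[\rho]$ and $\widehat{b}[\rho] = \widehat{w}[\rho]^\dagger\widehat{x}[\rho]$. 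The left-hand side of \eqref{eq:formutl} thus becomes
\begin{equation*}
S := \sum_{\mathbf{g}\in G^k} b[g_1\cdots g_k]\prod_{i=1}^k a_i[g_i].
\end{equation*}

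\emph{Step 2: peel off one variable at a time.} Define $c_{k+1} := b$ and $c_i := a_i \star c_{i+1}$ for $i = k, \ldots, 2$. Because $a_i$ is real, the definition of $\star$ gives $c_i[h] = \sum_{g} a_i[g]\, c_{i+1}[hg]$. Summing first over $g_k$, then over $g_{k-1}$, and so on by induction, the successive inner sums are exactly $c_k, c_{k-1}, \ldots$, evaluated at the running prefix product $g_1\cdots g_{i-1}$. After $k-1$ steps we are left with
\begin{equation*}
S = \sum_{g_1} a_1[g_1]\, c_2[g_1] = \langle a_1, c_2\rangle .
\end{equation*}
Iterating the convolution theorem gives $\widehat{c_2}[\rho] = \widehat{a_2}[\rho]^\dagger\cdots\widehat{a_k}[\rho]^\dagger\,\widehat{b}[\rho]$. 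Applying Plancherel \eqref{eq:plancha} then yields
\begin{equation*}
S = \frac{1}{|G|}\sum_\rho n_\rho \,\mathrm{Tr}\big(\widehat{a_1}[\rho]^\dagger\cdots\widehat{a_k}[\rho]^\dagger\,\widehat{b}[\rho]\big).
\end{equation*}
Substituting the Fourier coefficients from Step 1 gives an expression of the form $\frac{1}{|G|}\sum_\rho\langle \cdots,\widehat{w}[\rho]^\dagger\widehat{x}[\rho]\rangle_\rho$, which is the shape of the right-hand side.

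\emph{Main obstacle: the order of the matrix factors.} Since $\mathrm{Tr}(A_1^\dagger\cdots A_k^\dagger B) = \mathrm{Tr}((A_k\cdots A_1)^\dagger B)$, the naive computation above produces the product $\widehat{a_k}\cdots\widehat{a_1}$. The statement instead has $\widehat{a_1}\cdots\widehat{a_k}$. Resolving this is the one genuinely delicate point, since the order matters for non-Abelian $G$. I would recheck which side the regular representation acts on, relative to the exact normalization behind \eqref{eq:convthm} with $\widehat{x}[\rho] = \sum_g\rho(g)^\dagger x[g]$. Concretely, I would verify $\widehat{x\star y} = \widehat{x}^\dagger\widehat{y}$ directly from the definitions. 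If instead it comes out as $\widehat{y}\,\widehat{x}^\dagger$ (or with a transpose), the iterated product reverses and matches the stated order. Failing that, I would reverse the peeling order, starting the convolutions from $g_1$, which corresponds to composing on the other side.

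Everything else is routine bookkeeping: the functions involved are real, so no conjugations are lost, and the inversion formula $x[g] = \frac{1}{|G|}\sum_\rho\langle\rho(g)^\dagger,\widehat{x}[\rho]\rangle_\rho$ is consistent with the Plancherel normalization used above.
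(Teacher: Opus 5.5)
Your Steps 1 and 2 are the paper's own argument: write $\langle u_i,x_{g_i}\rangle=(u_i\star x)[g_i]$, peel the sum into $\langle u_1\star x,(u_2\star x)\star(\cdots\star((u_k\star x)\star(w\star x)))\rangle$, then apply Plancherel and the convolution theorem. Your bookkeeping, ending at $\frac{1}{|G|}\sum_\rho n_\rho\mathrm{Tr}(\widehat{a_1}^\dagger\cdots\widehat{a_k}^\dagger\widehat{b})$, is correct. The paper's proof simply asserts the printed order at the last step without tracking it, so you were right not to.

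The discrepancy you flagged is real, and neither of your remedies can remove it. The convolution theorem really is $\widehat{x\star y}[\rho]=\widehat{x}[\rho]^\dagger\widehat{y}[\rho]$: in $\sum_{g,h}\rho(g)^\dagger\,\overline{x[h]}\,y[gh]$ substitute $g=g'h^{-1}$ and use $\rho(g'h^{-1})^\dagger=\rho(h)\rho(g')^\dagger$. Peeling from $g_1$ instead uses $h\mapsto\sum_g a_1[g]\,b[gh]$, whose transform is $\widehat{b}\,\widehat{a_1}^\dagger$. That route ends at $\mathrm{Tr}(\widehat{a_k}^\dagger\widehat{b}\,\widehat{a_1}^\dagger\cdots\widehat{a_{k-1}}^\dagger)$, which is the same trace up to a cyclic shift.

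In fact, under the main-text convention $x_g=\lambda(g)^\intercal x$, the identity as printed is false for non-abelian $G$. Take $k=2$, elements with $ab\neq ba$, and $x=\mathbf{e}_1$ (the lemma imposes nothing on $x$). Then $\langle u,x_g\rangle=u[g^{-1}]$ and $\widehat{x}[\rho]=I$. By Fourier inversion, the printed right-hand side equals $\sum_{g_1,g_2}w[(g_2g_1)^{-1}]\,u_1[g_1^{-1}]\,u_2[g_2^{-1}]$. The left-hand side is $\sum_{g_1,g_2}w[(g_1g_2)^{-1}]\,u_1[g_1^{-1}]\,u_2[g_2^{-1}]$. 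With $u_1=\mathbf{e}_{a^{-1}}$, $u_2=\mathbf{e}_{b^{-1}}$, $w=\mathbf{e}_{(ab)^{-1}}$, the left side is $1$ and the right side is $0$.

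What your argument proves is the reversed identity
\begin{align*}
\sum_{\mathbf{g}\in G^k}\langle w,x_{g_1\cdots g_k}\rangle\prod_{i=1}^k\langle u_i,x_{g_i}\rangle
&=\frac{1}{|G|}\sum_{\rho}\big\langle \widehat{u_k}[\rho]^\dagger\widehat{x}[\rho]\cdots\widehat{u_1}[\rho]^\dagger\widehat{x}[\rho],\ \widehat{w}[\rho]^\dagger\widehat{x}[\rho]\big\rangle_\rho\\
&=\frac{1}{|G|}\sum_{\rho}n_\rho\,\mathrm{Tr}\big(\widehat{x}^\dagger\widehat{u_1}\,\widehat{x}^\dagger\widehat{u_2}\cdots\widehat{x}^\dagger\widehat{u_k}\,\widehat{w}^\dagger\widehat{x}\big),
\end{align*}
where in the last line every coefficient is evaluated at $\rho$. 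This is precisely the objective the paper goes on to use in \eqref{eq:optimcubical}.

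The printed order is what you get under the other convention $x_g=\lambda(g)x$, which the paper's data appendix uses. There $\langle u,x_g\rangle=(x\star u)[g]$ and $\widehat{a_i}=\widehat{x}^\dagger\widehat{u_i}$, and the printed formula holds because the total is real. So finish by stating the reversed-order identity, or by switching convention, rather than hunting for an error in your derivation.
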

\begin{proof}
Note that $\langle u_i, x_{g_i}\rangle  = (u_i \star x)[g_i]$. We can rewrite the left-hand side of \eqref{eq:formutl} as: 
\begin{equation}
\begin{aligned}
     \sum_{\mathbf{g} \in G^k }\left\langle w, x_{g_1 \cdots g_k} \right\rangle \prod_{i=1}^k   \langle u_i, x_{g_i}\rangle &= \sum_{\mathbf{g}'\in G^{k-1}}  \prod_{i=1}^{k-1}  (u_i \star x)[g_i] \sum_{g_k \in G}  (w \star x)[g_1 \cdots g_k] \ (u_k \star x)[g_k] \\
     &= \sum_{\mathbf{g}'\in G^{k-1}}  \left( (u_k \star x) \star (w \star x)\right)[g_1 \cdots g_{k-1}]  \prod_{i=1}^{k-1}    (u_i \star x)[g_i], 
\end{aligned}
\end{equation}
%     \begin{align}
%         \sum_{g \in G}\langle u, x_g\rangle\langle w, x_g\rangle &= \langle u \star x, w \star x\rangle &&\text{Definition of data and convolution}\\
%         &= \sum_{\rho \in \irr{G}}\frac{n_\rho}{|G|} \mathrm{Tr}\left(\widehat{u \star x}[\rho]^\dagger \widehat{w \star x}[\rho]\right) &&\text{Plancherel theorem} \\
%         &= \sum_{\rho \in \irr{G}}\frac{n_\rho}{|G|} \mathrm{Tr}\left(\widehat{w}[\rho]^\dagger\, \widehat{x}[\rho]\widehat{x}[\rho]^\dagger\, \widehat{u}[\rho]\right)&&\text{Convolution theorem}
%     \end{align}
where $\mathbf{g'}=(g_1, \ldots, g_{k-1})$. By iterating this argument, we conclude that the above expression equals
\begin{equation}
\label{eq:manyconvs}
    \left \langle  u_1 \star x ,  \ \left( \cdots (u_{k-1} \star x) \star \left(  (u_k \star x) \star (w \star x) \right)\right) \right\rangle .
\end{equation}
By Plancharel \eqref{eq:plancha}, this scalar product can be phrased as a sum of scalar products between the Fourier coefficients. The desired expression \eqref{eq:formutl} follows then from the convolution theorem \eqref{eq:convthm} applied, iteratively, to the convolutions appearing in \eqref{eq:manyconvs}. 
\end{proof}

We now compute the utility function at the next iteration of AGF. 

\begin{lemma}
    \label{lemma:simplified-utility-in-freq}
At the $t^{\mathrm{th}}$ iteration of AGF, the utility function of $f(\bullet, \theta)$ for a single neuron parametrized by $\theta = (u_1, \ldots, u_k, w)$ coincides with the utility of $f(\bullet, \theta)^{(\times)}$, and can be expressed as:   
\begin{equation}\label{eq:utilmessy_mainnew}
    \frac{ k!}{|G|^{k+1}} \sum_{\irr{G} \setminus \mathcal{I}^{t-1}}  \left\langle \widehat{u_1}[\rho]^\dagger \widehat{x}[\rho] \cdots \widehat{u_k}[\rho]^\dagger \widehat{x}[\rho] , \ \widehat{w}[\rho]^\dagger \widehat{x}[\rho] \right\rangle_{\rho}.  
\end{equation}
\end{lemma}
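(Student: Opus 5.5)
We need to compute the utility $\mathcal{U}(\theta) = \frac{1}{|G|^k}\sum_{\mathbf{g}} \langle f(x_\mathbf{g};\theta), r(\mathbf{g})\rangle$ with residual $r(\mathbf{g}) = x_{g_1\cdots g_k} - f(x_\mathbf{g};\Theta_\mathcal{A})$. The plan has three steps. First, identify the residual in the Fourier domain. Second, show that every non-multilinear monomial in $\sigma(\sum_j \langle u_j, x_{g_j}\rangle)$ has zero correlation with the residual. Third, evaluate the multilinear term via \cref{lemm:convform}.

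\textbf{Step 1: the residual.} By Fourier inversion (Plancherel applied to $\langle e_h, x_g\rangle$), we have
\[
x_{g}[h] = \frac{1}{|G|}\sum_{\rho\in\irr{G}}\langle \rho(gh)^\dagger, \widehat{x}[\rho]\rangle_\rho .
\]
Comparing with the inductive form \eqref{eq:initialform-app} gives
\[
r(\mathbf{g})[h] = \frac{1}{|G|}\sum_{\rho\notin\mathcal{I}^{t-1}}\langle \rho(g_1\cdots g_k h)^\dagger,\widehat{x}[\rho]\rangle_\rho .
\]
This is real because $\mathcal{I}^{t-1}$ is closed under conjugation, and the trivial irrep is excluded because $\mathcal{I}^0 \ni \rho_{\mathrm{triv}}$.

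\textbf{Step 2: cross terms vanish.} Expand $\sigma(z)=\sum_{m=1}^k a_m z^m$ with $a_k=1$. Then $z^m$ with $z=\sum_j\langle u_j,x_{g_j}\rangle$ expands into products $\prod_j \langle u_j,x_{g_j}\rangle^{\alpha_j}$ with $|\alpha|=m$. The only term with every $\alpha_j=1$ occurs at $m=k$, with multinomial coefficient $k!$; this is exactly $f^{(\times)}$. Every other monomial has some index $j$ with $\alpha_j=0$. For such a term, fix all $g_i$ with $i\neq j$ and sum over $g_j\in G$. The monomial does not depend on $g_j$, and as $g_j$ ranges over $G$, the product $g_1\cdots g_k$ ranges over all of $G$. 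Hence $\sum_{g_j} r(\mathbf{g})[h] = \frac{1}{|G|}\sum_{\rho\notin\mathcal{I}^{t-1}}\langle \sum_{g}\rho(g)^\dagger,\widehat{x}[\rho]\rangle_\rho$. This vanishes because $\sum_g\rho(g)=0$ for every nontrivial irrep, by Schur orthogonality against the trivial irrep. (Alternatively, $\sum_{g_j} x_{g_1\cdots g_k} = (\sum_g x_g)$, which is a multiple of $\langle x,\mathbf{1}\rangle \mathbf{1}=0$, and the same argument applies to $f(\cdot;\Theta_\mathcal{A})$.) Thus $\mathcal{U}(\theta)$ equals the utility of $f^{(\times)}$.

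\textbf{Step 3: the multilinear term.} It remains to compute
\[
\frac{k!}{|G|^k}\sum_{\mathbf{g}}\langle w,r(\mathbf{g})\rangle\prod_i\langle u_i,x_{g_i}\rangle .
\]
Write $\langle w, r(\mathbf{g})\rangle = \langle w, x_{g_1\cdots g_k}\rangle - \langle w, f(x_\mathbf{g};\Theta_\mathcal{A})\rangle$. The second piece equals $\langle w, y_{g_1\cdots g_k}\rangle$, where $y$ is the projection of $x$ onto the Fourier components in $\mathcal{I}^{t-1}$, i.e.\ $\widehat{y}[\rho]=\widehat{x}[\rho]$ for $\rho\in\mathcal{I}^{t-1}$ and $0$ otherwise. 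So the residual corresponds to the orbit of $x-y$, whose Fourier coefficients are those of $x$ supported on $\irr{G}\setminus\mathcal{I}^{t-1}$.

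Now apply \cref{lemm:convform}, or rather its proof, with $w\star(x-y)$ in place of $w\star x$ in the last convolution. By the convolution theorem, $\widehat{w\star(x-y)}[\rho]=\widehat{w}[\rho]^\dagger\widehat{x}[\rho]$ for $\rho\notin\mathcal{I}^{t-1}$ and $0$ otherwise. This restricts the Plancherel sum to $\irr{G}\setminus\mathcal{I}^{t-1}$, and combining the prefactor $k!/|G|^k$ with the $1/|G|$ from the lemma yields \eqref{eq:utilmessy_mainnew}.

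The main obstacle I anticipate is bookkeeping in Step 3. The lemma as stated is written for $x$ in both slots, so I must check that the convolution chain in its proof only uses $x$ in the $w$-slot at the final step. This lets me replace that slot by $x-y$ while keeping $x$ in the $u_i$ slots. I also need to confirm that the conjugation conventions ($\rho$ versus $\overline\rho$ in the Fourier coefficient of $x_g$) make the restricted index set exactly $\irr{G}\setminus\mathcal{I}^{t-1}$; closure of $\mathcal{I}^{t-1}$ under conjugation is what makes this work in either convention.
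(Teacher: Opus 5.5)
Your proposal is correct and follows essentially the paper's argument. You expand $\sigma$, kill every non-multilinear monomial by summing over a missing $g_j$ and using $\sum_{g}\rho(g)=0$ together with mean-centering, and then evaluate the surviving $k!\prod_i\langle u_i,x_{g_i}\rangle$ term through the convolution chain of \cref{lemm:convform}; the only difference is cosmetic. You write the residual as the orbit of the Fourier-truncated vector $x-y$ and restrict the Plancherel sum directly, whereas the paper applies \cref{lemm:convform} to $x$ itself and cancels the summands indexed by $\mathcal{I}^{t-1}$ against the $\chi^\rho$ terms, which is equivalent.
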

\begin{proof}
By the definition of utility and the inductive hypothesis, we have: 
\begin{equation}\label{eq:utilityexpr}
\begin{aligned}
      \mathcal{U}^{t}(\theta) &=   \frac{1}{|G|^{k}}  \sum_{ \mathbf{g} \in G^{k}}   \sigma \left( \sum_{i=1}^k   \langle u_i, x_{g_i}\rangle \right) \left( \left\langle w, x_{g_1 \cdots g_k} \right\rangle -  \frac{1}{|G|} \sum_{\rho \in \mathcal{I}^{t-1}} \langle w ,  \chi_{g_1 \cdots g_k}^\rho \rangle \right),
\end{aligned}
\end{equation}
where $\chi^\rho[p] = \langle \rho(p)^\dagger , \widehat{x}[\rho] \rangle_\rho$. We now expand $\sigma ( \sum_{i=1}^k   \langle u_i, x_{g_i} \rangle )$ into a sum of monomials (of degree $\leq k$) in the terms $\langle u_1, x_{g_1} \rangle, \ldots,  \langle u_k, x_{g_k} \rangle$. The only monomial where all the group elements $g_1, \ldots, g_k$ appear is $k! \prod_{i=1}^k \langle u_i, x_{g_i}\rangle$. For any other monomial, the term $ \langle w , \chi^\rho_{g_1 \cdots g_k} \rangle $ will vanish, since $\sum_{g \in G} \rho(g) = 0$. 
% , while for $\rho = 1$ this term will coincide with $\sum_{g \in G}x[g] \sum_{h \in G}w[h]$, and will thus cancel out with $\langle w, x_{g_1 \cdots g_k} \rangle $. 
Thus, \eqref{eq:utilityexpr} reduces to the utility of $f(\bullet, \theta)^{(\times)}$, i.e.: 
\begin{equation}\label{eq:utilityexpanded}
 \frac{k!}{|G|^{k }}   \sum_{\mathbf{g} \in G^k}    \prod_{i=1}^k   \langle u_i, x_{g_i}\rangle \left( \left\langle w, x_{g_1 \cdots g_k} \right\rangle -  \frac{1}{|G|} \sum_{\rho \in \mathcal{I}^{t-1}} \langle w ,   \chi_{g_1 \cdots g_k}^\rho \rangle \right).
\end{equation}
We can expand the above expression by using Lemma \ref{lemm:convform}. For each $\rho \in \mathcal{I}^{t-1}$, the term containing $\langle w , \chi_{g_1 \cdots g_k}^\rho \rangle$ will cancel out the summand indexed by $\rho$ in  the right-hand side of \eqref{eq:formutl}. In conclusion, \eqref{eq:utilityexpanded} reduces to the desired expression \eqref{eq:utilmessy_mainnew}. 
\end{proof}

\begin{theorem}\label{thm:utility-maximizing-frequencies}
Let
% \begin{equation}
% \rho_* = \underset{\rho \in \irr{G} \setminus \mathcal{I}^{t-1}}{\textnormal{argmax}} \ \frac{1}{n_\rho^{k}}  \| \widehat{x}[\rho]\|_\rho^{k+1}
% \end{equation}
\begin{equation}
\label{eq:rhostar}
\rho_* = \underset{\rho \in \irr{G} \setminus \mathcal{I}^{t-1}}{\textnormal{argmax}} \ (n_\rho C_\rho)^{\frac{1-k}{2}} \ \| \widehat{x}[\rho] \|_{\textnormal{op}}^{k+1},
\end{equation}
where $\| \bullet\|_{\textnormal{op}}$ denotes the operator norm, and $C_\rho $ is a coefficient which equals to $1$ if $\rho$ is real, and to $2$ otherwise. The unit parameter vectors $\theta = (u_1, \ldots, u_k, w)$ that maximize the utility function $\mathcal{U}^t$ take the form, for $g \in G$, 
    \begin{equation}\label{eq:utilitymaxshifts_new}
     \begin{aligned}
            u_j[g] &=  \rel \ \left\langle \rho_*(g)^\dagger, s_j \right\rangle_{\rho_*}, \\
            w[g] &=  \rel \ \left\langle \rho_*(g)^\dagger, s_w \right\rangle_{\rho_*} ,
        \end{aligned}
    \end{equation}
where $s_j, s_w \in \mathbb{C}^{n_{\rho_*} \times n_{\rho_*}}$ are matrices. When $\rho$ is real ($\rho_* = \overline{\rho_*}$), these matrices are real. 
\end{theorem}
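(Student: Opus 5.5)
We start from Lemma~\ref{lemma:simplified-utility-in-freq}, which expresses the utility as a sum over the unlearned irreps $\rho \in \irr{G}\setminus\mathcal{I}^{t-1}$ of terms
\[
T_\rho = \left\langle \widehat{u_1}[\rho]^\dagger \widehat{x}[\rho] \cdots \widehat{u_k}[\rho]^\dagger \widehat{x}[\rho],\ \widehat{w}[\rho]^\dagger \widehat{x}[\rho]\right\rangle_\rho .
\]
The plan is to bound each $T_\rho$ by the Fourier energy of the parameters placed at $\rho$. We then optimize how a unit budget of energy is distributed across irreps and across the $k+1$ factors.

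\textbf{Step 1 (per-irrep bound).} Write $a_{j,\rho} = \|\widehat{u_j}[\rho]\|_\rho$ and $b_\rho = \|\widehat{w}[\rho]\|_\rho$. Then
\[
|\mathrm{Tr}(A_1\cdots A_k B^\dagger)| \le \|A_1\|_{\mathrm{op}}\cdots\|A_{k-1}\|_{\mathrm{op}}\,\|A_k\|_F\|B\|_F .
\]
Combining this with $\|\cdot\|_{\mathrm{op}}\le\|\cdot\|_F$ and $\|A^\dagger \widehat{x}\|_F \le \|A\|_F\|\widehat{x}\|_{\mathrm{op}}$, and tracking the normalization $\langle\cdot,\cdot\rangle_\rho = n_\rho \mathrm{Tr}$, gives
\[
|T_\rho| \le n_\rho^{(1-k)/2}\,\|\widehat{x}[\rho]\|_{\mathrm{op}}^{k+1}\, a_{1,\rho}\cdots a_{k,\rho}\, b_\rho .
\]
Because the parameters are real, $\widehat{u}[\overline{\rho}] = \overline{\widehat{u}[\rho]}$. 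For non-real $\rho$ the terms at $\rho$ and $\overline{\rho}$ are complex conjugates, so they pair into $2\,\mathrm{Re}\,T_\rho$. By Plancherel, $\|u_j\|^2 = \frac{1}{|G|}\sum_\rho a_{j,\rho}^2$, and the pair $\{\rho,\overline\rho\}$ consumes energy $C_\rho a_{j,\rho}^2/|G|$. This bookkeeping is what produces the factor $C_\rho$.

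\textbf{Step 2 (allocation).} Rescale to $\alpha_{j,\rho} = \sqrt{C_\rho/|G|}\,a_{j,\rho}$, and similarly $\beta_\rho$ for $w$, so that the constraint becomes $\sum_{j,\rho}\alpha_{j,\rho}^2 + \sum_\rho\beta_\rho^2 = 1$. Summing over conjugate classes, the utility is then bounded by
\[
c\sum_{[\rho]} (C_\rho n_\rho)^{(1-k)/2}\|\widehat{x}[\rho]\|_{\mathrm{op}}^{k+1}\prod_j\alpha_{j,\rho}\,\beta_\rho ,
\]
with a constant $c$ independent of $\rho$. The power of $C_\rho$ should come out as $C_\rho\cdot C_\rho^{-(k+1)/2} = C_\rho^{(1-k)/2}$; we will check this exponent carefully. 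Next we apply AM–GM per irrep, $\prod_j\alpha_{j,\rho}\beta_\rho \le (E_\rho/(k+1))^{(k+1)/2}$, where $E_\rho$ is the energy placed at $[\rho]$. Since $k+1\ge 2$, the map $E\mapsto E^{(k+1)/2}$ is convex and superadditive. Therefore, under $\sum_\rho E_\rho = 1$, the bound is maximized by putting all energy on the single class with the largest coefficient, namely $\rho_*$. The distinctness assumption makes this maximizer unique, and the equality cases force every Fourier coefficient off $\{\rho_*,\overline{\rho_*}\}$ to vanish.

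\textbf{Step 3 (attainability and form).} We exhibit equality. Take the top singular vectors of $\widehat{x}[\rho_*]$ and choose $\widehat{u_j}[\rho_*]$ and $\widehat{w}[\rho_*]$ as rank-one matrices chaining these singular vectors so that all operator-norm inequalities are tight and the trace is real and positive. With all energy concentrated on $\rho_*,\overline{\rho_*}$, Fourier inversion gives $u_j[g] = \frac{1}{|G|}\sum_{\rho\in\{\rho_*,\overline{\rho_*}\}}\langle\rho(g)^\dagger,\widehat{u_j}[\rho]\rangle_\rho$. Realness folds this into $\mathrm{Re}\langle\rho_*(g)^\dagger,s_j\rangle_{\rho_*}$. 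When $\rho_*$ is real, the only constraint is that $u_j$ be real, and one can take $s_j$ real.

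\textbf{Main obstacle.} The delicate step is Step 1. We must get a sharp mixed operator/Frobenius inequality with exactly the right power of $n_\rho$ and $C_\rho$, and characterize its equality cases. These equality cases are needed to conclude that all maximizers, not merely some, are concentrated on $\rho_*$. To handle this, we would first reduce to $\widehat{x}[\rho]$ diagonal via its SVD, using unitary invariance of the inner product.
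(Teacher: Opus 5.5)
Your proposal is correct and follows essentially the paper's route: a per-irrep trace/Frobenius/operator-norm bound, pairing $\rho$ with $\overline{\rho}$ to produce $C_\rho$, AM--GM forcing equal norms, concentrating all energy on the single best irrep, and attaining the bound with rank-one matrices built from the top singular vectors of $\widehat{x}[\rho_*]$. The paper only orders these steps differently, normalizing per irrep to define $M_\rho$ and then maximizing $\sum_\rho M_\rho \alpha_\rho^{k+1}$ subject to $\sum_\rho \alpha_\rho^2 = 1$. One correction to your ``main obstacle'': you do not need the equality cases of the mixed-norm inequality. Once the global bound is attained, strict superadditivity of $E \mapsto E^{(k+1)/2}$ for $k \geq 2$ already forces every maximizer to have Fourier support on $\{\rho_*, \overline{\rho_*}\}$.
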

Note that the above argmax is well-defined since, by our assumptions on $x$ (see \cref{sec:agfgroup}), the maximizer of $ \| \widehat{x}[\rho]\|_\rho$ is unique up to conjugate.  
%     \in \textnormal{U}(n)$ are such that: 
%     \begin{equation}
%         n_t \trac  \left(\widehat{x}[t]^\dagger s_u \widehat{x}[t]^\dagger s_v s_w^\dagger \widehat{x}[t]\right) = \| \widehat{x}[t] \|_t^3. 
%     \end{equation}
\begin{proof}
For simplicity, denote $u_0 = w$. Using Lemma \ref{lemma:simplified-utility-in-freq} and by Plancharel, the optimization problem can be rephrased in terms of the Fourier transform as: 
\begin{equation}
\label{eq:optimcubical}
        \begin{aligned}
        & \text{maximize} && \frac{k!}{|G|^{k+1}} \sum_{\rho \in \irr{G} \setminus \mathcal{I}^{t-1}}  n_\rho \trac\left( \widehat{x}[\rho]^\dagger \widehat{u_1}[\rho] \cdots \widehat{x}[\rho]^\dagger \widehat{u_k}[\rho] \widehat{u_0}[\rho]^\dagger \widehat{x}[\rho] \right)\\
        & \text{subject to} &&   \sum_{i=0}^k \| u_i \|^2 = \frac{1}{|G|} \sum_{\rho \in \irr{G}} \sum_{i=0}^k \| \widehat{u_i}[\rho]\|_\rho^2 = 1. 
    \end{aligned}
\end{equation}
Recall that $\mathcal{I}^{t-1}$ is assumed to be closed by conjugation. Let $\mathcal{J} \subseteq \irr{G}$ be a set of representatives for irreps up to conjugate. Up to the multiplicative constant, the utility becomes:  
\begin{equation}
\label{eq:objrel}
\sum_{\rho \in \mathcal{J} \setminus \mathcal{I}^{t-1}}  n_\rho C_\rho \ \rel \ \trac\left( \widehat{x}[\rho]^\dagger \widehat{u_1}[\rho] \cdots \widehat{x}[\rho]^\dagger \widehat{u_k}[\rho] \widehat{u_0}[\rho]^\dagger \widehat{x}[\rho] \right). 
\end{equation}

Given an irrep $\rho$, define the coefficient $\alpha_\rho$ as $\alpha_\rho^2 =  \frac{C_\rho}{|G|} \sum_{i=0}^k \|\widehat{u}_i[\rho]\|_\rho^2$. The constraint becomes $\sum_{\rho \in \mathcal{J}} \alpha_\rho^2  = 1$. Moreover, denote $U_{i, \rho} = \widehat{u}_i[\rho] / \alpha_\rho$, so that
\begin{equation}
    \label{eq:constrmass}
 \sum_{i=0}^k \| U_{i, \rho}\|_\rho^2 = \frac{|G|}{C_\rho} .
\end{equation}
Let $M_\rho$ be the maximizer of $   n_\rho C_\rho \  \left| \rel \ \trac\left(   \widehat{x}[\rho]^\dagger U_{1, \rho} \cdots \widehat{x}[\rho]^\dagger U_{k, \rho} U_{0, \rho}^\dagger \widehat{x}[\rho]  \right)\right|$ subject to the constraint \eqref{eq:constrmass}. The original matrix optimization problem is bounded by the scalar optimization problem:   
\begin{equation}
        \begin{aligned}
        & \text{maximize} &&  \frac{k!}{|G|^{k+1}} \sum_{\rho \in \mathcal{J} \setminus \mathcal{I}^{t-1} } M_\rho \alpha_\rho^{k+1}\\
        & \text{subject to} && \sum_{\rho \in \mathcal{J}} \alpha_\rho^2 = 1.
    \end{aligned}
\end{equation}
This problem is solved, clearly, when $\alpha_\rho$ is concentrated in the irrep $\rho_* \in \mathcal{J} \setminus \mathcal{I}^{t-1}$ maximizing $M_\rho$, meaning that $\alpha_\rho = 0$ for $\rho \not = \rho_*$.  

We now wish to describe $M_\rho$. Recall that for complex square matrices $A, B$ we have $|\rel \ \trac (AB) | \leq |\trac (AB) | \leq \|A\|_{F}  \| B\|_{F}$ and $\|AB \|_F \leq \|A\|_{\textnormal{op}} \|B\|_F  \leq \|A\|_{F} \|B\|_F$, where $\| \bullet \|_F$ denotes the Frobenius norm. By iteratively applying these inequalities, we deduce:
% Recall that for complex square matrices $A_1, \ldots, A_m$, we have $|\rel \ \trac (A_1 \cdots A_m) | \leq |\trac (A_1 \cdots A_m) | \leq \|A_1\|_{p_1} \cdots \| A_2\|_{p_m}$, where $\| \bullet \|_p$ denotes the $p$-Shatten norm, and $p_1, \ldots, p_m$ are integers such that $\frac{1}{p_1} + \cdots + \frac{1}{p_m} = 1$. Since $\|\bullet \|_{\infty} = \| \bullet \|_{\textnormal{op}}$, we deduce: 
\begin{equation}
\label{eq:bigineq}
    n_\rho   C_\rho \left| \rel \ \trac\left(   \widehat{x}[\rho]^\dagger U_{1, \rho} \cdots \widehat{x}[\rho]^\dagger U_{k, \rho} U_{0, \rho}^\dagger \widehat{x}[\rho]  \right)\right| \leq   n_\rho  C_\rho \  \| \widehat{x}[\rho] \|_{\textnormal{op}}^{k+1} \ \prod_{i=0}^k \| U_{i, \rho}\|_{F}.
\end{equation}  
% Since $\| \bullet \|_2 = \| \bullet \|_F$, by applying the AM-GM inequality to the singular values of $U_{i, \rho}$, we deduce that, under the constraint \eqref{eq:constrmass}, the right-hand side of the above expression is maximized when all the $U_{i, \rho}$ are rank-$1$, and have the same Frobenius norm. In particular $\| U_{i, \rho}\|_F = \|U_{i, \rho} \|_{k+1} = (|G| / ( C_\rho n_\rho (k+1)))^{\frac{1}{2}}$. 
Under the constraint \eqref{eq:constrmass}, the right-hand side of the above expression is maximized when all the $U_{i, \rho}$ have the same Frobenius norm $\| U_{i, \rho}\|_F = (|G| / ( C_\rho n_\rho (k+1)))^{\frac{1}{2}}$. 
This implies that
\begin{equation}
    M_\rho \leq n_\rho C_\rho  \ \| \widehat{x}[\rho] \|_{\textnormal{op}}^{k+1} \left(\frac{|G|}{ n_\rho C_\rho (k+1)}\right)^{\frac{k+1}{2}} = (n_\rho C_\rho)^{\frac{1-k}{2}} \ \| \widehat{x}[\rho] \|_{\textnormal{op}}^{k+1} 
\end{equation}
We now show that this bound is realizable. Let $\lambda$ be the largest singular value of $ \widehat{x}[\rho]^\dagger$, which coincides with its operator norm, and $p, q$ be the corresponding left and right singular vectors. Define
\begin{equation}
\label{eq:maximizer}
    U_{i, \rho} = \left(\frac{|G|}{n_\rho C_\rho (k+1)} \right)^\frac{1}{2} \ q p^\dagger. 
\end{equation}
This is a scaled orthogonal projector. Since $\|    q p^\dagger \|_F=1$, the constraint \eqref{eq:constrmass} is satisfied. Moreover, we see that $\widehat{x}[\rho]^\dagger U_{i, \rho} = \lambda (|G|/ (n_\rho C_\rho (k+1)))^\frac{1}{2} pp^\dagger$. By iteratively applying idempotency of projectors, we see that the left-hand side of \eqref{eq:bigineq} equals $n_\rho C_\rho \lambda^{k+1} (|G|/ (n_\rho  C_\rho (k+1)))^\frac{k+1}{2} $, which matches the right-hand side. In conclusion, the bound from \eqref{eq:constrmass} is actually an equality. Since the coefficient $(|G|/ (k+1))^\frac{k+1}{2} $ is constant in $\rho$, the irrep maximizing $M_{\rho}$ coincides with $\rho_*$, as defined by \eqref{eq:rhostar}. 

% Lastly, recall that when defining $M_\rho$, we have considered the absolute value of the trace. This can be amended for by multiplying the expression \eqref{eq:maximizer} by an opportune unitary complex number (depending on $i$), such that the content of the trace is a non-negative real number. Multiplying by unitary complex numbers does not change the arguments from the previous paragraph. 

Putting everything together, we have constructed maximizers of the original optimization problem \eqref{eq:optimcubical}, and have shown that for all maximizers, the Fourier transform of $u_1, \ldots, u_k, w$ is concentrated in $\rho_*$ and $\overline{\rho_*}$ (which can coincide). The expressions \eqref{eq:utilitymaxshifts_new} follow by taking the inverse Fourier transform, where $s_i$ and $s_w$ coincide, up to opportune multiplicative constants, with $\widehat{u_i}[\rho_*]$ and $\widehat{w}[\rho_*]$, respectively.

\end{proof}

\subsection{Cost Minimization}
\label{sec:app_costmin}

Consider $N$ neurons parametrized by $\Theta_{\mathcal{A}_t} = (\theta_1, \ldots, \theta_N)$, $\theta_i = ( u_1^i, \ldots, u_k^i, w^i)$, in the form of \eqref{eq:utilitymaxshifts_new}, i.e.: 
\begin{equation}
\label{eq:formapp}
     \begin{aligned}
            u_j^i[g] &=  \rel \ \left\langle \rho_*(g)^\dagger, s_j^i \right\rangle_{\rho_*}, \\
            w^i[g] &=  \rel \ \left\langle \rho_*(g)^\dagger, s_w^i \right\rangle_{\rho_*} ,
        \end{aligned}
    \end{equation}
where $s_j^i, s_w^i \in \mathbb{C}^{n_{\rho_*} \times n_{\rho_*}}$ are matrices.  When $\rho_*$ is real, these matrices are constrained to be real as well. For convenience, we denote $S_j^i = (s_j^i)^\dagger \widehat{x}[\rho_*]$.  

As explained in Section \ref{sec:agfgroup} (\cref{ass:aligned}) we make the assumption that, during cost minimization, the newly-activated neurons stay aligned to $\rho_*$ during cost minimization, i.e., they remain in the form of \eqref{eq:formapp}. We expand on this assumption, and provide evidence for it, in \cref{app:assumption-4.2}. Now, we can inductively assume that the neurons $\mathcal{A}$ that activated in the previous iterations of AGF are also aligned to the corresponding irreps in $\mathcal{I}^{t-1}$. By looking at the second-layer weights $w_i$, it follows immediately from Schur orthogonality \eqref{eq:schur_ortho} that the loss splits as: 
\begin{equation}
    \mathcal{L}(\Theta_\mathcal{A} \oplus \Theta_{\mathcal{A}_t})  = \mathcal{L}(\Theta_{\mathcal{A}}) + \mathcal{L}(\Theta_{\mathcal{A}_t}).   
\end{equation}
Since the neurons $\mathcal{A}$ have been optimized in the previous iterations of AGF, the gradient of their loss vanishes. Thus, the derivatives of the total loss $ \mathcal{L}(\Theta_\mathcal{A})$ with respect to parameters of neurons in $\mathcal{A}_t$ coincide with the derivatives of their loss $\mathcal{L}(\Theta_{\mathcal{A}_t})$. Put simply, the newly-activated neurons evolve, under the gradient flow, independently from the previously-activated ones, while the latter remain at equilibrium.

In conclusion, we reduce to solving the cost minimization problem over parameters $\Theta_{\mathcal{A}_t}$ in the form of \eqref{eq:formapp}, which we address in the remainder of this section. To this end, we start by showing the following orthogonality property for the sigma-pi-sigma decomposition. 
\begin{lemma}
The following orthogonality relation holds: 
\begin{equation}
\sum_{\mathbf{g} \in G^k} \left\langle f(x_\mathbf{g}; \Theta_{\mathcal{A}_t})^{(\times)}, f(x_\mathbf{g}; \Theta_{\mathcal{A}_t})^{(+)}   \right\rangle =0. 
\end{equation}
\end{lemma}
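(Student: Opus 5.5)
The plan is to expand both terms neuron by neuron into monomials in the scalars $z^i_j(\mathbf{g}) := \langle u^i_j, x_{g_j}\rangle$ and to show that every cross term vanishes after summing over a single coordinate $g_{j_0}$ of $\mathbf{g}$. By bilinearity,
\begin{equation*}
\sum_{\mathbf{g} \in G^k} \left\langle f(x_\mathbf{g}; \Theta_{\mathcal{A}_t})^{(\times)}, f(x_\mathbf{g}; \Theta_{\mathcal{A}_t})^{(+)} \right\rangle = \sum_{i,i'=1}^N \langle w^{i'}, w^i\rangle \sum_{\mathbf{g}\in G^k} k! \prod_{j=1}^k z^{i'}_j(\mathbf{g}) \; P^i(\mathbf{g}),
\end{equation*}
where $P^i(\mathbf{g}) = \sigma\big(\sum_j z^i_j(\mathbf{g})\big) - k!\prod_j z^i_j(\mathbf{g})$. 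It therefore suffices to show that each inner sum over $\mathbf{g}$ vanishes, for every pair $(i,i')$.

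\emph{Structure of $P^i$.} Since $\sigma$ is a polynomial of degree $k$, monic and with $\sigma(0)=0$, the multinomial expansion of $\sigma(z_1+\cdots+z_k)$ contains exactly one monomial involving all $k$ variables $z_1,\ldots,z_k$. This is the monomial $k!\,z_1\cdots z_k$, which comes from the leading term $z^k$. Any monomial of degree $<k$ cannot contain $k$ distinct variables. Any other monomial of degree exactly $k$ repeats some variable, so it omits another one. Hence $P^i(\mathbf{g})$ is a linear combination of monomials, each of which omits at least one index $j_0$, i.e. does not depend on $g_{j_0}$. This is the same observation already used in the proof of Lemma~\ref{lemma:simplified-utility-in-freq}.

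\emph{Killing each cross term.} Fix such a monomial $m(\mathbf{g})$ omitting $j_0$. In $\sum_{\mathbf{g}} \prod_j z^{i'}_j(\mathbf{g})\, m(\mathbf{g})$, the variable $g_{j_0}$ appears only in the factor $z^{i'}_{j_0}(\mathbf{g}) = \langle u^{i'}_{j_0}, x_{g_{j_0}}\rangle$. We first sum over $g_{j_0}$ and obtain
\begin{equation*}
\sum_{g\in G}\langle u^{i'}_{j_0}, x_g\rangle = \Big\langle u^{i'}_{j_0}, \sum_{g\in G}\lambda(g)^\intercal x\Big\rangle = \langle u^{i'}_{j_0}, \mathbf{1}\rangle\,\langle \mathbf{1}, x\rangle = 0.
\end{equation*}
Here we use that $\sum_g \lambda(g)$ is the all-ones matrix, together with the mean-centering assumption $\langle x,\mathbf{1}\rangle = 0$. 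Alternatively, one can note that $u^{i'}_{j_0}$ is itself mean-zero by the form \eqref{eq:formapp}, because $\rho_*$ is nontrivial and $\sum_g\rho_*(g)=0$. Either argument gives zero, so each cross term vanishes and the claimed orthogonality follows.

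The only delicate step is the combinatorial claim that the product monomial occurs solely in $f^{(\times)}$ and that every remaining monomial omits some index. This should be stated carefully in terms of the multinomial expansion. Once it is in place, the vanishing follows by summing over the omitted coordinate, and no Fourier analysis is needed beyond the mean-zero property.
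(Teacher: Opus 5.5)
Your proposal is correct and follows essentially the same argument as the paper. You expand $f^{(+)}$ into monomials in the scalars $\langle u^i_j, x_{g_j}\rangle$, observe that each monomial omits some coordinate $g_{j_0}$, and kill every cross term by summing over that omitted coordinate. The one cosmetic difference is the mean-zero fact you invoke: the paper writes $\langle u^i_j, x_g\rangle = \textnormal{Re}\,\langle \rho_*(g)^\dagger, S^i_j\rangle_{\rho_*}$ and uses $\sum_g \rho_*(g)=0$ for the nontrivial irrep $\rho_*$, whereas your primary route uses the mean-centering of $x$, which incidentally shows the orthogonality holds even without the $\rho_*$-alignment.
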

\begin{proof}
For $g \in G$, since $\widehat{x_g}[\rho_*] = \widehat{x}[\rho_*] \rho_*(g) $, from Plancharel it follows that:
\begin{equation}\label{eq:mainsimp}
    \langle u_j^i, x_g \rangle = \left \langle \widehat{u_j^i}, \widehat{x_g} \right \rangle = \rel \left\langle \rho_*(g)^\dagger, S_j^i \right\rangle_{\rho_*}.
\end{equation}    
By expanding $f(x_\mathbf{g}; \Theta_{\mathcal{A}_t})^{(+)}$ similarly to the proof of Lemma \ref{lemma:simplified-utility-in-freq}, the product between any of its monomial and the monomials $k! \prod_{h=1}^k \rel \langle \rho_*(g_h)^\dagger, S_h^i \rangle_{\rho_*}$ from $f(x_\mathbf{g}; \Theta_{\mathcal{A}_t})^{(\times)}$ vanishes, since the former will not contain some group element among $g_1, \ldots, g_k$.
\end{proof}
It follows immediately that loss splits as: 
\begin{equation}
\label{eq:lossdec}
\begin{aligned}
\mathcal{L}(\Theta_{\mathcal{A}_t}) &= \frac{1}{2|G|^k}\sum_{\mathbf{g} \in G^k}\big\| x_{g_1 \cdots g_k} -  f(x_\mathbf{g}; \Theta_{\mathcal{A}_t})^{(+)} -  f(x_\mathbf{g}; \Theta_{\mathcal{A}_t})^{(\times)}\big\|^2 \\
&= \frac{1}{2|G|^k} \sum_{\mathbf{g} \in G^k} \left( \left\|  f(x_\mathbf{g}; \Theta_{\mathcal{A}_t})^{(+)} \right\|^2   +  \left\|  f(x_\mathbf{g}; \Theta_{\mathcal{A}_t})^{(\times)} \right\|^2 \right) - \mathcal{U}^1(\Theta_{\mathcal{A}_t}) + \frac{\| x \|^2 }{2} \\
&= \frac{1}{2|G|^k} \sum_{\mathbf{g} \in G^k} \left\|  f(x_\mathbf{g}; \Theta_{\mathcal{A}_t})^{(+)} \right\|^2    + \mathcal{L}(\Theta_{\mathcal{A}_t})^{(\times)},
\end{aligned}
\end{equation}
where $\mathcal{U}^1(\Theta_{\mathcal{A}_t}) = \sum_{i=1}^N \mathcal{U}^1(\theta_i)$ is the cumulated initial utility function of the $N$ neurons, and 
\begin{equation}
    \mathcal{L}(\Theta_{\mathcal{A}_t})^{(\times)} = \frac{1}{2|G|^k}\sum_{\mathbf{g} \in G^k}\left\| x_{g_1 \cdots g_k} -  f(x_\mathbf{g}; \Theta_{\mathcal{A}_t})^{(\times)}\right\|^2
\end{equation}
denotes the loss of the sigma-pi-sigma term. We know that:
\begin{equation}\label{eq:utilcumul}
    \mathcal{U}^1(\theta_i) =  \frac{k!}{C^k} \rel  \left\langle s_w^i S_k^i \cdots S_1^i,  \widehat{x}[\rho_*]  \right\rangle_{\rho_*}, 
\end{equation}
where $C $ is a coefficient which equals to $1$ if $\rho_*$ is real, and to $2$ otherwise.  

Motivated by the above loss decomposition, we now focus on (the loss of) the sigma-pi-sigma term. Specifically, we prove the following bound, which will enable us to solve the cost minimization problem. 
\begin{theorem}\label{thm:maincostmin}
   We have the following lower bound: 
\begin{equation}\label{eq:lowerboundloss}
       % \frac{1}{2|G|^k}  \sum_{\mathbf{g} \in G^k} \left\|  f(x_\mathbf{g}; \Theta_{\mathcal{A}_t})^{(\times)} \right\|^2  - \mathcal{U}^1(\Theta_{\mathcal{A}_t}) \geq - \frac{C}{ 2 |G|}   \  \|\widehat{x}[\rho_*]\|_{\rho_*}^2. 
       \mathcal{L}(\Theta_{\mathcal{A}_t})^{(\times)} \geq \frac{1}{2} \left( \| x \|^2  - \frac{C \left\|\widehat{x}[\rho_*]\right\|_{\rho_*}^2 }{ 2 |G|}     \right) . 
    \end{equation}
The above is an equality if, and only if, the following conditions hold: 
\begin{itemize}
\item For indices $\alpha_0, \beta_0, \ldots, \alpha_{k}, \beta_{k} \in \{ 1, \ldots, n_{\rho_*}\}$, 
\begin{equation}\label{eq:third_cond}
    \sum_{i=1}^N s_w^i[\alpha_0, \beta_0] \prod_{h=1}^{k}  S_{k - h +1}^i[\alpha_h, \beta_h] =\begin{cases}
        \frac{C^{k+1}} { |G|   n_{\rho_*}^{k} k! }  \ \widehat{x}[\rho_*] [\alpha_0, \beta_{k}]    &  \textnormal{if } \beta_h = \alpha_{h+1} \textnormal{ for } h=0, \ldots, k-1,\\ 
        0 & \textnormal{otherwise.} 
             \end{cases}
\end{equation}
\item If $\rho_*$ is not real, for all proper subsets $A \subset \{1, \ldots, k \}$,   
\begin{equation}\label{eq:second_cond}
    \sum_{i=1}^N s_w^i \otimes \bigotimes_{h \in A} S_h^i \bigotimes_{h \not \in A} \overline{S_h^i} = 0. 
\end{equation}
\end{itemize}
\end{theorem}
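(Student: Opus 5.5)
We write the sigma-pi-sigma loss as a quadratic in a single tensor and minimize it by Plancherel, treating it as a projection problem.

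\textbf{Step 1: reduce the output to a tensor contraction.} By \eqref{eq:mainsimp}, each factor is $\langle u^i_j, x_{g_j}\rangle = \rel\langle \rho_*(g_j)^\dagger, S^i_j\rangle_{\rho_*}$. We write $\rel z = (z+\bar z)/2$ and expand the product $\prod_{j=1}^k$. This gives $2^k$ terms, indexed by subsets $A\subseteq\{1,\ldots,k\}$: for $j \in A$ we keep $S^i_j$ and $\rho_*$, and for $j\notin A$ we take the conjugates $\overline{S^i_j}$ and $\overline{\rho_*}$. The output weight $w^i[h]$ is expanded in the same way through $s^i_w$. Summing over neurons, $f^{(\times)}(x_{\mathbf g})[h]$ becomes a linear functional of the tensors
\[
T_A := \sum_{i=1}^N s^i_w\otimes\bigotimes_{h\in A}S^i_h\bigotimes_{h\notin A}\overline{S^i_h},
\]
evaluated against matrix entries of $\rho_*(g_1),\ldots,\rho_*(g_k),\rho_*(h)$ or their conjugates. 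When $\rho_*$ is real, all $2^k$ terms coincide and the expansion collapses to a single tensor $T$, with the factors of $C$ adjusting accordingly.

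\textbf{Step 2: diagonalize the quadratic form.} We expand $\mathcal{L}^{(\times)} = \tfrac12\|x\|^2 - \mathcal{U}^1 + \tfrac{1}{2|G|^k}\sum_{\mathbf g}\|f^{(\times)}\|^2$. The sums over each $g_j$ and over $h$ are evaluated with Schur orthogonality \eqref{eq:schur_ortho}, which makes the entries of $\rho_*$ and $\overline{\rho_*}$ orthogonal when $\rho_*$ is not real.

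The quadratic term becomes $c_1\sum_A\|T_A\|_F^2$ for an explicit constant $c_1$ involving $n_{\rho_*}^{k+1}$, $|G|$ and $C$; the sum is over the $A$ together with their conjugate partners.

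By \eqref{eq:utilcumul}, the utility term is linear and involves only the fully ``holomorphic'' component $A=\{1,\ldots,k\}$. Concretely, it equals a contraction $c_2\,\rel\langle T_{\{1..k\}}, P\rangle$, where $P$ is the chain tensor whose entries are $\widehat{x}[\rho_*][\alpha_0,\beta_k]$ when the consecutive indices match ($\beta_h=\alpha_{h+1}$) and zero otherwise. This is exactly the structure of the right-hand side of \eqref{eq:third_cond}. The ordering $S_k\cdots S_1$ in \eqref{eq:utilcumul} accounts for the reversed indices $S_{k-h+1}$.

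\textbf{Step 3: complete the square.} The loss is then
\[
\tfrac12\|x\|^2 + c_1\|T_{\{1..k\}} - P'\|^2 - c_1\|P'\|^2 + c_1\sum_{A\neq\{1..k\}}\|T_A\|^2 ,
\]
where $P'$ is the rescaled chain tensor given by \eqref{eq:third_cond}. Since $P$ is built from identity contractions, $\|P'\|^2$ is proportional to $\|\widehat{x}[\rho_*]\|_F^2\cdot n_{\rho_*}^{k}$. Working this out should give $-\tfrac{C}{4|G|}\|\widehat{x}[\rho_*]\|^2_{\rho_*}$, which yields \eqref{eq:lowerboundloss}.

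Equality holds exactly when $T_{\{1..k\}} = P'$, which is \eqref{eq:third_cond}, and $T_A=0$ for every other $A$. For the non-real case this is \eqref{eq:second_cond}. The complementary subsets correspond to conjugate tensors, so the conditions close up under conjugation.

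\textbf{Main obstacle.} The delicate part is the bookkeeping of constants and conjugation in Step 2. We must get the powers of $2$ from $\rel$, the factors $n_\rho$ from $\langle\cdot,\cdot\rangle_\rho$, and the factors $|G|$ from Schur orthogonality all correct. We must also check that cross terms between $T_A$ and $T_{A'}$ vanish unless $A'$ equals $A$ or its complement-conjugate. Our plan is to first carry out the computation for $k=1$ and a one-dimensional non-real $\rho$, where everything reduces to scalars, to calibrate the constants before writing the general tensor version.
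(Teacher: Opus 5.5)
Your argument is essentially the paper's own. Schur orthogonality, together with $2\rel\alpha\,\rel\beta=\rel\alpha\beta+\rel\alpha\bar\beta$, turns the quadratic term into $\frac{(k!)^2|G|}{(2C)^{k+1}}\sum_{A}\|T_A\|^2_{\rho_*^{\otimes(k+1)}}$. The utility \eqref{eq:utilcumul} is a linear functional of $T_{\{1,\ldots,k\}}$ supported on chain indices. Completing the square then gives the bound together with \eqref{eq:third_cond} and \eqref{eq:second_cond}. The paper first discards the off-chain entries and then completes the square entry by entry; your single completion $\|T_{\{1,\ldots,k\}}-P'\|^2$ packages the same two steps.

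The one thing to correct is the constant you are aiming for.
\begin{itemize}
\item On each entry of $T_{\{1,\ldots,k\}}$, the quadratic coefficient is $K n_{\rho_*}^{k+1}$ with $K=(k!)^2|G|/(2C^{2k+1})$, and the linear coefficient is $k!\,n_{\rho_*}/C^k$.
\item The per-chain minimum is therefore $-\frac{C}{2|G|\,n_{\rho_*}^{k-1}}\bigl|\widehat{x}[\rho_*][\alpha_0,\alpha_{k+1}]\bigr|^2$.
\item Summing over the $n_{\rho_*}^{k}$ free inner indices gives a total of $-\frac{C}{2|G|}\|\widehat{x}[\rho_*]\|^2_{\rho_*}$, not $-\frac{C}{4|G|}\|\widehat{x}[\rho_*]\|^2_{\rho_*}$.
\end{itemize}
This is where the paper's proof actually ends, and it is the bound stated in the main-text \cref{thrm:mlp-main-cost-min}. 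The extra factor $2$ in the displayed \eqref{eq:lowerboundloss} is a typo.

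As printed, the inequality is in fact false. The equality configuration is attainable by \cref{lemm:combinat}. Plugging $T_{\{1,\ldots,k\}}$ equal to the right-hand side of \eqref{eq:third_cond} into the loss gives $\mathcal{L}^{(\times)}=\frac12\|x\|^2-\frac{C}{2|G|}\|\widehat{x}[\rho_*]\|^2_{\rho_*}$, which is strictly below the printed bound.

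Your planned $k=1$ calibration exposes this at once. There $\sigma(z)=z$, so $f^{(\times)}=f$ is linear. The optimum is the orthogonal projection of the target onto the $\rho_*\oplus\overline{\rho_*}$ isotypic component, whose energy is $\frac{C}{|G|}\|\widehat{x}[\rho_*]\|^2_{\rho_*}$ by Plancherel. The loss therefore drops from $\frac12\|x\|^2$ by half of that.

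So do not tune $c_1,c_2$ to land on $C/(4|G|)$. What your argument proves is $\mathcal{L}^{(\times)}\ge\frac12\bigl(\|x\|^2-\frac{C}{|G|}\|\widehat{x}[\rho_*]\|^2_{\rho_*}\bigr)$, with exactly the equality conditions \eqref{eq:third_cond} and \eqref{eq:second_cond}.
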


\begin{proof}
From \eqref{eq:mainsimp} and the analogous expression $
    \left\langle w^i, w^j \right\rangle = \frac{|G|}{C} \ \rel  \left\langle s_w^i , s_w^j \right\rangle_{\rho_*}$, it follows that: 
\begin{equation}\label{eq:lots_of_sums}
\begin{aligned}
   \frac{1}{2|G|^k} \sum_{\mathbf{g} \in G^k} \left\|  f(x_\mathbf{g}; \Theta_{\mathcal{A}_t})^{(\times)} \right\|^2 &= \frac{(k!)^2}{2C|G|^{k-1}} \sum_{\mathbf{g} \in G^k} \sum_{i,j =1}^N  \rel \left\langle s_w^i, s_w^j \right\rangle_{\rho_*} \prod_{h=1}^k \rel \left\langle \rho_*(g_h)^\dagger, S_h^i \right\rangle_{\rho_*} \rel \left\langle \rho_*(g_h)^\dagger, S_h^j \right\rangle_{\rho_*} \\
      &=  \frac{(k!)^2}{2C|G|^{k-1}} \sum_{i,j =1}^N  \rel \left\langle s_w^i, s_w^j \right\rangle_{\rho_*} \prod_{h=1}^k \sum_{g \in G} \rel \left\langle \rho_*(g)^\dagger, S_h^i \right\rangle_{\rho_*} \rel \left\langle \rho_*(g)^\dagger, S_h^j \right\rangle_{\rho_*}.
\end{aligned}
\end{equation}
By using the Schur orthogonality relations \eqref{eq:schur_ortho} and the fact that for two complex numbers $\alpha, \beta \in \mathbb{C}$ it holds that $2 \rel \ \alpha \ \rel \ \beta = \rel \ \alpha \beta + \rel \ \alpha \overline{\beta} $, we deduce that: 
\begin{equation}
    \sum_{g \in G}\rel \left\langle \rho_*(g)^\dagger, S_h^i \right\rangle_{\rho_*} \rel \left\langle \rho_*(g)^\dagger, S_h^j \right\rangle_{\rho_*}   = \frac{|G|}{C} \rel \left\langle S_h^i, S_h^j \right\rangle_{\rho_*}  .  
\end{equation}
By iteratively using the same fact on real parts of complex numbers, \eqref{eq:lots_of_sums} reduces to:
\begin{equation}
\label{eq:longineq}
\begin{aligned}
     &\frac{(k!)^2|G|}{2C^{k+1}} \sum_{i,j =1}^N  \rel \left\langle s_w^i, s_w^j \right\rangle_{\rho_*} \prod_{h=1}^k \rel \left\langle S_h^i, S_h^j \right\rangle_{\rho_*} \\
     =&\frac{(k!)^2|G|}{(2C)^{k+1}} \  \sum_{i,j =1}^N  \rel \left(  \left\langle s_w^i , s_w^j \right\rangle_{\rho_*}  \sum_{A \subseteq \{1, \ldots, k\} } \prod_{h \in A } \left\langle S_h^i, S_h^j \right\rangle_{\rho_*} \prod_{h \not \in A }  \left\langle S_h^j, S_h^i \right\rangle_{\rho_*} \right). \\
     =& \frac{(k!)^2|G|}{(2C)^{k+1}}  \sum_{A \subseteq \{1, \ldots, k\} }  \left\| \sum_{i=1}^N s_w^i \otimes \bigotimes_{h \in A} S_h^i \bigotimes_{h \not \in A} \overline{S_h^i} \right\|_{\rho_*^{\otimes (k+1)}}^2.
\end{aligned}
\end{equation}
When $\rho_*$ is real, all the terms in the sum above coincide (and $C=1$). Otherwise, we isolate the term indexed by $A = \{1, \ldots, k \}$. In any case, we obtain the lower bound:
\begin{equation}
\label{eq:withindx}
    \begin{aligned}
  \frac{1}{2|G|^k} \sum_{\mathbf{g} \in G^k} \left\|  f(x_\mathbf{g}; \Theta_{\mathcal{A}_t})^{(\times)} \right\|^2   \geq& \underbrace{ \frac{(k!)^2|G|}{2C^{2k+1}}}_{:=K} \ \left\| \sum_{i=1}^Ns_w^i  \otimes \bigotimes_{h=1}^k S_h^i  \right\|_ {\rho_*^{\otimes(k+1)}}^2 \\
  =& K n_{\rho_*}^{k+1}   \sum_{\alpha_0, \beta_0, \ldots , \alpha_k, \beta_k} \left| \sum_{i=1}^N s_w^i[\alpha_0, \beta_0]  \prod_{h=1}^k   S_{k - h + 1}^i[\alpha_h, \beta_h] \right|^2. 
    \end{aligned}
\end{equation}
The above bound is exact if, and only if, \eqref{eq:second_cond} holds. On the other hand,
\begin{equation}\label{eq:indexes2}
\begin{aligned}
    \mathcal{U}^1(\Theta_i)  &= \frac{k!}{C^k} \sum_{i=1}^N  \rel \left \langle s_w^i S_k^i \cdots S_1^i, \widehat{x}[\rho_*]\right\rangle_{\rho_*} \\
    &= \frac{k! n_{\rho_*} }{C^k}  \sum_{\alpha_0, \ldots , \alpha_{k+1}} \rel \left(   \overline{\widehat{x}[\rho_*]}[\alpha_0,\alpha_{k+1}]   \sum_{i=1}^N s_w^i[\alpha_0, \alpha_1]  \prod_{h=1}^k S_{k- h + 1}^i[\alpha_h, \alpha_{h+1}] \right).   
\end{aligned}
\end{equation}
Each index of the outer sum of \eqref{eq:indexes2} corresponds to an index in the outer sum of the last expression in \eqref{eq:withindx} with $\beta_h = \alpha_{h+1}$ for $h=0, \ldots, k$. Consequently, we can lower bound \eqref{eq:withindx} with a sum over these indices. This bound is exact if, and only if, the second case of \eqref{eq:third_cond} holds. Now, for each such index, by completing the square (in the sense of complex numbers), we obtain: 
\begin{equation}
\begin{aligned}
 &K n_{\rho_*}^{k+1}    \left| \sum_{i=1}^N s_w^i[\alpha_0, \alpha_1]  \prod_{h=1}^k   S_{k - h +1}^i[\alpha_h, \alpha_{h+1}] \right|^2 - \frac{k!  n_{\rho_*} }{C^k} \rel \left( \overline{\widehat{x}[\rho_*]}[ \alpha_0, \alpha_{k+1}]   \sum_{i=1}^N s_w^i[\alpha_0, \alpha_1]  \prod_{h=1}^k S_{k - h + 1}^i[\alpha_h, \alpha_{h+1}] \right) \\
 &= \left| K^{\frac{1}{2}}  n_{\rho_*}^{\frac{k+1}{2}}  \sum_{i=1}^N s_w^i[\alpha_0, \alpha_1]  \prod_{h=1}^k   S_{k - h +1 }^i[\alpha_h, \alpha_{h+1}]  -  \frac{C^{\frac{1}{2}}\widehat{x}[\rho_*][ \alpha_0, \alpha_{k+1}]}{(2|G| n_{\rho_*}^{k-1})^{\frac{1}{2}}}       \right|^2 - \frac{C \left| \widehat{x}[\rho_*][\alpha_0, \alpha_{k+1}]  \right|^2}{2 |G| n_{\rho_*}^{k-1}}    \\
 &\geq - \frac{C \left| \widehat{x}[\rho_*][\alpha_0, \alpha_{k+1}]  \right|^2}{2 |G| n_{\rho_*}^{k-1}}.
\end{aligned}
\end{equation}
The above bound is exact if, and only if, the first case of \eqref{eq:third_cond} holds. This provides the desired upper bound:
\begin{equation}
\begin{aligned}
     \mathcal{L}(\Theta_{\mathcal{A}_t})^{(\times)} - \frac{\| x \|^2 }{2}  &=     \frac{1}{2|G|^k} \sum_{\mathbf{g} \in G^k} \left\|  f(x_\mathbf{g}; \Theta_{\mathcal{A}_t})^{(\times)} \right\|^2 - \mathcal{U}^1(\Theta) \\
     &\geq  -  \frac{C}{2|G| n_{\rho_*}^{k-1}}  \sum_{\alpha_0, \ldots , \alpha_{k+1}} \left| \widehat{x}[\rho_*][\alpha_0, \alpha_{k+1}]  \right|^2 \\\
     &=  - \frac{C}{2 |G|}  \left\| \widehat{x}[\rho_*] \right\|_{\rho_*}^2 . 
\end{aligned}
\end{equation}
\end{proof}

% \begin{lemma}  
%     \label{lemma:MA-cost-function-as-cosines}
%    Let $P \in \mathbb{R}[x_1, \ldots, x_k]$ be the multivariate polynomial defined as $P = \sigma(x_1 + \cdots + x_k) - k! \  x_1 \cdots x_k$. We have: 
%     \begin{equation}\label{eq:Mexp_new}
%     \begin{aligned}
%      \mathcal{C}(\Theta) &= \frac{1}{2|G|^k} \sum_{\mathbf{g} \in G^k}  \left\| \sum_{i = 1}^N w^i P\left( \rel \left\langle \rho_*(g_1)^\dagger, S_1^i \right\rangle_{\rho_*}, \ldots, \rel \left\langle \rho_*(g_k)^\dagger, S_k^i \right\rangle_{\rho_*} \right) \right\|^2  \\
%      & + \frac{ |G| (k!)^2 }{2 C^{k + 1}} \sum_{i,j =1}^N  \rel  \left\langle s_w^i , s_w^j \right\rangle_{\rho_*}  \prod_{h=1}^{k} \rel \left\langle S_h^i, S_h^j \right\rangle_{\rho_*}.   
%     \end{aligned}
%     \end{equation}
% \end{lemma}

% We now provide a lower bound for the (meaningful terms of the) loss function. 

\subsection{Constructing Solutions}
We now construct solutions to the cost minimization problem (still in the $\rho_*$-aligned subspace). As argued in the previous section, the sigma-pi-sigma term $f(x_\mathbf{g}; \Theta_{\mathcal{A}_t})^{(\times)}$ plays a special role. We will show that it is possible to construct solutions such that the remaining term $f(x_\mathbf{g}; \Theta_{\mathcal{A}_t})^{(+)}$ vanishes, i.e., the MLP reduces to a sigma-pi-sigma network.
To this end, we provide the following decomposition of the square-free monomial $z_1 \cdots z_k$. 
\begin{lemma}
\label{lem:waring}
The square-free monomial admits the  decomposition
\begin{equation}
\label{eq:waring-fullapp}
    z_1 \cdots z_k
    =
    \frac{1}{k! \, 2^k}
    \sum_{\varepsilon \in \{\pm 1\}^k}
    \left( \prod_{i=1}^k \varepsilon_i \right)
    \sigma\left( \sum_{i=1}^k \varepsilon_i z_i \right).
\end{equation}
\end{lemma}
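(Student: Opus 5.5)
We would prove the identity by expanding the right-hand side monomial by monomial, using the fact that $\sigma$ is a polynomial of degree $k$ with leading term $z^k$ and $\sigma(0)=0$. Write $\sigma(z)=\sum_{d=1}^{k} a_d z^d$ with $a_k=1$. By linearity in $\sigma$ it suffices to compute, for each power $d\le k$,
\begin{equation*}
T_d := \sum_{\varepsilon\in\{\pm1\}^k}\Big(\prod_{i=1}^k\varepsilon_i\Big)\Big(\sum_{i=1}^k\varepsilon_i z_i\Big)^d .
\end{equation*}

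By the multinomial theorem, $\big(\sum_i \varepsilon_i z_i\big)^d$ is a sum over multi-indices $m=(m_1,\ldots,m_k)$ with $|m|=d$ of $\binom{d}{m}\prod_i \varepsilon_i^{m_i} z_i^{m_i}$. Multiplying by $\prod_i\varepsilon_i$ gives the sign factor $\prod_i \varepsilon_i^{m_i+1}$. Summing over $\varepsilon\in\{\pm1\}^k$ factorizes across coordinates as $\prod_{i=1}^k\big(\sum_{\varepsilon_i=\pm1}\varepsilon_i^{m_i+1}\big)$. Each factor equals $2$ if $m_i$ is odd and $0$ if $m_i$ is even.

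So the only surviving multi-indices have every $m_i$ odd, hence $m_i\ge1$. This forces $d=|m|\ge k$. For $d<k$ we get $T_d=0$, so all lower-order terms of $\sigma$ drop out. For $d=k$ the unique survivor is $m=(1,\ldots,1)$, with multinomial coefficient $k!$ and sign sum $2^k$. Thus $T_k = k!\,2^k\, z_1\cdots z_k$. Plugging in $a_k=1$ and dividing by $k!\,2^k$ yields \eqref{eq:waring-fullapp}.

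There is no real obstacle here. The only point requiring care is the parity bookkeeping: showing that the character sum over $\{\pm1\}^k$ kills every monomial that is not square-free in all $k$ variables. This is also where the hypothesis $\deg\sigma\le k$ is used. A term of degree $d>k$ could contribute monomials like $z_1^3z_2\cdots z_k$, so the proof relies on $\sigma$ having degree exactly $k$ with leading coefficient $1$.
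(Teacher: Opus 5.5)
Your proposal is correct and follows essentially the same route as the paper: expand the right-hand side, factor the sign sum over $\{\pm1\}^k$ coordinatewise so that only multi-indices with all $m_i$ odd survive, and use $\deg\sigma\le k$ to force $m=(1,\ldots,1)$. Your version is slightly more explicit than the paper's, since it organizes the argument degree by degree and computes the normalizing constant $k!\,2^k$ directly.
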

\begin{proof}
% We first prove the full-sum formula \eqref{eq:waring-full}.
% Expanding the right-hand side as a polynomial in $z_1,\dots,z_k$ gives
% \begin{equation}
% \left( \sum_{i=1}^k \varepsilon_i z_i \right)^k
% =
% \sum_{m_1 + \cdots + m_k = k}
% \binom{k}{m_1,\dots,m_k}
% \prod_{i=1}^k (\varepsilon_i z_i)^{m_i}.
% \end{equation}
After expanding the right-hand side of \eqref{eq:waring-fullapp}, the coefficient of the monomial $z_1^{m_1}\cdots z_k^{m_k}$ is, up to multiplicative scalar, 
% \begin{equation}
% \frac{1}{k! \, 2^k}
% \binom{k}{m_1,\dots,m_k}
% \sum_{\varepsilon \in \{\pm 1\}^k}
% \left( \prod_{i=1}^k \varepsilon_i \right)
% \prod_{i=1}^k \varepsilon_i^{m_i}
% =
% \frac{1}{k! \, 2^k}
% \binom{k}{m_1,\dots,m_k}
% \prod_{i=1}^k
% \sum_{\varepsilon_i = \pm 1} \varepsilon_i^{m_i+1}.
% \end{equation}
\begin{equation}
\sum_{\varepsilon \in \{\pm 1\}^k}
\left( \prod_{i=1}^k \varepsilon_i \right)
\prod_{i=1}^k \varepsilon_i^{m_i}
=
\prod_{i=1}^k
\left(1 +  (-1)^{m_i+1}\right).
\end{equation}
For each $i$,
\begin{equation}
1 + (-1)^{m_i+1}
=
\begin{cases}
    0, & \text{if $m_i$ is even},\\
    2, & \text{if $m_i$ is odd}.
\end{cases}
\end{equation}
Hence the product is nonzero if and only if each $m_i$ is odd. Since $\sum_i m_i \leq k$, if each $m_i$ is odd then $m_1 = \cdots = m_k = 1$. Thus, the only surviving monomial is $z_1 \cdots z_k$. Note that the multiplicative constant on the right-hand side of \eqref{eq:waring-fullapp} is chosen so that this monomial appears with no coefficient. 
% For this multi-index,
% \begin{equation}
% \binom{k}{1,\dots,1} = k!,
% \qquad
% \sum_{\varepsilon_i} \varepsilon_i^2 = 2,
% \qquad
% \prod_{i=1}^k \sum_{\varepsilon_i} \varepsilon_i^2 = 2^k,
% \end{equation}
% so the coefficient is exactly $\frac{1}{k! \, 2^k} \cdot k! \cdot 2^k = 1$. 
% Thus \eqref{eq:waring-full} holds.
% 
% To obtain \eqref{eq:waring-half}, observe that for any $\varepsilon$,
% \begin{equation}
% \left( \prod_{i=1}^k (-\varepsilon_i) \right)
% \sigma\left( \sum_{i=1}^k (-\varepsilon_i) z_i \right)
% =
% \left( \prod_{i=1}^k \varepsilon_i \right)
% \sigma \left( \sum_{i=1}^k \varepsilon_i z_i \right),
% \end{equation}
% so each pair $\{\varepsilon,-\varepsilon\}$ contributes two identical terms to
% \eqref{eq:waring-full}. Selecting one representative from each pair and multiplying the
% prefactor by~2 yields \eqref{eq:waring-half}.
\end{proof}
\begin{remark}
\label{rem:waring}
 When $\sigma(z)=z^k$, \eqref{eq:waring-full} is an instance of a \emph{Waring decomposition} of the square-free monomial, i.e., an expression of $z_1 \cdots z_k$ as a sum of $k$-th powers of linear forms in the variables $z_1, \ldots, z_k$. In this case, since the summands for $\varepsilon$ and $-\varepsilon$ coincide, one may
choose any subset $S \subset \{\pm 1\}^k$ containing exactly one element from each
pair $\{\varepsilon,-\varepsilon\}$, so that $|S| = 2^{k-1}$, and obtain the equivalent
half-sum form
\begin{equation}
\label{eq:waring-half}
    z_1  \cdots z_k
    =
    \frac{1}{k! \, 2^{k-1}}
    \sum_{\varepsilon \in S}
    \left( \prod_{i=1}^k \varepsilon_i \right)
    \left( \sum_{i=1}^k \varepsilon_i z_i \right)^k.
\end{equation}   

Even further, the square-free monomial can not be written as a sum of less than $2^{k-1}$ powers of linear forms \cite{carlini2011solution}. This implies that a width of $\Omega(2^k)$ is necessary in order to implement sigma-pi-sigma units. 
\end{remark}

We are now ready to construct solutions. 
\begin{lemma}
\label{lemm:combinat}
The following holds:
\begin{enumerate}
    \item For $N \geq (k+1)n_{\rho_*}^{k+1} $ neurons, there exists $s_j^i$ and $s_w^i$ such that \eqref{eq:third_cond} and \eqref{eq:second_cond} hold. 
     \item For $N \geq (k+1)2^{k} n_{\rho_*}^{k+1}$ neurons, there exists $s_j^i$ and $s_w^i$ such that case \textit{1)} holds, and moreover $ f(x_\mathbf{g}; \Theta_{\mathcal{A}_t})^{(+)} = 0$ for all $\mathbf{g} \in G^k$.
    % \item For $N = \mathcal{O}(6^k n_{\rho_*}^{k+1} )$ neurons, there exists $s_i^j$ and $s_w^i$ such that case \textit{1)} holds, and moreover $ f(x_\mathbf{g}; \Theta_{\mathcal{A}_t})^{(+)} = 0$ for all $\mathbf{g} \in G^k$.   
\end{enumerate}
 % can be solved (with respect to $s_j^i$ and $s_w$) for $N = \mathcal{O}(k n_{\rho_*}^{k+1} )$ neurons. All the cost minimization equations \eqref{eq:first_cond}, \eqref{eq:second_cond} and \eqref{eq:third_cond}, can be solved for $N = \mathcal{O}(6^k n_{\rho_*}^{k+1} )$ neurons.  
\end{lemma}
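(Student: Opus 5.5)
The plan is to build the required neurons explicitly in the Fourier domain, in two stages. First I realize the tensor identities \eqref{eq:third_cond} and \eqref{eq:second_cond} with sigma-pi-sigma units. Then I implement each such unit with $2^k$ ordinary neurons using Lemma \ref{lem:waring}.

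Throughout, $n=n_{\rho_*}$. Since $\widehat{x}[\rho_*]$ is invertible (assumption on $x$), the map $s_j^i\mapsto S_j^i=(s_j^i)^\dagger\widehat{x}[\rho_*]$ is a bijection, so I may prescribe the $S_j^i$ freely. In the real case the $S_j^i$ must be real, which is compatible with real targets.

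\textbf{Step 1: the chain tensor \eqref{eq:third_cond} with $n^{k+1}$ neurons.} Condition \eqref{eq:third_cond} asks that
\[
T:=\sum_i s_w^i\otimes S_k^i\otimes\cdots\otimes S_1^i
\]
equal the ``matrix-chain'' tensor $c\sum \widehat{x}[\rho_*][\alpha_0,\beta_k]\,e_{\alpha_0\alpha_1}\otimes e_{\alpha_1\alpha_2}\otimes\cdots\otimes e_{\alpha_k\beta_k}$, where $c=C^{k+1}/(|G|n^kk!)$ and the sum runs over $\alpha_0,\alpha_1,\dots,\alpha_k,\beta_k$.

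For each fixed tuple $(\alpha_1,\dots,\alpha_k)$, the remaining sum over $\alpha_0,\beta_k$ is a bilinear expression in the first and last factors, with coefficient matrix $\widehat{x}[\rho_*]$. Writing this matrix as a sum of $n$ rank-one terms $\sum_r a_rb_r^\top$ (e.g.\ via SVD) gives $n$ simple tensors per tuple. In each simple tensor:
\begin{itemize}
\item $s_w=c\sum_{\alpha_0}a_r[\alpha_0]e_{\alpha_0\alpha_1}$;
\item the middle factors are elementary matrices $e_{\alpha_h\alpha_{h+1}}$;
\item the last factor is $\sum_{\beta_k}b_r[\beta_k]e_{\alpha_k\beta_k}$.
\end{itemize}
This uses $n^k\cdot n=n^{k+1}$ neurons. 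In the real case I take a real decomposition. The case $C=1$ is then finished, since \eqref{eq:second_cond} is not required.

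\textbf{Step 2: killing the mixed-conjugate tensors with $k+1$ phase copies.} For complex $\rho_*$, I replace each neuron from Step 1 by $k+1$ copies indexed by $m=0,\dots,k$. In copy $m$, every $S_h^i$ is multiplied by $e^{\mathfrak{i}\varphi_m}$ and $s_w^i$ by $e^{-\mathfrak{i}k\varphi_m}$, with $\varphi_m=\pi m/(k+1)$, and the output weight is divided by $k+1$.

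\begin{itemize}
\item The full-product tensor in \eqref{eq:third_cond} picks up the phase $e^{\mathfrak{i}k\varphi_m}e^{-\mathfrak{i}k\varphi_m}=1$, so it is unchanged after averaging.
\item The term indexed by a proper subset $A$ in \eqref{eq:second_cond} picks up $e^{-2\mathfrak{i}(k-|A|)\varphi_m}=e^{-2\pi\mathfrak{i}jm/(k+1)}$ with $j=k-|A|\in\{1,\dots,k\}$. Summing over $m$ gives zero because $k+1\nmid j$.
\end{itemize}
This gives part 1 with $(k+1)n^{k+1}$ neurons.

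\textbf{Step 3: from sigma-pi-sigma units to MLP neurons (part 2).} Each unit $(u_1,\dots,u_k,w)$ from part 1 is replaced by the $2^k$ neurons $(\varepsilon_1u_1,\dots,\varepsilon_ku_k,\ 2^{-k}(\prod_j\varepsilon_j)w)$ for $\varepsilon\in\{\pm1\}^k$. Sign flips keep each neuron in the aligned form \eqref{eq:formapp}.

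By Lemma \ref{lem:waring}, with $z_j=\langle u_j,x_{g_j}\rangle$, the sum of these neurons equals $w\,k!\prod_jz_j$, which is exactly the original $f^{(\times)}$. Each new neuron's own $(\times)$-part equals $2^{-k}$ times the original one, so the $(\times)$-parts again sum to the original $f^{(\times)}$. Hence the total $f^{(+)}$ vanishes. The products entering \eqref{eq:third_cond} and \eqref{eq:second_cond} are scaled in the same way and sum to their original values, so those conditions still hold. The count is $(k+1)2^kn^{k+1}$.

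The step I expect to be the main obstacle is Step 1: reading off the index structure of \eqref{eq:third_cond} correctly, including the reversed ordering $S_{k-h+1}$ and the exact normalizing constant $c$, so that the $n$-term rank decomposition per chain yields exactly $n^{k+1}$ neurons.
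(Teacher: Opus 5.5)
Your proposal is correct and follows the paper's proof essentially step for step. The three stages match: a matrix-multiplication-tensor construction with $n_{\rho_*}^{k+1}$ units, then $k+1$ phase-shifted copies using $(k+1)$-th roots of unity to cancel the mixed-conjugate tensors in \eqref{eq:second_cond}, then the decomposition of Lemma \ref{lem:waring} to realize each sigma-pi-sigma unit with $2^k$ neurons. Two differences are minor. Your rank-one splitting of $\widehat{x}[\rho_*]$ in Step 1 generalizes the paper's choice $S_1 = E_{\alpha_k,\alpha_0}\widehat{x}[\rho_*]$, which is your construction with standard basis vectors as $a_r$ and rows of $\widehat{x}[\rho_*]$ as $b_r$. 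Skipping the replication when $\rho_*$ is real is cleaner than the paper's parenthetical of multiplying by real parts of the phases, which would rescale \eqref{eq:third_cond}.
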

\begin{proof} 
% We first prove case \textit{1)} and \textit{2)} together, and then the remaining case. 
\textbf{Case 1.} Up to rescaling, say, $s_w^i$, we can ignore the coefficient $  C^{k+1} / ( |G|   n_{\rho_*}^{k} k! )$ in \eqref{eq:third_cond}. For indices $\alpha, \beta$, let $E_{\alpha, \beta}$ be the matrix with a $1$ in the entry $(\alpha, \beta)$, and $0$ elsewhere. Let $N = n_{\rho_*}^{k+1}$. We will think of the index $i$ as a $(k+1)$-uple of indices $(\alpha_0, \ldots, \alpha_{k})$. Let: 
\begin{equation}
\label{eq:constrsss}
\begin{aligned}
    s_w^{\alpha_0, \ldots, \alpha_{k}} &= E_{\alpha_0, \alpha_1}  \\
    S_{k-h +1 }^{\alpha_0, \ldots, \alpha_{k}} &= E_{\alpha_h, \alpha_{h+1}}, \quad \quad h= 1, \ldots, k-1, \\
    S_{1}^{\alpha_0, \ldots, \alpha_{k}}  &= E_{\alpha_{k}, \alpha_0} \widehat{x}[\rho_*].
\end{aligned}
\end{equation}
Put simply, $s_w^i$ and $S_j^i$ correspond to `matrix multiplication tensors'. Note that since we assumed $\widehat{x}[\rho_*]$ to be invertible, the above equations can be solved in terms of $s_j^i$.  This ensures that \eqref{eq:third_cond} holds. 

% We now assume that $\rho_*$ is not real (we will discuss the real case at the end of the proof), 
We now extend this construction to additionally satisfy \eqref{eq:second_cond}. To this end, we set $N = (k+1) n_{\rho_*}^{k+1}$, and replicate the previous construction $k+1$ times. For an index $i$ belonging to the $j$-th copy, with $1 \leq j \leq k+1$, we multiply $S_h^i$ by the unitary scalar $ e^{ \pi \mathfrak{i} j / (k+1)}$, and similarly multiply $s_w^i$ by $ e^{- \pi \mathfrak{i} j k / (k+1)} / (k+1)$. (When $\rho_*$ is real, we multiply by the real parts of these expressions, since in that case $s_i^j$ and $s_w$ are constrained to be real matrices.) 
% By expanding $f(x_\mathbf{g}; \Theta_{\mathcal{A}_t})^{(+)}$ as in the proof of Theorem \ref{thm:maincostmin}, we obtain a sum of expressions similar to the left-hand side of \eqref{eq:second_cond}, but where arbitrary tensor monomials (of degree $\leq k$) in $S_h^i$ and its conjugate appear. Given such an expression, we denote by $m_h$ the exponent of $S_h^i$, which we set negative if it appears as conjugate. Each of these  expressions gets rescaled by: 
Then each expression \eqref{eq:second_cond} gets rescaled by: 
\begin{equation}
\label{eq:weird_exp}
\frac{1}{k+1}  \sum_{j = 1}^{k+1} e^{- \frac{2 \pi \mathfrak{i} j}{k+1} \left( k - |A|  \right) }. 
\end{equation}
Since $A$ is a proper subset of $\{1, \ldots, k\}$, we have $ 0 <  k - |A|  \leq k $, and thus $k - |A| \not= 0 \pmod{k+1}$. This implies that \eqref{eq:weird_exp} vanishes, as desired.

\textbf{Case 2.}  Lemma \ref{lem:waring} immediately implies that $2^{k}$ neurons can implement a sigma-pi-sigma neuron. From Case 1, we know that $(k+1)n_{\rho_*}^{k+1}$ sigma-pi-sigma neurons can solve cost minimization, which immediately implies Case 2.  
\end{proof}
From the decomposition of the loss \eqref{eq:lossdec} it follows that, when the number $N$ of newly-activated neurons is large enough, \cref{lemm:combinat} describes all the global minimizers of the loss (in the space of $\rho_*$-aligned neurons $\Theta_{\mathcal{A}_t}$). Finally, we describe the function learned by such minimizing neurons, completing the proof by induction.  
\begin{lemma}
    \label{lemma:new-funct}
    Suppose that $N \geq (k+1)2^{k} n_{\rho_*}^{k+1}$ and that $\Theta_{\mathcal{A}_t}$ minimizes the loss. Then for $\mathbf{g}, p \in G^{k+1}$: 
\begin{equation}
   f(x_\mathbf{g}; \Theta_{\mathcal{A}_t})[p] =  \frac{C}{|G|} \ \rel   \left\langle \rho_*(g_1 \cdots g_k p)^\dagger , \widehat{x}[\rho_*] \right\rangle_{\rho_*}.
\end{equation}
\end{lemma}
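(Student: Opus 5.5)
Since $\Theta_{\mathcal{A}_t}$ minimizes the loss and $N \geq (k+1)2^{k} n_{\rho_*}^{k+1}$, the construction of \cref{lemm:combinat} shows the lower bound of \cref{thm:maincostmin} is attained with $f(x_\mathbf{g}; \Theta_{\mathcal{A}_t})^{(+)} = 0$. By the decomposition \eqref{eq:lossdec}, any minimizer must also have $\sum_\mathbf{g}\|f^{(+)}\|^2 = 0$ and $\mathcal{L}(\Theta_{\mathcal{A}_t})^{(\times)}$ equal to its lower bound. So the plan is: (i) deduce $f^{(+)}=0$, so $f = f^{(\times)}$; (ii) use the equality characterization of \cref{thm:maincostmin}, namely \eqref{eq:third_cond} together with \eqref{eq:second_cond} when $\rho_*$ is not real, to evaluate $f^{(\times)}$ explicitly.

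For step (ii), I write each neuron's output coordinate-wise. By \eqref{eq:mainsimp}, $\langle u_h^i, x_{g_h}\rangle = \rel\langle \rho_*(g_h)^\dagger, S_h^i\rangle_{\rho_*}$ and $w^i[p] = \rel\langle \rho_*(p)^\dagger, s_w^i\rangle_{\rho_*}$. Then I expand the product of $k+1$ real parts via $\rel a = (a+\bar a)/2$. This yields $2^{-(k+1)}$ times a sum over choices of conjugation pattern $A$ of terms of the form $\sum_i s_w^i\otimes\bigotimes_{h\in A}S_h^i\otimes\bigotimes_{h\notin A}\overline{S_h^i}$, contracted against the tensor products of $\rho_*(g_h)$ and $\rho_*(p)$ (or their conjugates). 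Vanishing of the mixed patterns is guaranteed by \eqref{eq:second_cond}, together with its conjugate obtained by flipping the $w$ factor. Only the fully unconjugated pattern and its conjugate survive, giving $\frac{2}{2^{k+1}}\rel$ of one term, or the single term when $\rho_*$ is real and $C=1$.

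In the surviving term, I substitute \eqref{eq:third_cond}. The tensor $\sum_i s_w^i[\alpha_0,\beta_0]\prod_h S^i_{k-h+1}[\alpha_h,\beta_h]$ is supported on the chained indices $\beta_h=\alpha_{h+1}$ and is proportional to $\widehat{x}[\rho_*][\alpha_0,\beta_k]$. Contracting it against the matrix entries of $\rho_*(p)$ and $\rho_*(g_k),\dots,\rho_*(g_1)$ therefore collapses the index sums into matrix products. Using the homomorphism property, this produces a single trace $n_{\rho_*}\mathrm{Tr}(\rho_*(g_1\cdots g_k p)\,\widehat{x}[\rho_*])$ up to constants, which is $\langle\rho_*(g_1\cdots g_kp)^\dagger,\widehat{x}[\rho_*]\rangle_{\rho_*}$.

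The main obstacle is the bookkeeping in this contraction. It requires matching the reversed ordering $S_{k-h+1}$ and the index chain with the order $g_1\cdots g_k p$ inside the trace, and checking the transposes and daggers from the definition of $\langle\cdot,\cdot\rangle_{\rho_*}$. The constants must also come out right: the factor $k!$, the $n_{\rho_*}$ powers from each $\langle\cdot,\cdot\rangle_{\rho_*}$, the $C^{k+1}/(|G|n_{\rho_*}^k k!)$ from \eqref{eq:third_cond}, and the $2^{-k}$ from the real-part expansion should combine to exactly $C/|G|$. As a sanity check on the constant, I would compare against the expected answer from Plancherel. The $\rho_*\oplus\overline{\rho_*}$ component of $x_{g_1\cdots g_k}$ is $\frac{1}{|G|}\sum_{\rho\in\{\rho_*,\overline{\rho_*}\}}\langle\rho(g_1\cdots g_kp)^\dagger,\widehat x[\rho]\rangle_\rho$, which equals $\frac{C}{|G|}\rel\langle\rho_*(g_1\cdots g_kp)^\dagger,\widehat x[\rho_*]\rangle_{\rho_*}$. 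This is consistent with the loss value $\frac12(\|x\|^2-\frac{C}{|G|}\|\widehat x[\rho_*]\|^2_{\rho_*})$ of \cref{thrm:mlp-main-cost-min}, which is exactly what removing this projection from the residual yields.
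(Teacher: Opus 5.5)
Your proposal is correct and follows the paper's route. You get $f = f^{(\times)}$ from the loss decomposition \eqref{eq:lossdec} together with the attainability in \cref{lemm:combinat}, expand the product of real parts, discard the mixed conjugation patterns via \eqref{eq:second_cond} (and its conjugate), and collapse the chained indices of \eqref{eq:third_cond} into $n_{\rho_*}\mathrm{Tr}(\rho_*(g_1\cdots g_k p)\,\widehat{x}[\rho_*])$. Your bookkeeping is more explicit than the paper's ``computations similar to'' step, and the constants close. The factor $k!\cdot 2^{-k}\cdot n_{\rho_*}^{k+1}\cdot C^{k+1}/(|G|\,n_{\rho_*}^{k}\,k!)$ applied to $\rel\,\mathrm{Tr}(\cdot)$ gives exactly $\frac{C}{|G|}\rel\langle\rho_*(g_1\cdots g_k p)^\dagger,\widehat{x}[\rho_*]\rangle_{\rho_*}$ when $C=2$, and the same count without the $2^{-k}$ handles the real case.
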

\begin{proof}
From the previous results, we know that: 
\begin{equation}
\begin{aligned}
 f(x_\mathbf{g}; \Theta_{\mathcal{A}_t})[p] = f(x_\mathbf{g}; \Theta_{\mathcal{A}_t})^{(\times)}[p]   &= k! \sum_{i=1}^N \rel \ \left\langle \rho_*(p)^\dagger, s_w^i \right\rangle_{\rho_*} \prod_{h=1}^k \rel \left\langle \rho_*(g_h)^\dagger, S_h^i \right\rangle_{\rho_*}.
\end{aligned}
 \end{equation}
Via computations similar to the proof of Theorem \ref{thm:maincostmin}, and by using \eqref{eq:third_cond} and \eqref{eq:second_cond}, we deduce that the above expression equals: 
\begin{equation}
\begin{aligned}
    &\frac{C}{|G|} \sum_{\alpha_0, \ldots , \alpha_{k+1}}  \rel  \left(\widehat{x}[\rho_*][\alpha_0,\alpha_{k+1}] \   \rho_*(p)[\alpha_0, \alpha_1] \prod_{h=1}^k \rho_*(g_{k-h+1})[\alpha_h, \alpha_{h+1}]  \right) \\
     =& \frac{C}{|G|} \rel   \left\langle \rho_*(g_1 \cdots g_k p)^\dagger , \widehat{x}[\rho_*] \right\rangle_{\rho_*}.  
\end{aligned}
 \end{equation}
\end{proof}

\newpage 

\subsection{Example: Cyclic Groups}
\label{sec:modular-addition}
To build intuition around the results from the previous sections, here we specialize the discussion to the cyclic group. Let $G =C_p = \mathbb{Z}/p\mathbb{Z}$ for some positive integer $p$. In this case, the group composition task amounts to modular addition. For $k=2$, this task has long served as a testbed for understanding learning dynamics and feature emergence in neural networks \citep{power2022grokking,nanda2023progress,gromov2023grokking,morwanifeature}.

As mentioned in \cref{sec:background}, the irreps of $C_p$ are one-dimensional, i.e. $n_\rho = 1$ for all $\rho \in \irr{G}$, and take form $\rho_k(g) = e^{2\pi \mathfrak{i} g k / p}$ for $k \in \{0,\dots,p-1\}$. The resulting Fourier transform is the classical DFT. For simplicity, we assume that $p$ is odd. This will avoid dealing with the Nyquist frequency $k = p/2$, for which the following expressions are similar, but less concise. 

% 
% We revisit their analysis here, reformulating it as an inductive argument over the cyclic group $C_p$, which will form the foundation for our generalization to longer sequences and non-Abelian groups.

% lets first consider the much simpler setting of binary composition ($k = 2$) on the cyclic group ($G = C_p$), where the network learns to map a pair of integers modulo $p$ to their sum modulo $p$.
% 

% Before stating the inductive hypothesis, recall that the irreps of the cyclic group $C_p$ are the one-dimensional complex exponentials $\rho_k(g) = e^{2\pi i gk / p}$ for $k \in \{0, \dots, p-1\}$.
% % 
% Because the encoding vector $x \in \mathbb{R}^G$ is real, its Fourier coefficients satisfy the conjugate symmetry $\widehat{x}[k] = \overline{\widehat{x}[-k]}$, so the relevant basis functions are real cosine modes corresponding to frequency pairs $\pm k$.

In this case, the function learned by the network after $t-1$ iterations of AGF (cf. \eqref{eq:initialform-app}) takes form:  
\begin{equation}     
    f(x_{\mathbf{g}}; \Theta_\mathcal{A})[h] =\frac{1}{p} \sum_{\rho_k \in \mathcal{I}^{t-1}}    |\widehat{x}[\rho_k]|  \  \cos\left(2\pi \frac{k}{p}(g_1 + \cdots + g_k + h)  + \lambda_k \right), 
\end{equation}
where $\lambda_k$ is the phase of $\widehat{x}[\rho_k] = |\widehat{x}[\rho_k]| \ e^{\mathfrak{i} \lambda_k}$. After utility maximization, each neuron will take the form of a discrete cosine wave (cf. \eqref{eq:utilitymaxshifts_new}):
\begin{equation}
 \begin{aligned}
        u_j^i[g] &=  A_{i,j} \cos\left(2 \pi \frac{k_*}{p} g + \lambda_{i,j}\right), \\
        w^i[g] &=   A_{i,w} \cos\left(2 \pi \frac{k_*}{p} g + \lambda_{i,w}\right), 
    \end{aligned}
\end{equation}
where $A_{i,j}$, $A_{i,w}$ are some amplitudes, and $\lambda_{i,j}$, $\lambda_{i,w}$ are some phases, that are optimized during the cost minimization phase. 

For $k=2$, the results in the previous sections were obtained in this form, for cyclic groups, by \citet{kunin2025alternating}. Our results therefore extend theirs to arbitrary groups and to arbitrary sequence lengths $k$.

% C - Assumptions
\section{Discussion on Assumptions}
\label{app:assumption}

In this appendix, we discuss two assumptions underlying our analysis in \cref{sec:two-layer-mlp}: the alignment assumption used in the cost-minimization phase, and the use of polynomial activation functions.
For each assumption, we provide additional evidence beyond the main text, combining empirical diagnostics with theoretical arguments where possible.

\subsection{Alignment During Cost Minimization (\cref{ass:aligned})} 
\label{app:assumption-4.2}

Assumption~\ref{ass:aligned} states that neurons activated during utility maximization remain aligned with the selected irrep $\rho_*$ during the subsequent cost-minimization phase.
This assumption is needed to stitch together the selection rule from utility maximization into a full irrep-by-irrep learning trajectory.
Here, we provide both empirical and theoretical evidence that this behavior is natural in the regime studied in the main text.

First, we empirically track the fraction of each neuron's Fourier energy contained in its dominant irrep throughout training; see \cref{fig:assumption}.
This metric provides a direct diagnostic of whether individual neurons remain spectrally concentrated during cost minimization, rather than only measuring macroscopic quantities such as the loss.
As the initialization scale decreases, the behavior becomes increasingly stepwise: neurons rapidly concentrate on a single dominant irrep, remain approximately concentrated during the subsequent cost-minimization phase, and then new neurons activate at later stages.
This supports the view that Assumption~\ref{ass:aligned} holds approximately in practice, especially in the small-initialization regime where the AGF approximation is most accurate.

\begin{figure}[H]
    \centering
    \includegraphics[width=\linewidth]{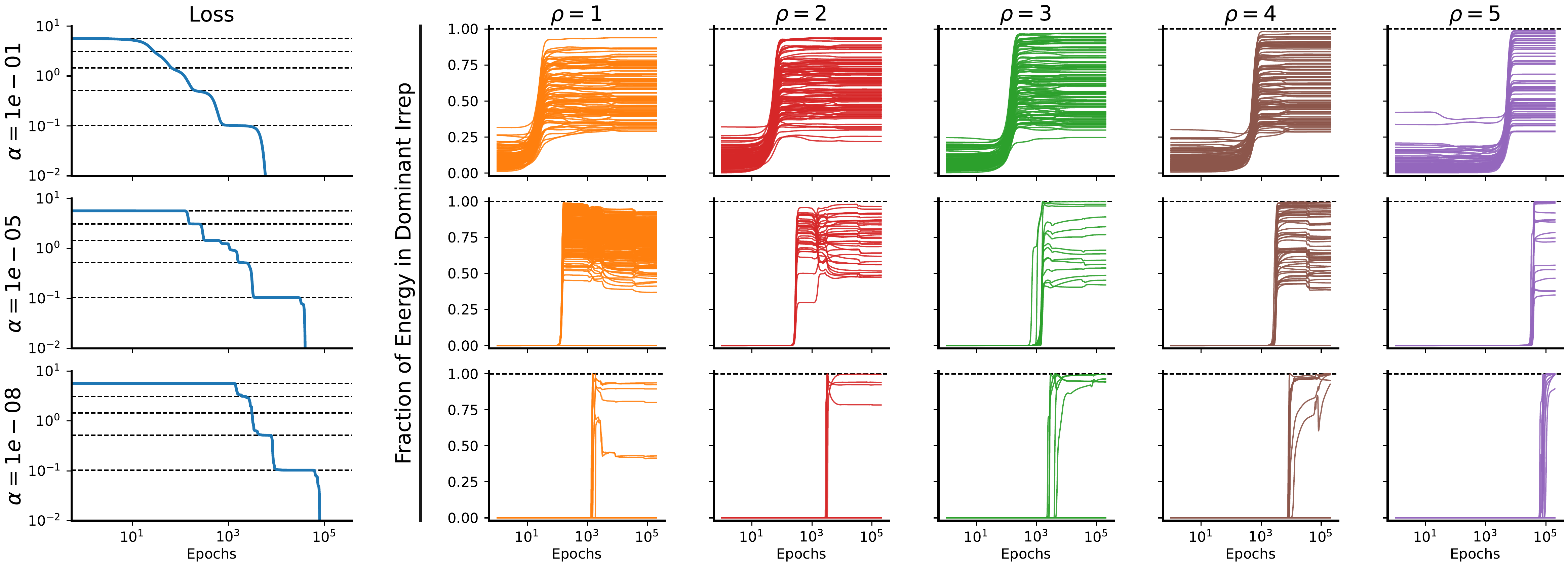}
    \caption{
    \textbf{Empirical evidence for alignment during cost minimization.}
    We track the fraction of each neuron's Fourier energy contained in its dominant irrep throughout training.
    As the initialization scale decreases (rows from top to bottom), neurons become increasingly spectrally concentrated: they rapidly align with a single irrep when activated and remain approximately aligned during the subsequent cost-minimization phase.
    This provides empirical evidence that Assumption~\ref{ass:aligned} holds approximately in the small-initialization regime studied in our theory.
    }
    \label{fig:assumption}
\end{figure}

Second, we provide theoretical evidence for the assumption by establishing it for sigma-pi-sigma networks, which are central in our theory. More specifically, in the notation of Section \ref{sec:app_costmin}, we show that the derivatives of the loss $\mathcal{L}(\Theta_{\mathcal{A}_t})^{(\times)}$ of the sigma-pi-sigma term in directions aligned to irreps other than $\rho_*$ (and its conjugate) vanish.
\begin{lemma}
Consider a vector $v \in \mathbb{R}^{|G|}$ of the form $v[g] = \rel \ \left\langle \rho(g)^\dagger, s_v \right\rangle_{\rho}$ for some $\rho \not = \rho_*, \overline{\rho_*}$ and $s_v \in \mathbb{C}^{n_{\rho} \times n_{\rho}}$. Then: 
\begin{equation}
  \frac{\partial \mathcal{L}^{(\times)}}{\partial v} (\Theta_{\mathcal{A}_t})  = 0. 
\end{equation}
\end{lemma}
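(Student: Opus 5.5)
We will interpret $\partial \mathcal{L}^{(\times)}/\partial v$ as the directional derivative of $\mathcal{L}(\Theta_{\mathcal{A}_t})^{(\times)}$ when a single weight vector of a single neuron is perturbed along $v$. Concretely, we take one of $u_j^i$ ($1\le j\le k$) or $w^i$ and replace it by $u_j^i+\epsilon v$ or $w^i+\epsilon v$. All other weights stay $\rho_*$-aligned, in the form \eqref{eq:formapp}.

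We expand
\begin{equation*}
\mathcal{L}^{(\times)} = \frac{1}{2|G|^k}\sum_{\mathbf{g}}\|x_{g_1\cdots g_k}\|^2 - \frac{1}{|G|^k}\sum_{\mathbf{g}}\langle x_{g_1\cdots g_k}, f^{(\times)}\rangle + \frac{1}{2|G|^k}\sum_{\mathbf{g}}\|f^{(\times)}\|^2 .
\end{equation*}
The derivative at $\epsilon=0$ splits into a linear (utility) part and a quadratic part. In each part exactly one factor is replaced by the $v$-version.

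\textbf{Case 1: the perturbed weight is $w^i$.} The linear part is $\frac{k!}{|G|^k}\sum_{\mathbf{g}}\langle v, x_{g_1\cdots g_k}\rangle\prod_h\langle u_h^i,x_{g_h}\rangle$. By Lemma~\ref{lemm:convform} this equals a sum over $\rho'\in\irr{G}$ of $\langle \widehat{u_1^i}[\rho']^\dagger\widehat{x}[\rho']\cdots,\widehat{v}[\rho']^\dagger\widehat{x}[\rho']\rangle_{\rho'}$. The coefficient $\widehat{u_h^i}[\rho']$ is nonzero only for $\rho'\in\{\rho_*,\overline{\rho_*}\}$, while $\widehat{v}[\rho']$ is nonzero only for $\rho'\in\{\rho,\overline{\rho}\}$. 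These sets are disjoint, so every summand vanishes.

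The quadratic part is proportional to $\sum_{j}\langle v, w^j\rangle\prod_h\sum_g\langle u_h^i,x_g\rangle\langle u_h^j,x_g\rangle$. By Plancherel, $\langle v,w^j\rangle=0$, since the two vectors have Fourier support on disjoint irrep sets. Equivalently, we can apply Schur orthogonality \eqref{eq:schur_ortho} to $\sum_p v[p]w^j[p]$, because $\rho\neq\rho_*,\overline{\rho_*}$.

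\textbf{Case 2: the perturbed weight is $u_j^i$.} By \eqref{eq:mainsimp} applied with $\rho$ in place of $\rho_*$, we have $\langle v, x_g\rangle = \rel\langle\rho(g)^\dagger, s_v^\dagger\widehat{x}[\rho]\rangle_\rho$ up to constants. This is a function of $g$ with Fourier support on $\{\rho,\overline\rho\}$.

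For the quadratic part we sum over $g_j$ first. The $j$-th factor becomes $\sum_{g}\langle v,x_g\rangle\langle u_j^{i'},x_g\rangle$, which is a product of a $\rho$-aligned and a $\rho_*$-aligned function. This sum vanishes by Schur orthogonality \eqref{eq:schur_ortho}.

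For the linear part we use Lemma~\ref{lemm:convform} again with $v$ in slot $j$. Each summand indexed by $\rho'$ contains both $\widehat{v}[\rho']$ and $\widehat{w^i}[\rho']$, and no $\rho'$ makes both nonzero. Hence the linear part vanishes as well.

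The main subtlety lies in handling real parts correctly and treating $\rho$ and $\overline{\rho}$ (and $\rho_*,\overline{\rho_*}$) together. Schur orthogonality pairs $\rho_1$ with $\overline{\rho_2}$, so we must check that $\rho\neq\rho_*,\overline{\rho_*}$ implies $\overline\rho\neq\rho_*,\overline{\rho_*}$ as well. We handle this by writing $\rel z=\tfrac12(z+\bar z)$ and checking all four pairings. The remaining work is bookkeeping; no inequality is needed.
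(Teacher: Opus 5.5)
Your proposal is correct and follows essentially the same route as the paper. Both compute the directional derivative along $v$ and kill every resulting term by Schur orthogonality/Plancherel, using that $\{\rho,\overline{\rho}\}$ and $\{\rho_*,\overline{\rho_*}\}$ are disjoint. The difference is organizational: you differentiate the multilinear $f^{(\times)}$ directly, split the result into the utility and quadratic parts, and treat the $w^i$ case explicitly, whereas the paper writes the gradient with a $\sigma'$ factor and first discards the non-full monomials before applying Schur orthogonality.
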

\begin{proof}
A direct computation yields: 
\begin{equation}
\left \langle v, \frac{\partial \mathcal{L}^{(\times)}}{\partial u_j^i} (\Theta_{\mathcal{A}_t}) \right \rangle =   \frac{1}{|G|^k} \sum_{\mathbf{g} \in G^k}  \sigma' \left(\sum_{h=1}^k \left\langle u_h^i, x_{g_h} \right\rangle \right) \left\langle v, x_{g_j}  \right\rangle  \left\langle w^i, f(x_\mathbf{g}; \Theta_{\mathcal{A}_t})^{(\times)} - x_{g_1 \cdots g_k} \right\rangle. 
\end{equation}
Note that $\sigma'$ is a polynomial of degree $k-1$. Thus, by expanding $\sigma' ((\sum_{h=1}^k \left\langle u_h^i, x_{g_h} \right\rangle ) \left\langle v, x_{g_j}  \right\rangle$ similarly to the proof of Lemma \ref{lemma:simplified-utility-in-freq}, the only monomial where all the group elements $g_1, \ldots, g_k$ appear is $\left\langle u_h^i, x_{g_1} \right\rangle  \cdots \left\langle u_h^i, x_{g_k} \right\rangle  \left\langle v, x_{g_j}  \right\rangle$. Any other monomial, when multiplied by $ \langle w^i, f(x_\mathbf{g}; \Theta_{\mathcal{A}_t})^{(\times)}\rangle$ or by $ \langle w^i,  x_{g_1 \cdots g_k} \rangle$, will vanish. But by Schur orthogonality \eqref{eq:schur_ortho}, the surviving monomial also vanishes once multiplied, since $\rho \not = \rho_*, \overline{\rho_*}$. A similar argument holds for the derivative with respect to $w^i$, concluding the proof. 
\end{proof}
As a consequence, if we considered the dynamics for the sigma-pi-sigma network alone---or, equivalently, if $f(x_\mathbf{g}; \Theta_{\mathcal{A}_t})^{(+)}$ vanished throughout training---then Assumption \ref{ass:aligned} would hold. However, this is not the case, in general. Yet, the above result still shows, at least, that the sigma-pi-sigma loss component $\mathcal{L}^{(\times)}$ does not contribute to steering the network away from the $\rho_*$-aligned subspace. Due to the loss decomposition \eqref{eq:lossdec}, the only component contributing to this is the squared norm of $f(x_\mathbf{g}; \Theta_{\mathcal{A}_t})^{(+)}$. 

% The above result has two consequences. First, under the assumption $f(x_\mathbf{g}; \Theta_{\mathcal{A}_t})^{(+)} = 0$, the newly-activated neurons $\mathcal{A}_t$ stay aligned to $\rho_*$ when evolving under the gradient flow, since derivatives in the direction of any other irrep vanish. 

\subsection{Polynomial Activation Functions}
\label{app:assumption-nonlinearity}

Our theoretical analysis uses polynomial activation functions because they expose the $\Sigma$-$\Pi$ term that drives our analysis of utility maximization and cost minimization.
This assumption is less restrictive than it may seem: many smooth activation functions are well approximated near the origin by polynomial Taylor truncations.
We therefore expect the mechanism described in the main text to persist whenever the activation has a non-negligible degree-$k$ component and the irrep gap is large enough relative to the approximation error.

To test this, we train networks with several standard activations on the cyclic group task; see \cref{fig:nonlinearity}.
The results are broadly consistent with the theory.
Linear networks fail to learn, as predicted by \cref{lemm:lin_imp}.
ReLU, Tanh for $k=3$, and GeLU for $k=2$ display learning curves similar to the polynomial setting, including a comparable order of irrep learning.
By contrast, Tanh for $k=2$ and GeLU for $k=3$ behave differently, appearing to learn several irreps more simultaneously.
We do not view this as a contradiction, but rather as evidence that these activations fall outside the specific mechanism analyzed here: their Taylor expansions at zero lack the relevant degree-$k$ monomial.
In these cases, optimization must rely on a different strategy, which we leave as an interesting direction for future work.

\begin{figure}[H]
    \centering
    \includegraphics[width=\linewidth]{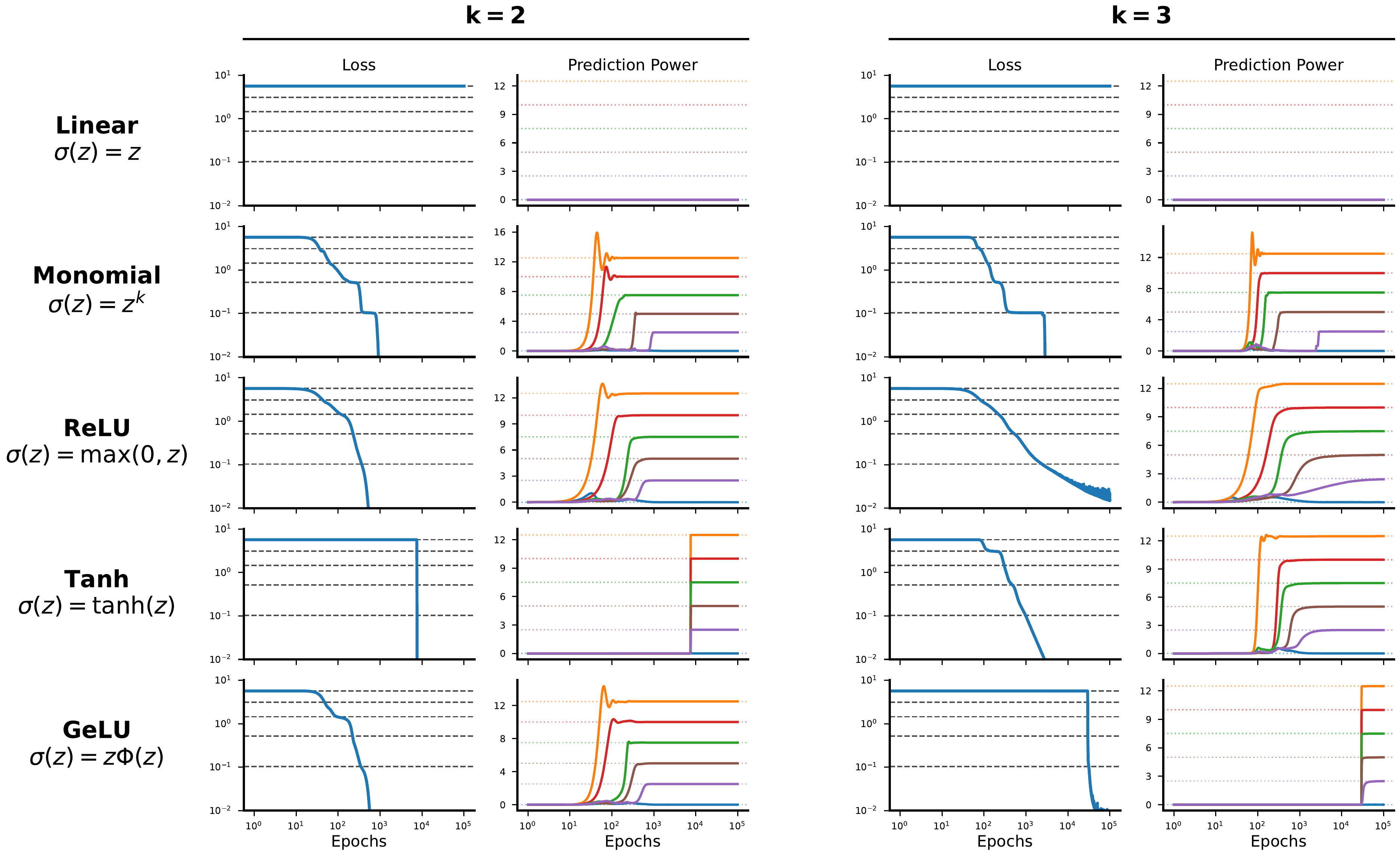}
    \caption{
    \textbf{Effect of the activation function.}
    We train networks on the cyclic group composition task using several standard activation functions.
    Linear networks fail to learn, while ReLU, Tanh for $k=3$, and GeLU for $k=2$ exhibit learning dynamics similar to the polynomial setting.
    Tanh for $k=2$ and GeLU for $k=3$ behave differently, consistent with the fact that their Taylor expansions at zero lack the relevant degree-$k$ monomial driving the mechanism analyzed in \cref{sec:two-layer-mlp}.
    }
    \label{fig:nonlinearity}
\end{figure}

% D - Experimental Details
\section{Experimental Details}

Below we provide experimental details for the figures in this work.
Code used to produce these figures is publicly available at \href{https://github.com/geometric-intelligence/sequential-group-composition}{github.com/geometric-intelligence/sequential-group-composition}.

\subsection{Constructing a Datasets for Sequential Group Composition}
\label{app:data}

We provide a concrete walkthrough of how we construct the datasets used in our experiments, specifically the experiments used to produce \cref{fig:binary-composition}.

\begin{enumerate}
\item \textbf{Fix a group and an ordering.}  
Let $G = \{g_1, \ldots, g_{|G|}\}$ be a finite group with a fixed ordering of its elements. This ordering defines the coordinate system of $\mathbb{R}^{|G|}$ and the indexing of all matrices below; any other choice yields an equivalent dataset up to a global permutation of coordinates.

\item \textbf{Regular representation.}  
For each $g \in G$, define its left regular representation $\lambda(g) \in \mathbb{R}^{|G| \times |G|}$ by
$\lambda(g)e_h = e_{gh}$ for all $h \in G$, where $\{e_h\}$ is the standard basis of $\mathbb{R}^{|G|}$. Equivalently, $\lambda(g)_{i,j} = 1$ if $g g_j = g_i$ and $0$ otherwise. These matrices implement group multiplication as coordinate permutations.

\item \textbf{Choose an encoding template.}  
Fix a base vector $x \in \mathbb{R}^{|G|}$ satisfying the mean-centering condition $\langle x, \mathbf{1} \rangle = 0$, which removes the trivial irrep component. In many experiments, we construct $x$ in the group Fourier domain by specifying matrix-valued coefficients $\widehat{x}[\rho] \in \mathbb{C}^{n_\rho \times n_\rho}$ for each $\rho \in \irr{G}$ and applying the inverse group Fourier transform $x = F \widehat{x}$.

For higher-dimensional irreps ($n_\rho > 1$), we typically use scalar multiples of the identity, $\widehat{x}[\rho] = \alpha_\rho I$, which are full-rank and empirically yield stable learning dynamics. To induce clear sequential feature acquisition, we choose the diagonal values $\alpha_\rho$ using the following heuristics:
\begin{itemize}\setlength\itemsep{2pt}
    \item \textbf{Separated powers.}  
    Irreps with similar power tend to be learned simultaneously; spacing their magnitudes produces distinct plateaus.
    \item \textbf{Low-dimensional dominance.}  
    Clean staircases emerge more reliably when lower-dimensional irreps have substantially larger power than higher-dimensional ones. This is related to the dimensional bias we verrify in \cref{app:empirics-verification}.
    \item \textbf{Avoid vanishing modes.}  
    Coefficients that are too small may not be learned and fail to produce a plateau.
\end{itemize}
\item \textbf{Generate inputs and targets.}  
The encoding of each group element is given by its orbit under the regular representation, $x_g := \lambda(g)x$. For a sequence $\mathbf{g} = (g_1, \ldots, g_k)$, the network input is the concatenation $x_{\mathbf{g}} = (x_{g_1}, \ldots, x_{g_k}) \in \mathbb{R}^{k|G|}$ and the target is $y_{\mathbf{g}} = x_{g_1 \cdots g_k} \in \mathbb{R}^{|G|}$. The full dataset consists of all $|G|^k$ pairs $(x_{g_1}, \ldots, x_{g_k}) \mapsto x_{g_1 \cdots g_k}$ for $(g_1, \ldots, g_k) \in G^k$.
\end{enumerate}

\subsection{Empirical Verification of Irrep Acquisition}
\label{app:empirics-verification}

We now empirically test the theoretical ordering predicted by \cref{eq:util-max-freq-score-main} by constructing controlled encodings in which the score of each irrep can be independently tuned. This allows us to directly observe how the predicted bias toward lower-dimensional representations emerges and strengthens with sequence length.

We consider the sequential group composition task with the Dihedral group $D_3$ and a mean-centered one-hot encoding for $k = 2, 3, 4, 5$. For $k=2$, we use a learning rate of $5.0 \times 10^{-5}$ and an initialization scale of $2.00 \times 10^{-7}$. As $k$ increases to 3, 4, and 5, the learning rate is held constant at $1.0 \times 10^{-4}$ while the initialization scale is increased from $5.0 \times 10^{-5}$ to $5.0 \times 10^{-4}$ and finally $2.0 \times 10^{-3}$.
As we can see in the following experiment shown in Figure~\ref{fig:empirics-verification}, the time between learning the one-dimensional sign irrep (brown) and the two-dimensional rotation irrep (blue) increases as the sequence length $k$ gets larger, confirming our prediction based on the theory.

\begin{figure}[H]
    \centering
    \includegraphics[width=\linewidth]{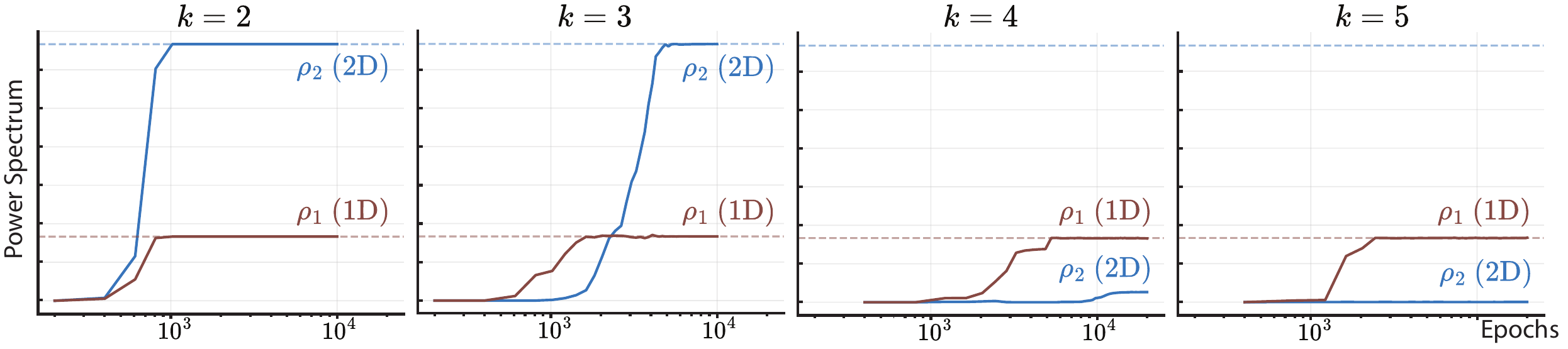}
    \caption{\textbf{Verifying dimensional bias in $D_3$.} Power Spectrum components $\rho_1$ (1D) and $\rho_2$ (2D) during training across sequence lengths $k= 2, 3, 4, 5$ for the group $D_3$. The bias towards learning low-dimensional irreps first increases with $k$.}
    \label{fig:empirics-verification}
\end{figure}

\subsection{Scaling Experiments: Hidden Dimension, Group Size, and Sequence Length}
\label{app:empirics-scaling}

\Cref{fig:mlp-complexity} is generated by training a large suite of two-layer networks on sequential group composition for cyclic groups \(G = C_p\).
Across all experiments we use a mean-centered one-hot encoding and consider sequence lengths \(k = 2, 3, 4, 5, 6\).
For each value of \(k\), we perform a grid sweep over both the group size and the hidden dimension.
Specifically, we vary the group size as \(|G| = 2, 3, 4, \ldots, 22\) (21 values) and the hidden dimension as \(H = 18, 36, 54, \ldots, 378\) (21 values), yielding a total of 2205 trained models.

\paragraph{Normalized loss.}
Because the initial mean-squared error scales inversely with the group size, we report performance using a normalized loss.
For a mean-centered one-hot target, the squared target norm is approximately constant, while the MSE averages over \(|G|\) output coordinates, giving an initial loss \(\mathcal{L}_{\mathrm{init}} \approx 1/|G|\).
We therefore define the normalized loss as
\begin{equation}
    \mathcal{L}_{\mathrm{norm}} = \frac{\mathcal{L}_{\mathrm{final}}}{\mathcal{L}_{\mathrm{init}}},
\end{equation}
which allows results to be compared directly across different group sizes.

\paragraph{Training setup.}
All models are trained online, sampling fresh sequences at each optimization step.
We use the Adam optimizer with learning rate \(10^{-3}\), \(\beta_1 = 0.9\), and \(\beta_2 = 0.999\), and a batch size of 1{,}000 samples per step.
Gradients are clipped at a norm of \(0.1\) for stability.
Weights are initialized as
\begin{equation}
    W_{\mathrm{in}} \sim \mathcal{N}\!\left(0,\, \frac{\sigma_k^2}{k\,|G|}\right),
    \qquad
    W_{\mathrm{out}} \sim \mathcal{N}\!\left(0,\, \frac{\sigma_k^2}{H}\right),
\end{equation}
where the initialization scale is adjusted per sequence length as
\(\sigma_k = \sigma_{\mathrm{base}}^{\,3/(k+1)}\) with \(\sigma_{\mathrm{base}} = 0.01\).
This gives \(\sigma_2 = 0.01\), \(\sigma_3 \approx 0.032\), \(\sigma_4 \approx 0.063\), and \(\sigma_5 = 0.1\).
Training is stopped early once a \(99.9\%\) reduction in loss is achieved, i.e., when \(\mathcal{L}_{\mathrm{final}} < 10^{-3}\,\mathcal{L}_{\mathrm{init}}\), or after a maximum of \(10^6\) optimization steps.

\paragraph{Theory boundary.}
To interpret the empirical phase diagrams, we overlay the sufficient-width threshold predicted by our construction in \cref{eq:width_condition}.
In the cyclic case $G=C_n$, this expression simplifies for three reasons.
First, all irreps of $C_n$ are one-dimensional, so $n_\rho=1$.
Second, $\mathcal{J}(G)$ indexes non-trivial irreps modulo conjugation, giving $|\mathcal{J}(C_n)|=\lfloor |G|/2\rfloor$.
Third, for the monomial activation $\sigma(z)=z^k$, the Waring decomposition in \cref{eq:waring-full} has only $2^{k-1}$ distinct terms, since the terms indexed by $\varepsilon$ and $-\varepsilon$ coincide.
Thus, for the cyclic experiments in \cref{fig:mlp-complexity}, the sufficient-width threshold becomes $H \ge (k+1)2^{k-1}\lfloor |G|/2\rfloor$, which is the boundary shown in the phase diagrams.

% {\color{red} 
% \subsection{Training Recurrent Neural Networks for Sequential Group Composition}
% \label{app:empirics-deep}
% }

% \begin{wrapfigure}{r}{0.48\textwidth}
%     \centering
%     \includegraphics[width=0.46\textwidth]{fig/deep/rnn/stepwise.pdf}
%     \caption{
%     \textbf{Stepwise learning dynamics in recurrent networks.}
%     RNNs trained on the group composition task acquire irreducible representations one at a time, consistent with the construction described in \cref{sec:depth} and the mechanism described in \cref{sec:two-layer-mlp}.
%     }
%     \label{fig:rnn-stepwise}
% \end{wrapfigure}

% E - Experimental Details
\section{Technical Remarks on Some Related Work}
\label{app:rel_groups}

As discussed in Section \ref{sec:related-work}, the group composition task has been previously considered in the literature. Here, we summarize the group-theoretical analyses from the literature, and discuss how they relate to our work. 

\citet{gromov2023grokking} first constructed DFT-based solutions for the binary modular addition task (i.e., $k=2$ and $G=C_p$ is the cyclic group) via shallow two-layer networks with quadratic activation ($\sigma(z) = z^2$). Similar solutions were shown to be optimal in terms of margin minimization by \citet{morwanifeature}. The latter work also considered more general finite groups $G$, assuming they only admit real irreps. \citet{stander2023grokking} provided an alternative construction, exploiting cosets of subgroups instead of irreps. The irrep-based and coset-based formalisms have been unified by \citet{wu2025towards}. Interestingly, there is a subtle connection between the latter and our analysis. The fact that in our construction the matrices $s_j^i$ have rank $1$ (see \eqref{eq:constrsss}) is, essentially, equivalent to the main claim by \citet{wu2025towards} that neurons implement so-called $\rho$-sets. Formally, $s_j^i$ corresponds, in their notation, to $\mathbf{a} \otimes \mathbf{b}$---an arbitrary rank-$1$ matrix as $\mathbf{a}$ and $\mathbf{b}$ vary. Lastly, these constructions were extended to the sequential version of the modular addition task ($k >2$) by \citet{li2024fourier}. In this case, our width bound of $H = \mathcal{O}(k2^k)$ improves theirs of $\mathcal{O}(4^k)$ (our exponential factor $2^k$ is, in a sense, optimal---see Remark \ref{rem:waring}).

We remark that our work not only generalizes the above settings to the sequential task for arbitrary finite groups and for (two-layer) networks with arbitrary polynomial activations, but also provides a dynamical analysis of how irreps emerge during training (in the vanishing initialization regime, via the AGF framework). Concurrently to our work, a similar setting has been considered by \citet{he2026mechanism}, who studies dynamics and feature emergence of modular addition, in the same regime and via tools similar to AGF. Their analysis only considers a single phase, i.e., the first learned frequency (the `winning lottery ticket'). Our results can be viewed as an extension to a full multi-phase setting and to arbitrary groups.

\end{document}